\theoremstyle{plain}
\newtheorem{theorem}{Theorem}[section]
\newtheorem{lemma}[theorem]{Lemma}
\newtheorem{corollary}[theorem]{Corollary}
\theoremstyle{definition}
\newtheorem{definition}[theorem]{Definition}
\theoremstyle{remark}
\newtheorem{remark}[theorem]{Remark}
\newcommand*{\N}{{\mathbb{N}}}
\newcommand*{\Z}{{\mathbb{Z}}}
\newcommand*{\R}{{\mathbb{R}}}
\let\eps\epsilon
\DeclareMathOperator*{\pr}{\mathbb{P}}
\DeclareMathOperator*{\ex}{\mathbb{E}}
\DeclareMathOperator*{\var}{Var}
\newcommand{\ind}{\mathbbm{1}}
\DeclarePairedDelimiter{\floor}{\lfloor}{\rfloor}
\DeclarePairedDelimiter{\round}{\lfloor}{\rceil}
\DeclareMathOperator{\sgn}{sign}
\DeclareMathOperator{\thres}{thres}
\DeclareMathOperator{\relu}{ReLU}
\DeclareMathOperator{\App}{App}
\DeclareMathOperator{\bad}{bad}
\DeclareMathOperator{\unif}{Unif}
\let\hat\widehat
\newcommand{\lwr}{\textsf{LWR}}
\newcommand{\lwe}{\textsf{LWE}}
\newcommand{\dv}{\textsf{DV}}
\newcommand{\calN}{\mathcal{N}}
\newcommand{\calC}{\mathcal{C}}
\newcommand{\calA}{\mathcal{A}}
\newcommand{\calB}{\mathcal{B}}
\newcommand{\calX}{\mathcal{X}}
\newcommand{\calY}{\mathcal{Y}}
\DeclareMathOperator{\poly}{poly}
\DeclareMathOperator{\Id}{Id}
\DeclareMathOperator{\negl}{negl}
\title{Hardness of Noise-Free Learning for \\ Two-Hidden-Layer Neural Networks}
\author{Sitan Chen\thanks{\texttt{sitanc@berkeley.edu}. This work was supported in part by NSF Award 2103300.} \\
	UC Berkeley
	\and Aravind Gollakota\thanks{\texttt{aravindg@cs.utexas.edu}. Supported by NSF awards AF-1909204, AF-1717896, and the NSF AI Institute for Foundations of Machine Learning (IFML).} \\ UT Austin
	\and Adam R. Klivans\thanks{\texttt{klivans@cs.utexas.edu}. Supported by NSF awards AF-1909204, AF-1717896, and the NSF AI Institute for Foundations of Machine Learning (IFML).} \\
	UT Austin
	\and Raghu Meka\thanks{\texttt{raghum@cs.ucla.edu}. Supported by NSF CAREER Award CCF-1553605.} \\
	UCLA
}
\date{November 13, 2022}
\DeclarePairedDelimiter{\ceil}{\lceil}{\rceil}
\DeclarePairedDelimiter{\iprod}{\langle}{\rangle}
\DeclarePairedDelimiter{\brc}{\{}{\}}
\newcommand{\cube}[1]{\brc{\pm 1}^{#1}}
\renewcommand{\epsilon}{\varepsilon}
\newcommand{\wt}[1]{\widetilde{#1}}
\newcommand{\liftsym}{\vartriangle} 
\newcommand{\calD}{\mathcal{D}}
\begin{document}
	
	\maketitle
	\begin{abstract}
		We give superpolynomial statistical query (SQ) lower bounds for learning two-hidden-layer ReLU networks with respect to Gaussian inputs in the standard (noise-free) model. No general SQ lower bounds were known for learning ReLU networks of any depth in this setting: previous SQ lower bounds held only for adversarial noise models (agnostic learning) \cite{klivans2014embedding,goel2020statistical,diakonikolas2020near} or restricted models such as correlational SQ \cite{goel2020superpolynomial,diakonikolas2020algorithms}.
		
		Prior work hinted at the impossibility of our result: Vempala and Wilmes \cite{vempala2019gradient} showed that general SQ lower bounds cannot apply to any real-valued family of functions that satisfies a simple non-degeneracy condition. 
		
		To circumvent their result, we refine a lifting procedure due to Daniely and Vardi \cite{daniely2021local} that reduces Boolean PAC learning problems to Gaussian ones.  We show how to extend their technique to other learning models and, in many well-studied cases, obtain a more efficient reduction. As such, we also prove new cryptographic hardness results for PAC learning two-hidden-layer ReLU networks, as well as new lower bounds for learning constant-depth ReLU networks from label queries.
	\end{abstract}
	\section{Introduction}
	
	
	In this paper we extend a central line of research proving representation-independent hardness results for learning classes of neural networks. We will consider arguably the simplest possible setting: given samples $(x_1,y_1),\ldots,(x_n,y_n)$ where for every $i\in[n]$, $x_i$ is sampled independently from some distribution $\calD$ over $\R^d$ and $y_i = f(x_i)$ for an unknown neural network $f: \R^d\to\R$, the goal is to output any function $\hat{f}$ for which $\ex_{x\sim\calD}[(f(x) - \hat{f}(x))^2]$ is small. This model is often referred to as the \emph{realizable} or \emph{noise-free} setting. 
	
	This problem has long been known to be computationally hard for discrete input distributions. For example, if $\calD$ is supported over a discrete domain like the Boolean hypercube, then we have a variety of hardness results based on cryptographic/average-case assumptions \cite{klivans2009cryptographic, daniely2014average, daniely2016complexity, daniely2020hardness, daniely2021local}.
	
	Over the last few years there has been a very active line of research on the complexity of learning with respect to continuous distributions, the most widely studied case being the assumption that $\calD$ is a standard Gaussian in $d$ dimensions. A rich algorithmic toolbox has been developed for the Gaussian setting \cite{janzamin2015beating,zhong2017recovery,brutzkus2017globally,LiY17,Tian17,convotron,ge2018learning,bakshi2019learning,zgu,diakonikolas2020approximation,LiMZ20,diakonikolas2021small,awasthi2021efficient,chen2020learning,song2021cryptographic,vodrahalli2022algorithms}, but all known efficient algorithms can only handle networks with a single hidden layer, that is, functions of the form $f(x) = \sum^k_{i=1} \lambda_i \sigma(\iprod{w_i,x})$.
	This motivates the following well-studied question:
	\begin{equation}
		\text{\emph{Are there fundamental barriers to learning neural networks with} two \emph{hidden layers?}} \label{eq:question}
	\end{equation}
	
	Two distinct lines of research, one using cryptography and one using the statistical query (SQ) model, have made progress towards solving this question.
	
	In the cryptographic setting, \cite{daniely2021local} showed that the existence of a certain class of pseudorandom generators, specifically local pseudorandom generators with polynomial stretch, implies superpolynomial lower bounds for learning ReLU networks with \emph{three} hidden layers.
	
	
	For SQ learning, work of \cite{goel2020superpolynomial} and \cite{diakonikolas2020algorithms} gave the first superpolynomial {\em correlational} SQ (CSQ) lower bounds for learning even one-hidden-layer neural networks. 
	Notably, however, there are strong separations between SQ and CSQ \cite{andoni2014learning,andoni2019attribute,chen2020learning}, and the question of whether a general SQ algorithm exists remained an interesting open problem.
	In fact, Vempala and Wilmes \cite{vempala2019gradient} showed that general SQ lower bounds might be impossible to achieve for learning real-valued neural networks. For any family of networks satisfying a simple non-degeneracy condition (see Section~\ref{sec:discussion}), they gave an algorithm that succeeded using only polynomially many statistical queries.
	As such, the prevailing conventional wisdom was that noise was required in the model to obtain full SQ lower bounds.  
	
	The main contribution of this paper is to answer Question \ref{eq:question} by giving both general SQ lower bounds and cryptographic hardness results (based on the Learning with Rounding or $\lwr$ assumption) for learning ReLU networks with two hidden layers and polynomially bounded weights.\footnote{Note that if the weights were allowed to be arbitrarily large, it is well-known to be trivial to obtain hardness over Gaussian inputs from hardness over Boolean inputs: simply approximate the sign function arbitrarily well and convert all but an arbitrarily small fraction of Gaussian inputs to bitstrings.} We note that our SQ lower bound is the first of its kind for learning ReLU networks of {\em any} depth. We also show how to extend our results to the setting where the learner has label query access to the unknown network.
	
	\begin{table}[h]
	\setlength{\tabcolsep}{0.5em}
	\renewcommand{\arraystretch}{1.2}
	\centering
    \begin{tabular}{ c||c|c } 
     Reference & Num.\ hidden layers & Model of hardness \\ 
     \hline\hline
     \protect\cite{diakonikolas2020algorithms,goel2020superpolynomial} & 1 & Correlational SQ \\
     \hline
     \protect\cite{daniely2021local} & 3 & \begin{tabular}{@{}c@{}}Cryptographic \\ (assuming existence of local PRGs)\end{tabular} \\
     \hline
     \emph{This work} & 2 & Full SQ \\
     \hline
     \emph{This work} & 2 & \begin{tabular}{@{}c@{}}Cryptographic \\ (assuming hardness of LWR)\end{tabular} \\
    \end{tabular}
    \caption{Summary of known and new superpolynomial lower bounds for learning noise-free shallow ReLU networks over Gaussian inputs up to sufficiently small (but non-negligible) error. (Definitions and terminology may be found in \cref{sec:prelims}.)}
    \label{tab:summary}
    \end{table}
	
	\paragraph{SQ Lower Bound} We state an informal version of our main SQ lower bound:
	
	\begin{theorem}[Full SQ lower bound for two hidden layers (informal), see \cref{thm:depth2-sq-lb}] \label{thm:main_informal}
		Any SQ algorithm for learning $\poly(d)$-sized two-hidden-layer ReLU networks over $\calN(0,\Id_d)$ up to squared loss $1/\poly(d)$ must use at least $d^{\omega(1)}$ queries, or have query tolerance that is negligible in $d$.
	\end{theorem}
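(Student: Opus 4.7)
The plan is to reduce SQ learning of two-hidden-layer ReLU networks over Gaussians to SQ learning of a carefully chosen class of one-hidden-layer ReLU networks over the Boolean hypercube, via a depth-efficient refinement of the Daniely--Vardi lifting. Concretely, I would identify a class $\calC_0$ of functions $f: \cube{d} \to \R$ each computable by a single-hidden-layer ReLU network of polynomial size and polynomial weights, yet for which any SQ algorithm over the uniform distribution on $\cube{d}$ already requires $\wt\Omega(2^d \tau^4)$ queries of tolerance $\tau$. The most delicate requirements are simultaneously securing (i) a family of size $2^{\Omega(d)}$ whose pairwise correlations are $2^{-\Omega(d)}$, giving exponential statistical dimension, (ii) a one-hidden-layer ReLU realization of each member with polynomially bounded weights, and (iii) compatibility with the lifting described below.

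Next, the lift. For each $f \in \calC_0$ I would define $\tilde{f}(x) = f(\phi(x_1), \ldots, \phi(x_d))$, where $\phi: \R \to \R$ is a piecewise-linear sign approximator implementable by two ReLU units, for instance
\[
  \phi(t) \;=\; \relu(ct + 1) - \relu(ct - 1) - 1,
\]
so that $\phi(t) = \sgn(t)$ whenever $|t| > 1/c$. The map $x \mapsto (\phi(x_1), \ldots, \phi(x_d))$ is a single hidden layer of $2d$ ReLU units, and composing it with $f$'s own single hidden layer (now acting on nearly-Boolean inputs) yields a network with exactly two hidden layers and polynomial weights. The key saving over a naive lift is that $f$'s hidden layer doubles as the ``Boolean'' layer operating on the sign outputs, avoiding the extra layer that would otherwise push the depth to three, as in Daniely--Vardi.

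The SQ lower bound then transfers essentially verbatim. Since $\sgn(x) \sim \unif(\cube{d})$ when $x \sim \calN(0, \Id_d)$, and $\pr[|x_i| \leq 1/c \text{ for some } i] = O(d/c)$, picking $c = \poly(d)$ makes $\tilde{f}(x) = f(\sgn(x))$ outside a set of negligible Gaussian measure. Consequently, any bounded statistical query against $(x, \tilde{f}(x))$ with $x \sim \calN(0, \Id_d)$ can be simulated, within additive error $O(d/c)$, by the corresponding query against $(\sgn(x), f(\sgn(x)))$ under the uniform distribution on $\cube{d}$: one simulates the Gaussian query as a Boolean query by integrating out the magnitudes $|x_i|$ conditioned on their signs. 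Thus any SQ learner for the lifted class with tolerance $\tau$ induces an SQ learner for $\calC_0$ with tolerance $\tau + O(d/c) = \tau(1 + o(1))$, and the $\wt\Omega(2^d \tau^4)$ bound descends.

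The main obstacle is constructing $\calC_0$ with the three simultaneous properties above. The Vempala--Wilmes result warns that smooth real-valued classes admit polynomial-query SQ algorithms, so the Boolean hardness we import must survive the lifting without being smoothed away; fortunately the lifted functions $\tilde{f}$ are essentially Boolean-valued (taking values in $f(\cube{d})$ off a set of small Gaussian measure), which preserves the combinatorial rigidity that drives the statistical dimension argument. Packing $2^{\Omega(d)}$ pairwise nearly-orthogonal, polynomially-weighted, one-hidden-layer ReLU functions into $\calC_0$---presumably via a combinatorial construction based on sign patterns of random hyperplanes or on lifting a known almost-orthogonal family through a bounded-depth threshold gadget---is where the bulk of the technical work lies; the lifting and transfer steps are essentially formal once such a $\calC_0$ is in hand.
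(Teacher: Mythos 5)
Your route differs from the paper's and is largely viable for this particular (SQ) statement, but it has two concrete gaps. First, the ingredient you defer as ``the bulk of the technical work'' --- an exponentially large, nearly orthogonal family of polynomially weighted one-hidden-layer ReLU networks on $\cube{d}$ --- is exactly what the paper supplies trivially with parities: $\chi_S(x)=\sigma\big(\sum_{j\in S}\frac{1-x_j}{2}\big)$ with $\sigma(t)=\ind[t \text{ odd}]$ a sawtooth, i.e.\ a piecewise-linear function of an integer-valued affine form, hence a $\poly(d)$-size one-hidden-layer network with $\poly(d)$ weights; the classical bound $q/\tau^2\ge\Omega(2^d)$ for distinguishing parities from random labels then drives everything. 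Without some explicit such family your proposal is incomplete, since every other step hinges on it; with parities it closes immediately (your ``threshold gadget'' gesture is the right idea, just not carried out). You also omit the endgame converting a low-squared-loss hypothesis for the lifted class into a weak predictor/distinguisher for the Boolean class, which is where the distinguishing lower bound is actually invoked --- standard, but it belongs in the argument.

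Second, your lift $\tilde f=f\circ\phi$ discards the paper's soft-indicator gadget $N_2$, and you pay for it in the oracle simulation: whenever some $|x_i|\le 1/c$, the label $\tilde f(x)$ depends on $f$'s values at non-Boolean points, which the Boolean oracle cannot provide, so your simulated answers are only accurate up to an additive $O(d/c)$. Since the weights must stay $\poly(d)$ you are forced to take $c=\poly(d)$, so the reduction only answers the Gaussian learner's queries within tolerance when $\tau\ge\Omega(d/c)=1/\poly(d)$; the paper's construction $f^{\liftsym}(z)=\relu(f(\sgn(z))-N_2(z))$ makes the transformed labels \emph{exactly} consistent with a genuine two-hidden-layer network (this is the entire purpose of $N_2$, together with the compressibility/$N_3$ trick to stay at $L+1$ hidden layers), so its simulation is unbiased and the stated bound holds at every tolerance --- and that same exactness is what lets the lift be reused for the cryptographic and membership-query results, which your shortcut would not support. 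A smaller difference: the paper simulates each Gaussian query by averaging over $O(\tau^{-2}\log(q/\delta))$ sampled half-Gaussians (whence the $\tau^4$), whereas you fold the half-Gaussian average into a single Boolean query; that is fine information-theoretically and even gives a slightly better dependence in the tolerance regime your reduction does cover.
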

	We stress that this bound holds unconditionally, independent of any cryptographic assumptions. This simultaneously closes the gap between the hardness result of \cite{daniely2021local} and the positive results on one-hidden-layer networks \cite{janzamin2015beating,zhong2017recovery,ge2018learning,awasthi2021efficient,diakonikolas2021small} and goes against the conventional wisdom that one cannot hope to prove full SQ lower bounds for learning real-valued functions in the realizable setting.
	
	We also note that unlike previous CSQ lower bounds which are based on orthogonal function families and crucially exploit cancellations specific to the Gaussian distribution, our Theorem~\ref{thm:main_informal} and other hardness results in this paper easily extend to any reasonably anticoncentrated and symmetric product distribution over $\R^d$; see Remark~\ref{rem:distribution}.
	
	\paragraph{Cryptographic Lower Bound} While Theorem~\ref{thm:main_informal} rules out almost all known approaches for provably learning neural networks (e.g. method of moments/tensor decomposition \cite{janzamin2015beating,zhong2017recovery,ge2018learning,bakshi2019learning,diakonikolas2020approximation,diakonikolas2021small,awasthi2021efficient}, noisy gradient descent \cite{brutzkus2017globally,LiY17,Tian17,convotron,zgu,LiMZ20}, and filtered PCA \cite{chen2020learning}), it does not preclude the existence of a non-SQ algorithm for doing so. Indeed, a number of recent works \cite{bruna2021continuous,song2021cryptographic,zadik2022latticebased,diakonikolas2021nongaussian} have ported algorithmic techniques like lattice basis reduction \cite{lenstra1982factoring}, traditionally studied in the context discrete settings like cryptanalysis, to learning problems over continuous domains for which there is no corresponding SQ algorithm.
	
	Our next result shows however that under a certain cryptographic assumption, namely hardness of \emph{Learning with Rounding ($\lwr$) with polynomial modulus} \cite{banerjee2012pseudorandom,alwen2013learning,bogdanov2016hardness} (see Section~\ref{sec:prelims}), no polynomial-time algorithm can learn two-hidden-layer neural networks from Gaussian examples.
	
	\begin{theorem}[Cryptographic hardness result (informal), see \cref{thm:depth2-crypto-hardness}]\label{thm:crypto_informal}
		Suppose there exists a $\poly(d)$-time algorithm for learning $\poly(d)$-sized two-hidden-layer ReLU networks over $\calN(0, \Id_d)$ up to squared loss $1/\poly(d)$. Then there exists a quasipolynomial-time algorithm for $\lwr$ with polynomial modulus.
	\end{theorem}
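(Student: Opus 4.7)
My plan is to reduce $\lwr$ with polynomial modulus $q = \poly(d)$ to learning $\poly(d)$-sized two-hidden-layer ReLU networks over $\calN(0, \Id_d)$, in the two stages suggested by the abstract. Stage one recasts $\lwr$ as a Boolean PAC-learning problem whose target is realized by a \emph{one}-hidden-layer ReLU network with $\poly(d)$ weights, and stage two applies a refined lift in the spirit of \cite{daniely2021local} that costs exactly one additional hidden layer while preserving the polynomial weight bound.

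For stage one, each $\lwr$ secret $s \in \Z_q^n$ defines $f_s(a) = \floor{p\inn{a,s}/q}$. Via a bit encoding $\Z_q^n \hookrightarrow \cube{D}$ with $D = n\ceil{\log_2 q}$, the inner product $\inn{a,s}$ becomes a fixed linear form $L_s(x)$ on the bits with integer coefficients of magnitude $O(q)$, and $f_s$ decomposes as $\sum_{j=1}^{p-1} \ind\brk*{L_s(x) \ge c_j}$ for thresholds $c_j = jq/p$. Each threshold can be computed exactly on the integer lattice by the ReLU gadget $\sgn(z) = \relu(Kz+1) - \relu(Kz-1) - 1$ (valid for $\abs{z} \ge 1/K$) with $K = \poly(d)$, yielding a one-hidden-layer ReLU representation of $f_s$ on $\cube{D}$ with $O(q) = \poly(d)$ neurons and $\poly(d)$ weights.

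For stage two, I would replace each Boolean coordinate $x_i$ by $\sgn(g_i)$ for $g_i \sim \calN(0,1)$, possibly after averaging several Gaussian coordinates per bit (as in \cite{daniely2021local}) for robustness. The lifted target $F_s(g) = f_s(\sgn(g_1), \ldots, \sgn(g_D))$ is approximated by a two-hidden-layer ReLU network whose first hidden layer computes $\tilde{s}_i \approx \sgn(g_i)$ using the same ReLU gadget and whose second hidden layer applies the rounded-halfspace network of stage one to $\tilde s_1, \ldots, \tilde s_D$. All weights remain $\poly(d)$ since the $K = \poly(d)$ sign gadgets compose with the $O(q) = \poly(d)$ coefficients of $L_s$. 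An efficient Gaussian learner for this class produces a hypothesis with small squared loss against $F_s$; evaluating that hypothesis at Gaussian inputs realizing arbitrary bit patterns then yields an efficient predictor for $f_s$, contradicting $\lwr$-hardness.

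The principal obstacle I anticipate is quantitative. The first-layer sign approximation is inaccurate on the event $\abs{g_i} \lesssim 1/K$, of Gaussian measure $O(1/K)$ per coordinate, so a union bound gives probability $O(D/K)$ that at least one bit is flipped; worse, even a single flipped bit can push $L_s(\tilde s)$ into a width-$1/K$ risk zone around some threshold $c_j$, potentially causing an $\Omega(1)$ squared loss on that sample. Controlling the overall loss below the constant tolerance required by the theorem demands (i) taking $K$ polynomially large so per-coordinate sign errors are rare, (ii) invoking anticoncentration of $L_s$ under the uniform distribution on $\cube{D}$ so that thresholds are rarely near-misses, and (iii) using the Daniely--Vardi block aggregation to drive the effective per-bit error far below $1/q$. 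Balancing these three effects while keeping all weights in $\poly(d)$ is the central technical calculation I would carry out.
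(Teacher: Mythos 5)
There is a genuine gap, and it is exactly the one you flag as a ``quantitative obstacle'' at the end: it is not quantitative, it is structural. The theorem assumes a learner for the \emph{realizable} (noise-free) problem, i.e.\ one that is only guaranteed to work when the labels are \emph{exactly} the values of some $\poly(d)$-sized two-hidden-layer ReLU network. Your lifted target $F_s(g) = f_s(\sgn(g_1),\ldots,\sgn(g_D))$ is discontinuous, hence not in the class, and the network you build (sign gadgets composed with the stage-one network) only \emph{approximates} it; feeding the learner examples labeled by $F_s$ therefore violates its hypothesis, and no choice of $K$, anticoncentration of $L_s$, or per-bit aggregation repairs this, because the issue is realizability of the training labels rather than the size of the approximation error. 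The paper's route (following Daniely--Vardi) resolves this with a ``soft indicator'' gadget $N_2(z)$ that is $0$ when every $|z_j| \ge 2/d^2$ and $\ge \|f\|_\infty$ when some $|z_j| \le 1/d^2$, and by \emph{modifying the labels} in the reduction to $\tilde y = \relu\big(y - N_2(|z|)\big)$ (and $0$ on the bad region). The lifted function $f^{\liftsym}(z) = \relu\big(f(\sgn(z)) - N_2(z)\big) = \relu\big(f(N_1(z)) - N_2(z)\big)$ is then a genuine ReLU network that the transformed examples are \emph{exactly} consistent with, and Gaussian anticoncentration is used only afterwards, to argue that a good hypothesis for $f^{\liftsym}$ weakly predicts $f$ on the hypercube.

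Two further points. First, even once you adopt this zero-out mechanism, the naive lift of a one-hidden-layer Boolean network costs \emph{two} extra hidden layers (the $N_1$ layer plus the outer $\relu$), giving three, not two; the paper's main technical contribution is the compressibility trick: since $f_w(\tilde x) = \sigma(\tilde w\cdot \tilde x)$ with the linear form integer-valued in a $\poly$-sized range, one can write $f^{\liftsym}(z) = \sum_{t^*} \sigma(t^*)\,\relu\big(\ind[h(\sgn(z))=t^*] - N_2(z)\big)$ and implement each summand $(s,t)\mapsto\relu(\ind[t=t^*]-s)$ as a one-hidden-layer gadget ($N_3$), which is what saves the layer and yields exactly two hidden layers. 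Your proposal does not contain this idea, and without it you would only prove hardness for three hidden layers (already known from local PRGs). Second, a minor fixable error: your stage-one decomposition $\sum_{j=1}^{p-1}\ind[L_s(x)\ge c_j]$ drops the reduction mod $q$ in the LWR function $\frac1p\round{\frac{p}{q}(w\cdot x \bmod q)}$; the correct map $t \mapsto \frac1p\round{t \bmod q}_p$ is non-monotone on $\{0,\dots,O(q^2 n)\}$ and needs $\poly(n)$ threshold pieces rather than $p-1$, which is still fine for the size bound but must be stated correctly.
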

	
	Note that here we may actually improve the $\lwr$ hardness assumption required from quasipolynomial to any mildly superpolynomial function of the security parameter (see \cref{rem:crypto-quasipoly}).
	
	Under $\lwr$ with polynomial modulus, we also show the first hardness result for learning \emph{one hidden layer} ReLU networks over the uniform distribution on $\{0,1\}^d$ (see Theorem~\ref{thm:uniform}).
	
	In Section~\ref{sec:prelims}, we discuss existing hardness evidence for $\lwr$ as well as its relation to more standard assumptions like Learning with Errors. From a negative perspective, Theorem~\ref{thm:crypto_informal} suggests that the aforementioned lattice-based algorithms for continuous domains are unlikely to yield new learning algorithms for two-hidden-layer networks, because even their more widely studied \emph{discrete} counterparts have yet to break $\lwr$. From a positive perspective, in light of the prominent role $\lwr$ and its variants have played in a number of practical proposals for post-quantum cryptography \cite{cheon2018lizard,bhattacharya2018round5,jin2016optimal,d2018saber}, Theorem~\ref{thm:crypto_informal} offers a new avenue for stress-testing these schemes.
	
	\paragraph{Query Learning Lower Bound}
	
	One additional benefit of our techniques is that they are flexible enough to accommodate other learning models beyond traditional PAC learning. To illustrate this, for our final result we show hardness of learning neural networks from \emph{label queries}. In this setting, the learner is much more powerful: rather than sample or SQ access, they are given the ability to query the value $f(x)$ of the unknown function $f$ at any desired point $x$ in $\R^d$, and the goal is still to output a function $\hat{f}$ for which $\ex[(f(x) - \hat{f}(x))^2]$ is small. The expectation here is with respect to some specified distribution, which we will take to be $\calN(0,\Id_d)$, though as before, our techniques will apply to any reasonably anticoncentrated, symmetric product distribution over $\R^d$.
	
	In recent years, this question has received renewed interest from the security and privacy communities in light of \emph{model extraction attacks}, which attempt to reverse-engineer neural networks found in publicly deployed systems \cite{TramerZJRR16,MilliSDH19,Papernot,JagielskiCBKP20,RolnickK20,JWZ20,daniely2021exact}. Recent work \cite{chen2021efficiently} has shown that in this model, there is an efficient algorithm for learning arbitrary one-hidden-layer ReLU networks that is truly polynomial in all relevant parameters. We show that under plausible cryptographic assumptions about the existence of simple pseudorandom function (PRF) families (see Section~\ref{sec:query}) which may themselves be based on standard number theoretic or lattice-based cryptographic assumptions, such a guarantee is impossible for general \emph{constant-depth} ReLU networks.
	
	\begin{theorem}[Label query hardness (informal), see \cref{thm:model_hardness}]\label{thm:model_informal}
		If either the decisional Diffie--Hellman or the Learning with Errors assumption holds, then the class of $\poly(d)$-sized constant-depth ReLU networks from $\R^d$ to $\R$ is not learnable up to small constant squared loss $\epsilon$ over $\calN(0,\Id_d)$ even using label queries over all of $\R^d$.
	\end{theorem}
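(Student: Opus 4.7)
The strategy is to reduce learning a pseudorandom function (PRF) family on the Boolean hypercube from membership queries --- which is impossible by definition of a PRF --- to learning constant-depth ReLU networks over $\calN(0, \Id_d)$ from membership queries. The reduction has three main steps: exhibit a PRF family computable by constant-depth ReLU networks, lift it to an MQ-friendly function on $\R^d$, and argue that a hypothetical learner would break the PRF.

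First, I would invoke known constructions of PRF families admitting $\mathsf{TC}^0$ implementations. Under DDH, the Naor--Reingold PRF is in $\mathsf{TC}^0$; under LWE, the Banerjee--Peikert--Rosen PRF and its refinements provide constant-depth, polynomial-size threshold-circuit implementations. A standard gadget then converts each threshold gate into $O(1)$ layers of a ReLU network with polynomially bounded weights, so the entire PRF $f : \cube{d} \to \cube{}$ is computed by a constant-depth ReLU network $N_f$ of polynomial size.

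Second, I would apply (a membership-query version of) the Daniely--Vardi-style lifting sketched earlier in the paper, producing $f^\liftsym: \R^d \to \R$ with the following properties. (i) $f^\liftsym$ is computed by a constant-depth ReLU network of polynomial size and polynomially bounded weights, obtained by prepending to $N_f$ a one-layer ReLU gadget that implements a steep piecewise-linear approximation to $\sgn$ in each coordinate and then feeds these $\pm 1$-valued outputs into $N_f$. (ii) The output $f^\liftsym(x)$ depends on $x$ only through $(\sgn(x_1),\dots,\sgn(x_d))$ whenever $\min_i |x_i|$ exceeds an arbitrarily small threshold; the set where this fails has negligible Gaussian mass, and in particular cannot contribute to the squared error. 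The construction is such that, on this generic region, $f^\liftsym(x) = f(\sgn(x_1),\dots,\sgn(x_d))$ exactly.

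Third, suppose $\calA$ is an efficient MQ algorithm that, given query access to $f^\liftsym$, outputs $\hat f$ with $\ex_{x \sim \calN(0,\Id_d)}[(f^\liftsym(x) - \hat f(x))^2] \leq \epsilon$ for a sufficiently small absolute constant $\epsilon$. Given only MQ access to $f$, a distinguisher $\calB$ simulates each query $x$ of $\calA$ by computing $b = \sgn(x) \in \cube{d}$ (slightly perturbing any near-zero coordinates of $x$, which does not affect the learner's view since $f^\liftsym$ is locally constant on the generic region), querying its oracle for $f(b)$, and returning that value as $f^\liftsym(x)$. Then $x \mapsto \hat f(x)$ translates into a hypothesis $b \mapsto \hat f(b)$ that achieves inverse-polynomial advantage against $f$ on uniformly random $b \in \cube{d}$, contradicting PRF security under DDH or LWE respectively.

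The main obstacle is the design of the lifting so that it is simultaneously (a) a bona fide constant-depth ReLU network with $\poly(d)$ weights, and (b) \emph{query-equivalent} to the underlying Boolean PRF, meaning that no choice of $x \in \R^d$ --- even adversarial points near the coordinate axes --- yields more information about the PRF than the corresponding Boolean query on $\sgn(x)$. Ensuring (b) is what forces the lifted function to factor through $\sgn(\cdot)$ on a region of full Gaussian measure, and this is the step where care is required beyond the PAC-style lifting, since an MQ adversary can otherwise probe boundary behavior to learn structural information about $N_f$ that a distributional learner never sees.
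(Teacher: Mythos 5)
Your overall strategy matches the paper's: take a PRF family in $\mathsf{TC}^0$ (Naor--Reingold under DDH, Banerjee--Peikert--Rosen under LWE), implement it as a constant-depth ReLU network over $\cube{d}$, lift to a real-valued network, and simulate the MQ learner's queries using Boolean queries to the PRF. But there is a genuine gap precisely at the obstacle you flag in your last paragraph and do not resolve. Your lift is just $f^{\liftsym} = N_f \circ N_1$, i.e.\ you prepend a steep piecewise-linear approximation to $\sgn$ and nothing else. For a query point $x$ with some coordinate in the boundary region, the true value $f^{\liftsym}(x) = N_f(N_1(x))$ is evaluated at a non-Boolean input and depends on the hidden weights of $N_f$ (the secret PRF key), so your reduction cannot compute it from Boolean oracle access. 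Your proposed fix --- perturb near-zero coordinates and answer with $f(\sgn(\tilde x))$, claiming this ``does not affect the learner's view'' --- is exactly what fails in the membership-query model: unlike the distributional setting, the learner may deliberately query boundary points, and an oracle that answers them wrongly is no longer consistent with any function in the lifted class, so the learner's guarantee need not apply (and a learner could even detect the discontinuity and exploit the inconsistency). The paper's construction resolves this with an additional one-hidden-layer ``soft indicator'' gadget $N_2$ satisfying $N_2(z) \ge \|f\|_\infty$ whenever $N_1(z) \neq \sgn(z)$ in some coordinate, and defines $f^{\liftsym}(z) = \relu(f(N_1(z)) - N_2(z))$; the point is that this network satisfies the identity $f^{\liftsym}(z) = \relu(f(\sgn(z)) - N_2(z))$ for \emph{every} $z \in \R^d$, so the reduction can answer any query exactly with one Boolean query to $f$ at $\sgn(z)$ plus the publicly known $N_2(z)$. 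That identity is the missing ingredient that makes the lift ``query-equivalent'' in your sense, at the cost of two extra hidden layers rather than one.

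A second, smaller issue: at the end you convert the learned hypothesis $\hat f$ into a Boolean predictor by evaluating $\hat f$ directly at $b \in \cube{d}$. The guarantee on $\hat f$ is only an expected squared loss over $\calN(0,\Id_d)$, and the hypercube has Gaussian measure zero, so nothing is promised about $\hat f(b)$. The correct translation (as in the paper's lifting theorem) randomizes: given $b$, draw a fresh half-Gaussian $g$, note $gb \sim \calN(0,\Id_d)$ and $f^{\liftsym}(gb) = f(b)$ except with probability $O(1/d)$, and predict by thresholding $\hat f(gb)$; this is what yields the nontrivial agreement with the PRF that contradicts pseudorandomness.
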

	
	Note that the connection between PRFs and hardness of learning from label queries over \emph{discrete domains} is a well-known connection dating back to Valiant~\cite{valiant1984theory}. To our knowledge, however, Theorem~\ref{thm:model_informal} is the first hardness result for query learning over continuous domains.

	\subsection{Discussion and Related Work}
	\label{sec:discussion}
	
	\paragraph{Hardness for learning neural networks.} There are a number of works \cite{blum1989training,vu2006infeasibility,klivans2009cryptographic,livni2014computational,goel2017reliably,daniely2020hardness} showing hardness for \emph{distribution-free} learning of various classes of neural networks. 
	
	As for hardness of distribution-specific learning, several works have established lower bounds with respect to the Gaussian distribution. Apart from the works \cite{goel2020superpolynomial,diakonikolas2020algorithms,daniely2021local} from the introduction which are most closely related to the present work, we also mention the works of
	\cite{klivans2014embedding,goel2019time,goel2020statistical,diakonikolas2020near} which showed hardness for \emph{agnostically} learning halfspaces and ReLUs, \cite{shamir2018distribution} which showed hardness for learning periodic activations with gradient-based methods, \cite{song2017complexity} which showed lower bounds against SQ algorithms for learning one-hidden-layer networks using \emph{Lipschitz} statistical queries and large tolerance, and \cite{song2021cryptographic} which showed lattice-based hardness of learning one-hidden-layer networks when the labels $y_i$ have been perturbed by bounded \emph{adversarially chosen} noise. Our approach has similarities to the ``Gaussian lift'' as studied by Klivans and Kothari~\cite{klivans2014embedding}. Their approach, however, required noise in the labels, whereas we are interested in hardness in the strictly \emph{realizable setting}.
	We also remark that \cite{das2020learnability,agarwal2021deep} showed \emph{correlational} SQ lower bounds for learning random depth-$\omega(\log n)$ neural networks over \emph{Boolean inputs} which are uniform over a halfspace.
	
	There have also been works on hardness of learning from label queries over \emph{discrete domains} and for more ``classical'' concept classes like Boolean circuits \cite{feldman2009power,cohen2015aggregate,valiant1984theory,kharitonov1995cryptographic,angluin1995won}.
	
	Lastly, we remark on how our results relate to \cite{chen2020learning}, which gives the only known upper bound for learning neural networks over Gaussian inputs beyond one hidden layer. They showed that learning ReLU networks of arbitrary depth is ``fixed-parameter tractable'' in the sense that there is a fixed function $g(k,\epsilon)$ in the size $k$ of the network and target error $\epsilon$ for which the time complexity is at most $g(k,\epsilon)\cdot \poly(d)$, and their algorithm can be implemented in SQ. That said, this does not contradict our lower bounds for two reasons: 1) their algorithm only applies to networks without biases, 2) in our lower bound constructions, $k$ scales polynomially in $d$.
	
	\paragraph{SQ lower bounds for real-valued functions.} A recurring conundrum in the literature on SQ lower bounds for supervised learning has been whether one can show SQ hardness for learning \emph{real-valued} functions. SQ lower bounds for Boolean functions are typically shown by lower bounding the \emph{statistical dimension} of the function class, which essentially corresponds to the largest possible set of functions in the class which are all approximately pairwise orthogonal. Indeed, the content of the hardness results of \cite{goel2020superpolynomial,diakonikolas2020algorithms} was to prove lower bounds on the statistical dimension of one-hidden-layer networks. Unfortunately, for real-valued functions, statistical dimension lower bounds only imply CSQ lower bounds. As discussed in \cite{goel2020superpolynomial}, the class of $d$-variate Hermite polynomials of degree-$\ell$ is pairwise orthogonal and of size $d^{O(\ell)}$, which translates to a CSQ lower bound of $d^{\Omega(\ell)}$. Yet there exist SQ algorithms for learning Hermite polynomials in far fewer queries \cite{andoni2014learning,andoni2019attribute}.
	
	Further justification for the difficulty of proving SQ lower bounds for real-valued functions came from \cite{vempala2019gradient}, which observed that for any real-valued learning problem satisfying a seemingly innocuous non-degeneracy assumption---namely that for any pair of functions $f,g$ in the class, the probability under the input distribution $\calD$ that $f(x) = g(x)$ is zero---there is an efficient ``cheating'' SQ algorithm (see Proposition 4.1 therein). The SQ lower bound shown in the present work circumvents this proof barrier by exhibiting a family of neural networks for which any pair of networks agrees on a set of inputs with Gaussian measure \emph{bounded away from zero}.
	
	\paragraph{Open questions}
	
	While our results settle Question~\ref{eq:question}, a number of intriguing gaps between our lower bounds and existing upper bounds remain open:
	\begin{itemize}[leftmargin=*]
		\item \textbf{General one-hidden-layer networks.} Despite the considerable amount of work on learning one-hidden-layer networks over Gaussian inputs, all known positive results that run in polynomial time in all parameters (input dimension $d$, network size $k$, inverse error $1/\epsilon$) still need to make various assumptions on the structure of the network. Remarkably, it is even open whether one-hidden-layer ReLU networks with \emph{positive output layer weights} (i.e. ``sums of ReLUs'') can be learned in polynomial time, the best known guarantee being the $(k/\epsilon)^{\log^2 k}\cdot \poly(d/\epsilon)$-time algorithm of \cite{diakonikolas2021small}. As for general one-hidden-layer ReLU networks, it is still open whether they can even be learned in time $d^{O(k)}\cdot\text{poly}(1/\epsilon)$, the best known guarantee being the $k^{\poly(k/\epsilon)}\cdot \poly(d)$-time algorithm of \cite{chen2020learning}.
		\item \textbf{Query learning shallow networks.} While Theorem~\ref{thm:model_informal} establishes that above a certain constant depth, ReLU networks cannot be learned even from label queries over the Gaussian distribution. It would be interesting to close the gap between this and the positive result of \cite{chen2021efficiently} which only applies to one-hidden-layer networks, although fully settling this seems closely related to the question of what are the shallowest possible Boolean circuits needed to implement pseudorandom functions, a longstanding open question in circuit complexity.
	\end{itemize}

	\subsection{Technical Overview}\label{sec:overview}
	
	Our work will build on a recent approach of Daniely and Vardi \cite{daniely2021local}, who developed a simple and clever technique for lifting discrete functions to the Gaussian domain entirely in the realizable setting. Our main contributions are to (1) make their lifting procedure more efficient so that two hidden layers suffice and (2) show how to apply the lift in a variety of models beyond PAC. For the purposes of this overview we will take the domain of our discrete functions to be $\{0,1\}^d$, but our techniques extend to $\Z_q^d$ with $q = \poly(d)$.
	
	\paragraph{Daniely--Vardi (DV) lift.}
	At a high level, the DV lift is a transformation mapping a Boolean example $(x,y)$ labeled by a hard-to-learn Boolean function $f$ to a Gaussian example $(z,\wt{y})$ labeled by a (real-valued) ReLU network $f^{\dv}$ that behaves similarly to $f$ in that $f^{\dv}(z)$ approximates $f(\sgn(z))$, where for us $\sgn(t)$ denotes $\ind[t > 0]$ and is applied elementwise. The key idea is to use a continuous approximation $\wt{\sgn}$ of the $\sgn$ function, and to pair it with a ``soft indicator'' function $\bad : \R^d \to \R_+$ that is large whenever $\sgn(z) \neq \wt{\sgn}(z)$, and that can be implemented as a one-hidden-layer network independent of the target function. One can show that whenever $f$ is realizable as an $L$-hidden-layer network over $\{0,1\}^d$, the function $f^{\dv}(z) = \relu(f(\wt{\sgn}(z)) - \bad(z))$ can be implemented as an $(L+2)$-hidden-layer network satisfying \begin{equation}
		f^{\dv}(z) = \relu(f(\sgn(z)) - \bad(z)). \label{eq:dvform_overview}
	\end{equation}
	This property allows us to generate synthetic Gaussian labeled examples $(z, f^{\dv}(z))$ from Boolean labeled examples $(x, f(x))$, and thereby reduce the problem of learning $f$ to that of learning $f^{\dv}$. For a fuller overview, see \cref{subsec:dv_overview}.

	\paragraph{Improving the DV lift.} 
	Our first technical contribution is to introduce a more efficient lift which only requires one extra hidden layer. Our starting point is to observe that a variety of hard-to-learn Boolean functions $f$ like parity and $\lwr$ take the form $f(x) = \sigma(h(x))$ for some ReLU network $h$ whose range $T$ over Boolean inputs is a discrete subset of $[0,\poly(d)]$ of \emph{polynomially bounded} size, and for some function $\sigma: T\to[0,1]$. For such \emph{compressible} functions (see \cref{def:compressible}), one can write $f(x) = \sigma(h(x)) = \sum_{t^* \in T} \sigma(t^*)\ind[h(x) = t^*]$. Again, we would like to implement lifted function $f^{\liftsym} : \R^d \to \R$ using $\wt{\sgn}$ and $\bad$ so that it approximates $f(\sgn(z))$ except when $\bad$ indicates that $\wt{\sgn} \neq \sgn$. To this end, we might hope to implement, say, \[ f^{\liftsym}(z) = \sum_{t^* \in T} \sigma(t^*) \ind[h(\wt{\sgn}(z)) = t^*] \ind[\forall j : \bad(z_j) \ll 1]. \] Here we now view $\bad$ as a univariate function, and whenever it is small, we can be sure $\wt{\sgn} = \sgn$. Suppose that we could build a one-hidden-layer network $N(s_1, \dots, s_d; t)$ that behaves like $\ind[t = 0]\ind[\forall j : s_j \ll 1]$. Then we could realize $f^{\liftsym}$ as an $(L+1)$-hidden-layer network: \begin{equation} f^{\liftsym}(z) = \sum_{t^* \in T} \sigma(t^*) N(\bad(z_1), \dots, \bad(z_d);\ h(\wt{\sgn}(z)) - t^*). \label{eq:dvform_comp}
	\end{equation} While many natural attempts to build such an $N$ run into difficulties, we construct a suitably relaxed version of $N$ that turns out to suffice for the reduction. To gain some intuition for our construction, the starting observation is that the following inclusion-exclusion type formula vanishes identically whenever any of the $s_j$ is $1$: \begin{align}
	    &\psi(s_1, s_2, s_3) - \psi(1, s_2, s_3) - \psi(s_1, 1, s_3) - \psi(s_1, s_2, 1) \\
		&+ \psi(s_1, 1, 1) + \psi(1, s_2, 1) + \psi(s_1, 1, 1) - \psi(1, 1, 1).
	\end{align} For a suitable choice of $\psi$, one might hope to build $N$ out of such a formula by taking $s_j = \bad(z_j)$ for every $j$. But the natural generalization of this expression to $d$ inputs would have size $2^d$, which runs the risk of rendering the resulting SQ lower bounds vacuous. Our final construction (Lemma~\ref{lem:n3}) instead resembles a truncated inclusion-exclusion type formula of only quasipolynomial size, which may be of independent interest. Since the SQ lower bounds for Boolean functions that we build on are exponential, by a simple padding argument we still obtain a superpolynomial SQ lower bound for our lifted functions.
	
	\paragraph{Hard one-hidden-layer Boolean functions and \lwr.} To use this lift for Theorems~\ref{thm:main_informal} and \ref{thm:crypto_informal}, we need one-hidden-layer networks that are \emph{compressible} and hard to learn over uniform Boolean inputs. For SQ lower bounds, we can simply start from parities, for which there are exponential SQ lower bounds, and which turn out to be easily implementable by compressible one-hidden-layer networks.
	For cryptographic hardness, Daniely and Vardi \cite{daniely2021local} used certain one-hidden-layer Boolean networks that arise from the cryptographic assumption that local PRGs exist (see Section A.4.1 therein). Unfortunately, these functions are not compressible. For this reason, we work instead with $\lwr$: it turns out that the $\lwr$ functions are compressible and, conveniently, the hardness assumption directly involves uniform discrete inputs.

	\paragraph{Hardness beyond PAC.} While the DV lift is \emph{a priori} only for showing hardness of example-based PAC learning, we can extend it to the SQ and label query models by simple simulation arguments.

	\section{Preliminaries}
	\label{sec:prelims}
	
	\subsection{Notation}
	We use $\unif(S)$ to denote the uniform distribution over a set $S$. We use $U_d$ as shorthand for $\unif\{0,1\}^d$. We use $\calN(0, \Id_d)$ (or sometimes $\calN_d$ for short) to denote the standard Gaussian, and $|\calN(0, \Id_d)|$ (or $|\calN_d|$ for short) to denote the positive standard half-Gaussian (i.e., $g \sim |\calN(0, \Id_d)|$ if $g = |z|$ for $z \sim \calN(0, \Id_d)$). We use $[n]$ to denote $\{1, \dots, n\}$.
	
	For $q > 0$, $\Z_q$ will denote the integers modulo $q$, which we will identify with $\{0, \dots, q-1\}$. We use $\Z_q / q$ to denote $\{0, 1/q, \dots, (q-1)/q\}$. Our discrete functions will in general have domain $\Z_q^d$ for some $q$. The $q = 2$ case, namely Boolean functions, have domain $\{0,1\}^d$. For the purposes of this paper, $\sgn : \R \to \{0,1\}$ is defined as $\sgn(t) = \ind[t > 0]$. We will extend this to $\Z_q$ by defining $\thres_q : \R \to \Z_q$ in terms of a certain partition of $\R$ into $q$ intervals $I_0, \dots, I_{q-1}$ (formally defined later) as the piecewise constant function that takes on value $k$ on $I_k$ for each $k \in \Z_q$. Scalar functions and scalar arithmetic applied to vectors act elementwise. We say a quantity is \emph{negligible} in a parameter $n$, denoted $\negl(n)$, if it decays as $1/n^{\omega(1)}$.
	
	A \emph{one-hidden-layer} ReLU network mapping $\R^d$ to $\R$ is a linear combination of ReLUs, that is, a function of the form
	\begin{equation}
		F(x) = W_1\relu \big( W_0 x + b_0 \big) + b_1,
	\end{equation}
	where $W_0 \in \R^{k\times d}$, $W_1\in\R^{1\times k}$, $b_0\in\R^k$, and $b_1\in\R$.
	A \emph{two-hidden-layer} ReLU network mapping $\R^d$ to $\R$ is a linear combination of ReLUs of one-hidden-layer networks, that is, a function of the form
	\begin{equation}
		F(x) = W_2 \relu \big(W_1 \relu \big(W_0 x + b_0 \big) + b_1 \big) + b_2,
	\end{equation}
	where $W_0\in\R^{k_0\times d}$, $W_1\in\R^{k_1\times k_0}$, $W_2\in\R^{1\times k_1}$, $b_0\in\R^{k_0}$, $b_1 \in \R^{k_1}$, and $b_2\in\R$. Our usage of the term \emph{hidden layer} thus corresponds to a \emph{nonlinear} layer. 
	
	\subsection{Learning models}
	Let $\calC$ be a function class mapping $\R^d$ to $\R$, and let $\calD$ be a distribution on $\R^d$. We consider various learning models where the learner is given access in different ways to labeled data $(x, f(x))$ for an unknown $f \in \calC$ and must output a (possibly randomized) predictor that achieves (say) squared loss $\epsilon$ for any desired $\epsilon > 0$. In the traditional PAC model, access to the data is in the form of iid labeled examples $(x, f(x))$ where $x \sim \calD$, and the learner is considered efficient if it succeeds using $\poly(d, 1/\eps)$ time and sample complexity. In the Statistical Query (SQ) model \cite{kearns1998efficient,reyzin2020statistical}, access to the data is through an SQ oracle. Given a bounded query $\phi : \R^d \times \R \to [-1,1]$ and a tolerance $\tau > 0$, the oracle may respond with any value $v$ such that $|v - \ex_{x \sim \calD}[\phi(x, f(x))]| \leq \tau$. A correlational query is one that is linear in $y$, i.e.\ of the form $\phi(x,y) = \wt{\phi}(x)y$ for some $\wt{\phi}$, and a correlational SQ (CSQ) learner is one that is only allowed to make CSQs. An SQ learner is considered efficient if it succeeds using $\poly(d, 1/\eps)$ queries and tolerance $\tau \geq 1/\poly(d, 1/\eps)$. Finally, in the label query model, the learner is allowed to request the value of $f(x)$ for any desired $x$, and is considered efficient if it succeeds using $\poly(d, 1/\eps)$ time and queries.
	
	\subsection{Learning with Rounding}\label{subsec:lwr}
	The Learning with Rounding ($\lwr$) problem \cite{banerjee2012pseudorandom} is a close cousin of the well-known Learning with Errors ($\lwe$) problem \cite{regev2009lattices}, except with deterministic rounding in place of random additive errors.
	
	\begin{definition}
		Fix moduli $p, q \in \N$, where $p < q$, and let $n$ be the security parameter. For any $w \in \Z_q^n$, define $f_w : \Z_q^n \to \Z_p / p$ by \[ f_w(x) = \frac{1}{p} \round{w \cdot x}_p = \frac{1}{p} \round{\frac{p}{q} (w \cdot x \bmod q)}, \] where $\round{t}$ is the closest integer to $t$. In the $\lwr_{n, p, q}$ problem, the secret $w$ is drawn randomly from $\Z_q^n$, and we must distinguish between labeled examples $(x, y)$ where $x \sim \Z_q^n$ and either $y = f_w(x)$ or $y$ is drawn independently from $\unif(\Z_p / p)$. The $\lwe_{n,q,B}$ problem is similar, except that $y \in \Z_q / q$ is either $\frac{1}{q}((w \cdot x + e) \bmod q)$ for some $e \in \Z_q$ sampled from a carefully chosen distribution, e.g. discrete Gaussian, such that $|e| \leq B$ except with $\negl(n)$ probability, or is drawn from $\unif(\Z_q / q)$.
	\end{definition}
	
	\begin{remark}
		Traditionally the $\lwr$ problem is stated with labels lying in $\Z_p$ instead of $\Z_p / p$, although both are equivalent since the moduli $p, q$ may be assumed to be known to the learner. The choice of $\Z_p / p$ is simply a convenient way to normalize labels to lie in $[0, 1]$. For consistency, we similarly normalize $\lwe$ labels to lie in $\Z_q / q$.
	\end{remark}
	
	It is known that $\lwe_{n,q,B}$ is as hard as worst-case lattice problems when $q=\poly(n)$ and $B = q/\poly(n)$ (see e.g.\ \cite{regev2010learning,peikert2016decade} for surveys). Yet this is not known to directly imply the hardness of $\lwr_{n,p,q}$ in the regime in which $p, q$ are both $\poly(n)$, which is the one we will be interested in as $p,q$ will dictate the size of the hard networks that we construct in the proof of our cryptographic lower bound. 
	
	Unfortunately, in this polynomial modulus regime, it is only known how to reduce from $\lwe$ to $\lwr$ \emph{when the number of samples is bounded relative to the modulus} \cite{alwen2013learning,bogdanov2016hardness}. For instance, the best known reduction in this regime obtains the following hardness guarantee:
	
	\begin{theorem}[\cite{bogdanov2016hardness}]\label{thm:lwr-lwe-hardness}
		Let $n$ be the security parameter, let $p, q \geq 1$ be moduli, and let $m, B \geq 0$.  Assuming $q \geq \Omega(m B p)$, any distinguisher capable of solving $\lwr_{n,p,q}$ using $m$ samples implies an efficient algorithm for $\lwe_{n,q,B}$.
	\end{theorem}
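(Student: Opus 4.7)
The plan is a sample-wise reduction based on deterministic rounding. Given the $\lwe_{n,q,B}$ challenger, on each sample $(x, y) \in \Z_q^n \times (\Z_q/q)$ the reduction outputs $(x, \tfrac{1}{p}\round{p\,y})$, producing $m$ samples to feed into the assumed $\lwr_{n,p,q}$ distinguisher. The entire argument then reduces to bounding the statistical distance between the reduction's output distribution and a true $\lwr$ instance under each of the two sides of the $\lwe$ game.

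On the secret-bearing side, when $y = \tfrac{1}{q}((w\cdot x + e)\bmod q)$ with $|e|\leq B$, one has $\tfrac{1}{p}\round{p\,y} = f_w(x)$ unless $w\cdot x \bmod q$ lies within $B$ of one of the $p$ rounding boundaries (which are spaced $q/p$ apart in $\Z_q$). The ``bad'' subset of $\Z_q$ thus has size $O(Bp)$. For uniform $x\in\Z_q^n$ and any sufficiently generic secret $w$, the distribution of $w\cdot x \bmod q$ is statistically close to $\unif(\Z_q)$, so the per-sample failure probability is $O(Bp/q)$; a union bound over $m$ samples yields total failure $O(mBp/q)$, which is negligible precisely under the hypothesis $q \geq \Omega(mBp)$ with a large enough constant.

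On the uniform side, when $y \sim \unif(\Z_q/q)$, the pushforward $\tfrac{1}{p}\round{p\,y}$ on $\Z_p/p$ has buckets whose preimage sizes in $\Z_q$ each differ from $q/p$ by at most $1$, so its statistical distance to $\unif(\Z_p/p)$ is $O(p/q)$, contributing another $O(mp/q)$ over $m$ samples---again negligible under the same hypothesis. Combining the two hybrids, an $\lwr_{n,p,q}$ distinguisher with advantage $\alpha$ yields an $\lwe_{n,q,B}$ distinguisher with advantage $\alpha - \negl(n)$.

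The only genuinely non-mechanical step is arguing that $w\cdot x \bmod q$ is close to uniform on $\Z_q$; when $q$ is not prime this requires ruling out secrets $w$ whose coordinates share a common nontrivial factor with $q$, but only a negligible fraction of $w\in\Z_q^n$ have this property, so the conclusion carries through for a random secret. Aside from this mild structural check, the proof is a standard rounding analysis and matches the reduction of \cite{bogdanov2016hardness}.
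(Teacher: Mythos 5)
You are reconstructing a proof for a statement that the paper itself does not prove but imports wholesale from \cite{bogdanov2016hardness}, so the comparison here is really about whether your argument establishes the theorem as stated. It does not, and the failure point is the crux of your write-up: the claim that the hybrid losses are ``negligible precisely under the hypothesis $q \geq \Omega(mBp)$.'' The per-sample borderline probability is $O(Bp/q)$ and the uniform-side rounding bias is $O(p/q)$, so your union bound gives a total additive loss of $O(mBp/q)$; under $q \geq C\cdot mBp$ for a fixed constant $C$ this is $O(1/C)$ --- a \emph{constant}, not $\negl(n)$. Consequently your reduction turns an $\lwr_{n,p,q}$ distinguisher of advantage $\alpha$ into an $\lwe_{n,q,B}$ distinguisher of advantage $\alpha - O(1/C)$, which is vacuous whenever $\alpha$ is sub-constant, whereas the theorem (and the standard cryptographic reading of ``any distinguisher capable of solving $\lwr$'') must handle any non-negligible advantage. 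To make the naive round-and-union-bound argument incur negligible loss you would need $q \geq mBp\cdot n^{\omega(1)}$, i.e.\ a superpolynomial modulus-to-error ratio; that is the regime of the earlier Banerjee--Peikert--Rosen reduction and is exactly what this theorem is invoked to avoid, since the paper needs polynomial modulus. This is also why the result of \cite{bogdanov2016hardness} is nontrivial: their reduction keeps the modulus requirement $q \geq \Omega(mBp)$ \emph{independent of the distinguisher's advantage}, which cannot be achieved by the folklore rounding reduction alone; it requires a more careful analysis (hybridizing over samples and arguing that advantage concentrated on the borderline event could itself be leveraged against $\lwe$, e.g.\ using that the noisy label reveals when a sample is within $2B$ of a rounding boundary), at the cost of a polynomial, rather than additive-constant, degradation in advantage.

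The peripheral steps of your argument are fine: the coupling bound on statistical distance, the treatment of composite $q$ by discarding the negligible fraction of secrets $w$ with $\gcd(w_1,\dots,w_n,q) > 1$ (for the remaining secrets $w\cdot x \bmod q$ is exactly uniform), and the $O(p/q)$-per-sample bound on the uniform side. And for the specific application in this paper --- where the learner yields a distinguisher with \emph{constant} advantage --- your reduction with a large enough constant in the $\Omega(\cdot)$ would actually suffice. But that is a strictly weaker statement than the cited theorem, and your concluding claim of advantage $\alpha - \negl(n)$ is unjustified as written.
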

	For our purposes, Theorem~\ref{thm:lwr-lwe-hardness} is not enough to let us base our Theorem~\ref{thm:crypto_informal} off of $\lwe$, as we are interested in the regime where the learner has an \emph{arbitrary} polynomial number of samples. 
	
	$\lwr$ with polynomial modulus and arbitrary polynomial samples is nevertheless conjectured to be as hard as worst-case lattice problems \cite{banerjee2012pseudorandom} and has already formed the basis for a number of post-quantum cryptographic proposals \cite{d2018saber,cheon2018lizard,bhattacharya2018round5,jin2016optimal}. We remark that one piece of evidence in favor of this conjecture is a reduction from a less standard variant of $\lwe$ in which the usual discrete Gaussian errors are replaced by errors uniformly sampled from the integers $\{-q/2p,\ldots,q/2p\}$ \cite{bogdanov2016hardness}.
	
	Note also that for our purposes we require \emph{quasipolynomial}-time hardness (or $T(n)$-hardness for $T(n)$ being any other fixed, mildly superpolynomial function of the security parameter) of $\lwr$. While slightly stronger than standard polynomial-time hardness, this remains a reasonable assumption since algorithms for worst-case lattice problems are still believed to require at least subexponential time.
	
	\subsection{Partial assignments}
	Let $\alpha \in \brc{0,1,\star}^d$ be a \emph{partial assignment}. We refer to $S(\alpha): \brc{i\in[d]: \alpha_i = \star}\subseteq[d]$ as the set of \emph{free variables} and $[d]\backslash S(\alpha)$ as the set of \emph{fixed variables}. Given two partial assignments $\alpha,\beta$, let the \emph{resolution} $\alpha\searrow\beta$ denote the partial assignment $\gamma$ obtained by substituting $\alpha$ into $\beta$. That is,
	\begin{equation}
		\gamma_i = \begin{cases}
			\star & i\in S(\alpha) \cap S(\beta) \\
			\beta_i & i\in[d]\backslash S(\beta) \\
			\alpha_i & i\in S(\beta)\backslash S(\alpha)
		\end{cases}
	\end{equation}
	
	In this case we say that $\gamma$ is a \emph{refinement} of $\beta$ that is the \emph{result of applying $\alpha$}. We write $\gamma\in\App(\alpha)$ to denote that $\gamma$ is a result of applying $\alpha$. Note that the set of refinements of $\beta$ consists of all $3^{|S(\beta)|}$ partial assignments $\gamma\in\brc{0,1,\star}^d$ which agree with $\beta$ on all fixed variables of $\beta$.
	
	Given $\alpha$, let $w(\alpha)$ denote $|\brc{i: \alpha_i = 1}|$, that is, the Hamming weight of its fixed variables. Note that $w(\alpha\searrow\beta) \le w(\alpha) + w(\beta)$.
	
	Given a function $h:\R^d\to\R$ and partial assignment $\gamma$, we use $h_{\gamma}:\R^d\to\R$ to denote its partial restriction given by substituting in $\gamma_i$ into the $i$-th input coordinate if $\gamma_i\in\brc{0,1}$. Note that given two partial restrictions $\alpha,\beta$,
	\begin{equation}
		(h_\beta)_\alpha = h_{\alpha\searrow\beta} \label{eq:chain}
	\end{equation}
	We say that $\alpha$ is \emph{sorted} if the restriction of $\alpha$ to its fixed variables is sorted in nonincreasing order, e.g. $\alpha = (1,\star,1,\star,\star,0,0)$ is sorted, but $\alpha = (1,\star,0,\star,\star,0,1)$ is not. Given $\alpha$ which is not necessarily sorted, denote its \emph{sorting} by $\overline{\alpha}$. In general, we will use overline notation to denote sorted partial assignments.
	
	\section{Compressing the Daniely--Vardi Lift}\label{sec:lift}
	
	In this section we show how to refine the lifting procedure of Daniely and Vardy~\cite{daniely2021local} such that whenever the underlying discrete functions satisfy a property we term \emph{compressibility}, we obtain hardness under the Gaussian for networks with just one extra hidden layer.
	
	\begin{definition}\label{def:compressible}
		Let $q > 0$ be a modulus.\footnote{Our results are stronger when $q$ is taken to be a large polynomial in the dimension, but the Boolean $q = 2$ case is illustrative of all the main ideas.} We call an $L$-hidden-layer ReLU network $f:\Z_q^d \to [0,1]$ \emph{compressible} if it is expressible in the form $f(x) = \sigma(h(x))$, where \begin{itemize}[nosep]
			\item $h : \Z_q^d \to T$ is an $(L-1)$-hidden-layer network such that $|h(x)| \leq \poly(d)$ for all $x$;
			\item $h$ has range $T = h(\Z_q^d)$ such that $T \subseteq \Z$ and $|T| \leq \poly(d)$; and
			\item $\sigma : T \to [0,1]$ is a mapping from $h$'s possible output values to $[0,1]$.
		\end{itemize} 
	\end{definition}
	\begin{remark}
		To see why such an $f$ is an $L$-hidden-layer  network in $z$, consider the function $\sigma : T \to \R$. Because $T \subseteq \Z$ and $|T| \leq \poly(d)$, $\sigma$ is expressible as (the restriction to $T$ of) a piecewise linear function on $\R$ whose size and maximum slope are $\poly(d)$, and hence as a $\poly(d)$-sized one-hidden-layer  ReLU network from $\R$ to $\R$. By composition, $x \mapsto \sigma(h(x))$ can be represented by an $L$-hidden-layer  network.
	\end{remark}
	
	We now formally state a theorem which captures our ``compressed'' version of the DV lift. The version of this theorem for $L+2$ layers is implicit in \cite{daniely2021local}. In technical terms, our improvement consists of removing the single outer ReLU present in their construction. Thus, while our construction still has three \emph{linear} layers, it has only two \emph{non-linear} layers.
	
	\begin{theorem}[Compressed DV lift]
		\label{thm:continuous-lift}
		Let $q = \poly(d)$ be a modulus. Let $\calC$ be a class of compressible $L$-hidden-layer $\poly(d)$-sized ReLU networks mapping $\Z_q^d$ to $[0, 1]$. Let $m = m(d) = \omega_d(1)$ be a size parameter that grows slowly with $d$. There exists a class $\calC^{\liftsym}$ of $(L+1)$-hidden-layer $d^{\Theta(m)}$-sized ReLU networks mapping $\R^d$ to $[0,1]$ such that the following holds:
		
		Suppose there is an efficient algorithm $A$ capable of learning $\calC^{\liftsym}$ over $\calN(0, \Id_d)$ up to squared loss $d^{-\Theta(m)}$. Then there is an efficient algorithm $B$ capable of \emph{weakly predicting} $\calC$ over $\unif(\Z_q^d)$ with advantage $d^{-\Theta(m)}$ over guessing the constant $1/2$ in the following sense:  given access to labeled examples $(x, f(x))$ for $x \sim \unif(\Z_q^d)$ and an unknown $f \in \calC$, $B$ satisfies \[ \ex \big[ \big(B(x) - f(x)\big)^2 \big] < \ex \big[ \big(\frac{1}{2} - f(x)\big)^2 \big] - d^{-\Theta(m)}, \] where the probability is taken over both $x$ and the internal randomness of $B$.
		We refer to $\calC^{\liftsym}$ as the \emph{lifted class} corresponding to $\calC$.
	\end{theorem}
	
	By a standard padding argument, we obtain the following corollary which lets us work with polynomial-sized neural networks.
	
	\begin{corollary}[Compressed DV lift with padding]\label{cor:padded-lift}
		Let $q$, $m$ and $d$ be as above, and let $d' = d^{m}$. View $\calC$ and $\calC^{\liftsym}$ as function classes on $\Z_q^{d'}$ and $\R^{d'}$ respectively, defined using only the first $d$ coordinates, so that $\calC^{\liftsym}$ is now a $\poly(d')$-sized class over $\R^{d'}$. Then an algorithm capable of learning $\calC^{\liftsym}$ over $\calN_{d'}$ up to squared loss $1/\poly(d')$ implies a weak predictor for $\calC$ over $\unif(\Z_q^{d'})$ with advantage $1/\poly(d')$.
	\end{corollary}
	
	
	\subsection{The DV Lift}
	\label{subsec:dv_overview}
	
	Before proceeding to the proof of \cref{thm:continuous-lift}, we first outline the idea of the original DV lift in the setting of Boolean functions ($q = 2$). The goal is to approximate any given $f \in \calC$ by a ReLU network $f^{\dv} : \R^d \to \R$ in such a way that $f^{\dv}$ under $\calN_d$ behaves similarly to $f$ under $U_d$. As a first attempt, one might consider the function $f^\star(z) = f(\sgn(z))$ (also studied in \cite{klivans2014embedding}), where recall that $\sgn(t) = \ind[t > 0]$. We could implement the following reduction: given a random example $(x, y)$ where $x \sim U_d$ and $y=f(x)$, draw a fresh half-Gaussian $g \sim |\calN_d|$ and output $((2x-1)g, y)$ (where the arithmetic in defining the vector $(2x-1)g$ is done elementwise). Since $2x-1$ is distributed uniformly over $\cube{d}$, the marginal is exactly $\calN_d$, and the labels are consistent with $f^\star$ since $\sgn((2x-1)g) = x$ and so $f(\sgn((2x-1)g)) = f(x)$. However, the issue is that the $\sgn$ function is discontinuous, and so $f^\star$ is not realizable as a ReLU network.
	
	Daniely and Vardi address this concern by devising a clever construction for $f^{\dv}$ that interpolates between two desiderata:
	\begin{itemize}[nosep]
		\item For all but a small fraction of inputs, an initial layer successfully ``Booleanizes'' the input. In this case, one would like $f^{\dv}(z)$ to simply behave as $f(\sgn(z))$.
		\item For the remaining fraction of inputs, we would ideally like $f^{\dv}$ to output an uninformative value such as zero, but this would violate continuity of $f^{\dv}$.
	\end{itemize}
	The trick is to use a continuous approximation of the sign function, $N_1$, that interpolates linearly between $0$ and $1$ on an interval $[-\delta, \delta]$ (see \cref{fig:sgn}), and to pair it with a ``soft indicator'' function $N_2 : \R \to \R$ for the region where $N_1 \neq \sgn$. Concretely, $N_2(t)$ is constructed as a one-hidden-layer ReLU network that (a) is always nonnegative, (b) equals $0$ when $|t| \geq 2\delta$, and (c) equals $1$ when $|t| \leq \delta$ (see \cref{fig:ind}). Now let $N_2'(z) = \sum_j N_2(z_j)$, and define \begin{equation}\label{eq:og-dv-lift}
		f^{\dv}(z) = \relu(f(N_1(z)) - N_2'(z)).
	\end{equation} One can show that $f^{\dv}$ satisfies $ f^{\dv}(z) = \relu(f(\sgn(z)) - N_2'(z))$, since $N_2'$ ``zeroes out'' $f^{\dv}$ wherever $N_1 \neq \sgn$ for any coordinate. This lets us perform the following reduction: given examples $(x, y)$ where $x \sim U_d$ and $y = f(x)$, draw a fresh $g \sim |\calN_d|$ and output $(z, \wt{y}) = ((2x-1)g, \relu(y - N_2'((2x-1)g)))$. The marginal is again $\calN_d$, and the labels are easily seen to be consistent with $f^{\dv}$. Correctness of the reduction can be established by using Gaussian anticoncentration to argue that $f^{\dv}$ is a good approximation of $f$. Formally, one can prove the following theorem.
	
	\begin{theorem}[Original DV lift, implicit in \cite{daniely2021local}]\label{thm:og-dv}
		Let $\calC$ be a class of $L$-hidden-layer $\poly(d)$-sized ReLU networks mapping $\{0,1\}^d$ to $[0, 1]$. There exists a class $\calC^{\dv}$ of $(L+2)$-hidden-layer $\poly(d)$-sized ReLU networks mapping $\R^d$ to $[0,1]$ such that the following holds. Suppose there is an efficient algorithm $A$ capable of learning $\calC^{\dv}$ over $\calN(0, \Id_d)$ up to squared loss $\frac{1}{64}$. Then there is an efficient algorithm $B$ capable of weakly predicting $\calC$ over $\unif\{0,1\}^d$ with squared loss $\frac{1}{16}$.
	\end{theorem}
	
	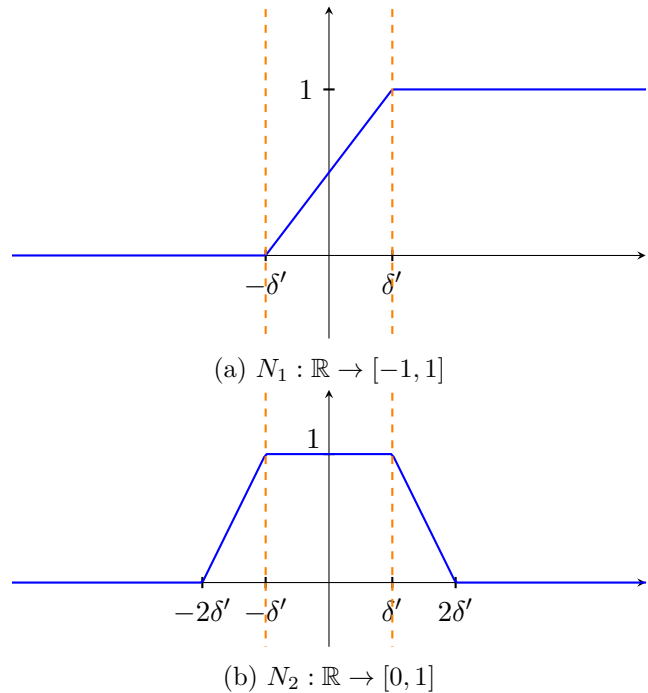
\begin{figure}[H]
		\label{fig:combined}
		\begin{subfigure}{\linewidth}
			\centering
			\begin{tikzpicture}
				\begin{axis}[
					width=10cm, height=6cm,
					axis x line=center, 
					axis y line=middle, 
					samples=500,
					ymin=-0.5, ymax=1.5,
					xmin=-1, xmax=1,
					domain=-2:2,
					xtick={-0.2, 0.2},
					xticklabels={$-\delta'$, $\delta'$},
					ytick={-1, 1},
					every tick/.style={black,thick},
					]
					\addplot [mark=none, thick, blue] {2.5*(max(x + 0.2,0) - max(x-0.2,0))};
					\addplot +[dashed, thick, orange] coordinates {(-.2, -2) (-.2, 2)};
					\addplot +[dashed, thick, orange] coordinates {(.2, -2) (.2, 2)};
				\end{axis}
			\end{tikzpicture}
			\subcaption{$N_1 : \R \to [-1,1]$}\label{fig:sgn}
		\end{subfigure}
		\begin{subfigure}{\linewidth}
			\centering
			\begin{tikzpicture}
				\begin{axis}[
					width=10cm, height=5cm,
					axis x line=center, 
					axis y line=middle, 
					samples=500,
					ymin=-0.5, ymax=1.5,
					xmin=-1, xmax=1,
					domain=-2:2,
					xtick={-0.4, -0.2, 0.2, 0.4},
					xticklabels={$-2\delta'$, $-\delta'$, $\delta'$, $2\delta'$},
					ytick={1},
					yticklabels={$1$},
					yticklabel style={xshift=0.1cm,yshift=0.2cm},
					every tick/.style={black,thick},
					]
					\addplot [mark=none, thick, blue, samples=500] {
						5*(max(x + 0.4, 0) + max(x - 0.4, 0) - max(x + 0.2, 0) - max(x - 0.2, 0))
					};
					\addplot +[dashed, thick, orange] coordinates {(-.2, -5) (-.2, 5)};
					\addplot +[dashed, thick, orange] coordinates {(.2, -5) (.2, 5)};
				\end{axis}
			\end{tikzpicture}
			\subcaption{$N_2 : \R \to [0,1]$}\label{fig:ind}
		\end{subfigure}
		\caption{Schematic plots of $N_1$ and $N_2$ in the $q=2$ case, where $N_2'(z)$ may be realized as $\sum_{j \in [d]} N_2 (z_j)$. Here, $\delta' = \Theta(\delta)$ where $\delta$ is the parameter from \cref{lem:n1,,lem:n2}.} 
	\end{figure}
	
	
	We now show how to construct the gadgets $N_1$ and $N_2$, extending them to make them suitable for working with $\Z_q$ for general $q$ as opposed to just $\{0, 1\}$. These constructions utilize the simple but important property that piecewise linear functions on the real line are readily and efficently realized as linear combination of ReLUs.
	
	Start by letting $I_0, I_1, \dots, I_{q-1}$ be a partition of $\R$ into $q$ consecutive intervals each of mass $1/q$ under $\calN(0,1)$ (e.g., when $q = 2$, $I_0 = (-\infty, 0)$ and $I_1 = (0, \infty)$). Note that these intervals will have differing lengths, and the shortest ones will be the ones closest to the origin. Still, by Gaussian anti-concentration, we know that each $|I_j| \geq \Theta(1/q)$. Let $\thres_q : \R \to \Z_q$ be the piecewise constant function that takes on value $k$ on $I_k$. Clearly, when $t \sim \calN(0,1)$, $\thres_q(t) \sim \unif(\Z_q)$. Let $R_1, \dots, R_{q}$ be intervals such that $R_k \subseteq I_{k-1} \cup I_k$ and $R_k$ contains the boundary point between $I_{k-1}$ and $I_k$, and such that each $R_k$ has mass $\delta / q$ for some $\delta \ll 1$ to be picked later. Let $S_1, \dots, S_q$ be slightly larger intervals such that $R_k \subset S_k$ for each $k \in [q-1]$, and each $S_k$ has mass $2\delta/q$. By Gaussian anti-concentration again, each $|S_k| \geq \Theta(\delta / q)$. Notice that by construction, $\pr_{z \sim \calN(0,1)}[z \in \cup_k R_k] = \delta$ and $\pr_{z \sim \calN(0,1)}[z \in \cup_k S_k] = 2\delta$.
	
	\begin{lemma}\label{lem:n1}
		Let $\delta > 0$, $q > 0$, and intervals $I_k, R_k, S_k$ for $k \in \Z_q$ be as above. There exists a one-hidden-layer ReLU network $N_1 : \R \to \R$ with $O(q)$ units and weights of magnitude $O(q/\delta)$ such that $N_1(t) = \thres_q(t)$ if $t \notin \cup_k R_k$.
	\end{lemma}
	\begin{proof}
		This can be done by considering the piecewise linear function that approximates the function $\thres_q$ by matching it exactly on $\R \setminus \cup_k R_k$, and interpolating  linearly between values $k-1$ and $k$ on the interval $R_k$ for each $k \in [q-1]$.
	\end{proof}
	
	\begin{lemma}\label{lem:n2}
		Let $\delta > 0$, $q > 0$, and intervals $I_k, R_k, S_k$ for $k \in \Z_q$ be as above. There exists a one-hidden-layer ReLU network $N_2 : \R \to [0, 1]$ with $O(q)$ units and weights of magnitude $O(q/\delta)$ such that \[
		N_2(t) \text{ is } \begin{cases}
			= 1 & \text{ if } t \in \cup_k R_k \\
			= 0 & \text{ if } t \in \R \setminus \cup_k S_k \\
			\geq 0 & \text{ otherwise}
		\end{cases} .
		\]
	\end{lemma}
	\begin{proof}
		Consider the piecewise linear function that is $0$ on $\R \setminus \cup_k S_k$, is $1$ on $\cup_k R_k$, and interpolates linearly between $0$ and $1$ (or $1$ and $0$) on $S_k \setminus R_k$ for every $k \in [q-1]$. Put differently, the graph of $N_2$ consists of a trapezoid on each $S_k$ that achieves its maximum value of $1$ on $R_k$.
	\end{proof}
	
	\subsection{Saving One Hidden Layer via Compressibility}\label{subsec:comp-lift}
	
	The starting point for exploiting compressibility to avoid a hidden layer in the lift is as follows. Compressibility lets us express $f(x)$ as $\sigma(h(x))$ for some $h : \Z_q^d \to T$ with a $\poly(d)$-sized range $T \subseteq \Z$, and some $\sigma : T \to [0, 1]$. So we can write \[ f(x) = \sigma(h(x)) = \sum_{t^* \in T} \sigma(t^*)\ind[h(x) = t^*]. \] We would like a lifted function $f^{\liftsym} : \R^d \to \R$ (where we introduce $f^{\liftsym}$ as notation to distinguish our lift from the original DV lift, denoted $f^{\dv}$) such that $f^{\liftsym}(z)$ behaves like $\sigma(h(\thres_q(z)))$ except when $N_2$ indicates that $N_1 \neq \thres_q$, in which case we want $f^{\liftsym}(z) = 0$. To this end, we might hope to write \[ f^{\liftsym}(z) = \sum_{t^* \in T} \sigma(t^*) \ind[h(N_1(z)) = t^*] \ind[\forall j : N_2(z_j) < 1]. \] Suppose that we could build a one-hidden-layer network $N_3(s_1, \dots, s_d; t)$ that behaves like $\ind[t = 0]\ind[\forall j : s_j < 1]$. Then we could realize $f^{\liftsym}$ as \[ f^{\liftsym}(z) = \sum_{t^* \in T} \sigma(t^*) N_3(N_2(z_1), \dots, N_2(z_d);\ h(N_1(z)) - t^*). \] Notice that whenever $N_2(z_j) = 1$ for any coordinate $j$, this expression vanishes. Otherwise, we know that $h(N_1(z)) = h(\thres_q(z))$, which takes values in $T$, so that only the summand with $t^* = h(\thres_q(z))$ survives and the expression simplifies to $f(\thres_q(z)) N_3(N_2(z_1), \dots, N_2(z_d); 0)$. It is not hard to show that this is sufficient to let us complete the required reduction. Moreover, because $N_3$ is a one-hidden-layer network in its arguments, and because both $h \circ N_1$ and $N_2$ have at most $L$ hidden layers (for $h \circ N_1$, one comes from $N_1$ and $L-1$ from $h$; for $N_2$, it itself has just one hidden layer), this implementation of $f^{\liftsym}$ would have only $L+1$ hidden layers.
	
	Slightly more generally, one can show that it would suffice to build a one-hidden-layer network $N_3$ with the following properties: \begin{equation}
		N_3(s_1, \dots, s_d; t) =
		\begin{cases}
			0 & \text{if } \exists j : s_j = 1 \\
			0 & \text{if } t \in \Z \setminus \{0\} \\
			1 & \text{if } \forall j : s_j = 0 \text{ and } t = 0
		\end{cases} \label{eq:ideal}
	\end{equation}
	
	Unfortunately, most natural attempts to construct $N_3$ with such ideal properties --- in particular, all formulations of $N_3$ purely as a function of two variables, $\sum_j s_j$ and $t$, which was the approach taken in \cite{daniely2021local} --- run into difficulties and appear to require \emph{two} hidden layers (see Appendix~\ref{app:barriers} for discussion). One approach that does almost work, however, comes at the cost of exponential size. Let $\psi(s_1, \dots, s_d; t)$ be any function that vanishes whenever $t \in \Z \setminus \{0\}$ (for all $s_1, \dots, s_d \in [0,1]^d$). For simplicity, let us consider the $d = 3$ case. Consider the following expression that resembles the inclusion-exclusion formula: \begin{align}\label{eq:pie-3d}
		&\psi(s_1, s_2, s_3; t) - \psi(1, s_2, s_3; t) - \psi(s_1, 1, s_3; t) - \psi(s_1, s_2, 1; t) \\
		&+ \psi(s_1, 1, 1; t) + \psi(1, s_2, 1; t) + \psi(s_1, 1, 1; t) - \psi(1, 1, 1; t)
	\end{align} Notice that whenever any $s_j = 1$, this expression vanishes identically. Moreover, for any $t \in \Z \setminus \{0\}$ (and any $s_1, \dots, s_d$), the expression vanishes again because each summand vanishes. Thus the first two properties are satisfied; the third property turns out to be more subtle, and we will ignore it for the moment. The natural generalization of this expression to general $d$ can be stated in the language of partial assignments.
	
	\begin{lemma}
		Let $\psi : \R^d \to \R$ be any function. Let $\mathcal{P}_i$ denote the set of partial assignments $\gamma \in \{1,\star\}^d$ with $i$ 1s. The expression \begin{equation}\label{eq:full-pie}
			\sum_{i=0}^{d} \sum_{\gamma \in \mathcal{P}_i} (-1)^{i} \psi_{\gamma}
		\end{equation}
		vanishes whenever any $s_j = 1$. (We may view $t$ as an additional parameter that is always left free, as in \cref{eq:pie-3d})
	\end{lemma}
	\begin{proof}
		For concreteness, suppose $s_1 = 1$. Let $\mathcal{P}^\star_i$ (resp.\ $\mathcal{P}^1_i$) denote the set of $\gamma \in \mathcal{P}_i$ with $s_1 = \star$ (resp.\ $s_1 = 1$). For every $i \in \{0, \dots, d-1\}$, we can form a bijection between $\mathcal{P}^\star_i$ and $\mathcal{P}^1_{i+1}$ using the map $\gamma \mapsto \gamma'$ where $\gamma' = (1, \gamma_2, \dots, \gamma_d)$. When $s_1 = 1$, for every such pair $(\gamma, \gamma')$, we have $\psi_\gamma = \psi_{\gamma'}$, and moreover they occur in \eqref{eq:full-pie} with opposite signs. Thus the entire expression vanishes.
	\end{proof}
	Let us assume for now that $\psi$ is picked suitably and the rest of the reduction goes through with this construction (as one can verify when we come to the proof of \cref{thm:continuous-lift}, this would indeed be the case). This construction has size $2^d$, meaning that the resulting lifted functions would have size $S = \poly(2^d)$.  But by \cref{thm:lwr-sq-lb}, the SQ lower bound for the $\lwr$ functions over $\Z_q^n$ with $n = d$ and $q = \poly(n)$ scales as $q^{\Omega(n)} = 2^{\Omega(d \log d)} = S^{\Omega(\log \log S)}$, which is still superpolynomial in $S$. Thus after padding the dimension to $d' = 2^d$, this construction would actually still yield a superpolynomial SQ lower bound for two-hidden-layer ReLU networks over $\R^{d'}$.
	
	Instead of pursuing this route, however, we give a more efficient construction that has size only slightly superpolynomial in $d$. The key idea is to restrict attention to those possibilities for $(s_1, \dots, s_d) = (N_2(z_1), \dots, N_2(z_d))$ that are the most likely. Specifically, if $m = \omega_d(1)$ is the size parameter from \cref{thm:continuous-lift}, then by setting $\delta$ in \cref{lem:n1,,lem:n2} appropriately, we can ensure that with overwhelming probability over $z \sim \calN(0,\Id)$, no more than $m$ of the $N_2(z_j)$ are simultaneously $1$. Accordingly, we focus on constructing $N_3$ such that \begin{equation}
		N_3(s_1, \dots, s_d; t) =
		\begin{cases}
			0 & \text{if between 1 and $m$ of the $s_i$ are 1} \\
			0 & \text{if } t \in \Z \setminus \{0\} \\
			1 & \text{otherwise}
		\end{cases} .
	\end{equation}
	
	We now describe a $d^{\Theta(m)}$-sized construction for $N_3$ that satisfies the first and second properties exactly, and ``approximately'' satisfies the third in the sense that it takes on a nonzero value with nonnegligible probability over its inputs. As we will see later, this turns out to be enough for the reduction to go through. The construction retains the spirit of using a linear combination of partial restrictions.
	
	\begin{lemma}[Main lemma]\label{lem:main_truncation}
		Let $m = m(d) = \omega_d(1)$ be a size parameter. Let $\calA$ denote the set of all partial assignments $\alpha \in \{0,1,\star\}^{d}$ for which $|S(\alpha)| = m$ and $w(\alpha) = 1$. Let $\calB$ denote the set of all sorted partial assignments given by refining some element of $\calA$ and sorting. Given $i,j\ge 0$, let $\calB_{i,j}$ denote the set of $\overline{\beta}\in\calB$ for which $|S(\overline{\beta})| = i$ and $w(\overline{\beta}) = j$. For any symmetric function $\psi:\R^d\to\R$, define the function
		\begin{equation}
			\psi^* \triangleq \psi - \sum^m_{i=0} \sum^{m+1-i}_{j=1}  (-1)^{m-i} \cdot \lambda_{i+j} \sum_{\overline{\beta}\in\calB_{i,j}} \psi_{\overline{\beta}}, \qquad \text{for} \ \lambda_k \triangleq \binom{d-k-1}{m-k+1}
		\end{equation}
		Then
		\begin{enumerate}[label=(\alph*)]
			\item $|\calB| \le \binom{d}{m}(d-m)\cdot 3^m$
			\item $\psi^*$ is symmetric
			\item $\psi^*_{\alpha}:\R^d\to\R$ is the identically zero function for all $\alpha \in \calA$.
		\end{enumerate}
	\end{lemma}
	
	\begin{lemma}\label{lem:n3}
		Let \[ \psi(s_1, \dots, s_d; t) = \sum^d_{i=1} \relu\bigg(t - \Big(s_i - \frac{1}{d-1}\sum_{j\neq i} s_j\Big)\bigg) - \relu(dt), \] viewed as a function of $s_1, \dots, s_d$ parameterized by $t$, and let $\psi^*$ be as above. Define $N_3(s_1, \dots, s_d; t) = \psi^*(s_1, \dots, s_d; t)$. Then \begin{enumerate}[label=(\alph*)]
			\item $N_3(s_1, \dots, s_d; t) = 0$ for any $t \in \R$ if between 1 and $m$ of the $s_j$ are 0
			\item $N_3(s_1, \dots, s_d; t) = 0$ for any $s_1, \dots, s_d \in [0,1]^d$ if $t \in \Z \setminus \{0\}$
			\item $N_3$ has size at most $d^{2m}$
			\item $N_3(\underbrace{0, \dots, 0}_{d-1}, s; 0) = s$ for any $s \in [0, \frac{1}{d}]$.
		\end{enumerate}
	\end{lemma}
	
	Before proceeding to the proofs of \cref{lem:main_truncation,,lem:n3}, let us see how to use them to prove \cref{thm:continuous-lift}. 
	
	\begin{proof}[Proof of \cref{thm:continuous-lift}]
		For each $f \in \calC$ given by $f = \sigma \circ h$, let $f^{\liftsym} \in \calC^{\liftsym}$ be given by
		\begin{equation}\label{eq:flift-def}
			f^{\liftsym}(z) = \sum_{t^* \in T} \sigma(t^*) N_3(N_2(z_1), \dots, N_2(z_d);\ h(N_1(z)) - t^*),
		\end{equation} where $N_1$ and $N_2$ are from \cref{lem:n1,,lem:n2}, with the $\delta$ parameter set to $d^{-10m}$, and $N_3$ is from \cref{lem:n3}. This is an $(L+1)$-hidden layer network since $h \circ N_1$ and $N_2$ each have at most $L$ hidden layers, and $N_3$ adds an additional layer. By \cref{lem:n3}(c), the size of this network is $S = d^{\Theta(m)}$. Note that whenever $z$ is such that $N_2(z_1), \dots, N_2(z_d) < 1$, then: \begin{itemize}
			\item $N_1(z) = \thres_q(z)$, and so $h(N_1(z)) = h(\thres_q(z))$ takes only integer values in $T = h(\Z_q^d)$; and
			\item the only $t^*$ for which one of the summands in \cref{eq:flift-def} is potentially nonzero is the one given by $t^* = h(\thres_q(z))$.
		\end{itemize} Thus in this case $f^{\liftsym}$ simplifies to \begin{align} f^{\liftsym}(z) &= \sigma(h(\thres_q(z)))\ N_3(N_2(z_1), \dots, N_2(z_d);\ 0) \\
			&= f(\thres_q(z))\ N_3(N_2(z_1), \dots, N_2(z_d);\ 0). \label{eq:flift-simplified}
		\end{align} Further, for $z$ such that between 1 and $m$ of the $N_2(z_j)$ are 1, we know that $\psi(N_2(z_1), \dots, N_2(z_d);\ t) = 0$ identically (for all $t \in \R$), so in this case $f^{\liftsym}(z) = 0$. And finally, for $z$ such that more than $m$ of the $N_2(z_j)$ are 1, we have no guarantees on the behavior of $f^{\liftsym}$, but as we now show, we have set parameters such that this case occurs only with negligible probability, and we can pretend that $0$ is still a valid label in this case. Indeed, by standard Gaussian anti-concentration, for each coordinate $z_j$ we have $\pr_{z_j}[N_2(z_j) = 1] = \pr_{z_j}[z_j \in \cup_{k} R_k] = \delta = d^{-10m}$. The number of coordinates $j$ for which $N_2(z_j) = 1$ thus follows a binomial distribution $B(d, d^{-10m})$, which has a decreasing pdf with unique mode at $\floor{(d+1)d^{-10m}} = 0$. Thus the probability of having at least $m$ 1s is at most \begin{equation}\label{eq:badcase-prob}
			\sum_{i=m}^{d} \binom{d}{i} (d^{-10m})^{i}(1 - d^{-10m})^{d-i} \leq (d - m + 1) \binom{d}{m} d^{-10m^2} \leq d d^{m} d^{-10m^2} \leq d^{-9m^2}
		\end{equation}
		for sufficiently large $d$. This is negligibly small not only in $d$ but in the size of the network, $S = d^{\Theta(m)}$.
	
		We now describe the reduction. For each labeled example $(x, y)$ that the discrete learner $B$ receives, where $x \sim \unif(\Z_q^d)$ and $y = f(x)$ for an unknown $f \in \calC$, $B$ forms a labeled example $(z, \wt{y})$ for the Gaussian learner $A$ as follows. For each coordinate $j \in [d]$, $z_j$ is drawn from $\calN(0, 1)$ conditioned on $z_j \in I_{x_j}$. Notice that this way $\thres_q(z) = x$, and the marginal distribution on $z$ is exactly $\calN_d$. The modified label is given by \begin{equation}
			\wt{y} = \wt{y}(y, z) = 
			\begin{cases}
				0 & \text{if more than $m$ of the $N_2(z_j)$ are 1} \\
				0 & \text{if between 1 and $m$ of the $N_2(z_j)$ are 1} \\
				y\ N_3(N_2(z_1), \dots, N_2(z_d);\ 0) & \text{otherwise}
			\end{cases} \label{eq:y-tilde}
		\end{equation} Note that in the bottom two cases, $\wt{y} = f^{\liftsym}(z)$ exactly; in the top case $\wt{y}$ is in general inconsistent with $f^{\liftsym}$, but as we have seen, this case occurs with $\negl(S)$ probability. In particular, with overwhelming probability, no $\poly(S)$-time algorithm will ever see non-realizable samples.
	
		So $B$ can feed these new labeled examples $(z, \wt{y})$ to $A$. Suppose $A$ outputs a hypothesis $\hat{f} : \R^d \to \R$ such that $\ex_{z \sim N_d}[(\hat{f}(z) - f^{\liftsym}(z))^2] \leq \epsilon$. We need to show $B$ can convert this hypothesis into a nontrivial one for its discrete problem. We first define a ``good region'' $G \subseteq \R^d$ where $f^{\liftsym}$ is guaranteed to be nonzero and nontrivially related to the original $f$ by saying $z \in G$ iff $N_2(z_1), \dots, N_2(z_{d-1}) = 0$, and $N_2(z_d) \in (\frac{1}{2d}, \frac{1}{d})$. Observe that when $z \in G$, by \cref{eq:flift-simplified} and \cref{lem:n3}(d) we have \begin{align}
			f^{\liftsym}(z) &= f(\thres_q(z)) N_3(N_2(z_1), \dots, N_2(z_{d-1}), N_2(z_d); 0) \\
			&= f(x) N_3(0, \dots, 0, N_2(z_d); 0) \\
			&= y N_2(z_d), \label{eq:flift-goodregion}
		\end{align} where we use the fact that $\thres_q(z) = x$, so that $f(\thres_q(z)) = f(x) = y$. Let us compute the probability mass of $G$. For coordinates $j \in [d-1]$, note that $\pr[N_2(z_j) = 0] = \pr[z_j \notin \cup_k S_k] = 1 - 2\delta = 1 - d^{-\Theta(m)}$. For $z_d$, we need a lower bound on the probability that $N_2(z_d) \in (\frac{1}{2d}, \frac{1}{d})$. Consider the behavior of $N_2$ on just the interval $S_k$ that is closest to the origin (which will be $k = \ceil{q/2}$): it changes linearly from $0$ to $1$ (and again from $1$ to $0$) on $S_k \setminus R_k$. It is not hard to see that $N_2$ takes values in $(\frac{1}{2d}, \frac{1}{d})$ on a $O(1/d)$ fraction of $S_k$. Since the Gaussian pdf will be at least some constant on all of $S_k$, the probability that $z_d$ lands in this fraction of $S_k$ is $\Omega(|S_k|/d) = \Omega(\delta / qd) \geq d^{-\Theta(m)}$. Overall, we get that \[ \pr[z \in R] = \pr\Big[N_2(z_d) \in \Big(\frac{1}{2d}, \frac{1}{d}\Big)\Big] \prod_{j \in [d-1]}\pr[N_2(z_j) = 0]  \geq (1 - d^{-\Theta(m)})^{d-1} d^{-\Theta(m)} = d^{-\Theta(m)}, \] which is still $1/\poly(S)$ and hence non-negligible in the size $S$ of the network.
		
		The discrete learner $B$ can now adapt $\hat{f}$ as follows. Given a fresh test point $x \sim \unif(\Z_q^d)$, the learner forms $z$ such that for each coordinate $j \in [d]$, $z_j$ is drawn from $\calN(0, 1)$ conditioned on $z_j \in I_{x_k}$; for brevity, we shall denote the random variable $z$ conditioned on $x$ (formed in this way) by $z | x$. If $z \in G$, then $B$ predicts $\hat{y} = \frac{\hat{f}(z)}{N_2(z_d)}$ (recall that when $z \in z$, $N_2(z_d) > \frac{1}{2d}$), and otherwise it simply predicts $\wt{y} = \frac{1}{2}$. The square loss of this predictor is given by \begin{align}
			\ex_{x \sim \unif(\Z_q^d)} [(\hat{y} - f(x))^2] &= \ex_{x} \ex_{z|x} [(\hat{y} - f(x))^2] \\
			&= \ex_{x, z|x} [(\hat{y} - f(x))^2 \mid z \in G] \pr[z \in G] + \ex_{x, z|x} [(\hat{y} - f(x))^2 \mid z \notin G] \pr[z \notin G] \\
			&= \ex_{x, z|x} \Big[\Big(\frac{\hat{f}(z)}{N_2(z_d)} - f(x)\Big)^2 \mid z \in G\Big] \pr[z \in G] + \ex_{x, z|x} \Big[\Big(\frac{1}{2} - f(x)\Big)^2 \mid z \notin G\Big] \pr[z \notin G] \\
			&= \ex_{x, z|x} \Big[\Big(\frac{\hat{f}(z)}{N_2(z_d)} - \frac{f^{\liftsym}(z)}{N_2(z_d)}\Big)^2 \mid z \in G\Big] \pr[z \in G] + \ex_{x} \Big[\Big(\frac{1}{2} - f(x)\Big)^2\Big] \pr[z \notin G] \tag{by \cref{eq:flift-goodregion}, when $z \in G$, $f^{\liftsym}(z) = f(x) N_2(z_d)$} \\
			&< 4d^2 \ex_{z}[(\hat{f}(z) - f^{\liftsym}(z))^2 \mid z \in G]\pr[z \in G] + \ex_{x} \Big[\Big(\frac{1}{2} - f(x)\Big)^2\Big] \pr[z \notin G] \tag{when $z \in G$, $N_2(z_d) > \frac{1}{2d}$} \\
			&\leq 4d^2 \ex_{z}[(\hat{f}(z) - f^{\liftsym}(z))^2] + \ex_{x} \Big[\Big(\frac{1}{2} - f(x)\Big)^2\Big] \pr[z \notin G] \\
			&\leq 4d^2 \epsilon + \ex_{x} \Big[\Big(\frac{1}{2} - f(x)\Big)^2\Big] \pr[z \notin G] \\
			&= \ex_{x} \Big[\Big(\frac{1}{2} - f(x)\Big)^2\Big] + 4d^2 \epsilon - \ex_{x} \Big[\Big(\frac{1}{2} - f(x)\Big)^2\Big] \pr[z \in G].
		\end{align} In the case of the hard classes $\calC$ that we consider, we may assume without loss of generality that $\ex_{x \sim \unif(\Z_q^d)}[(\frac{1}{2} - f(x))^2] \geq 1/\poly(d)$, since otherwise the problem of learning $\calC$ is trivial (in fact, in our applications we will have $\ex_{x \sim \unif(\Z_q^d)}[(\frac{1}{2} - f(x))^2] = \Theta(1)$). This means that by taking \[ \epsilon = \pr[z \in G]/\poly(d) = d^{-\Theta(m)}/\poly(d) = d^{-\Theta(m)} \] sufficiently small (but still $1/\poly(S)$), we may ensure that the square loss of the discrete learner $B$ is at most $\ex_{x \sim \unif(\Z_q^d)}[(\frac{1}{2} - f(x))^2] - d^{-\Theta(m)}$, as desired.
	\end{proof}
	
	\begin{remark}\label{rem:distribution}		
		The only property of the Gaussian $\calN(0, \Id_d)$ used crucially in the proof above is that it is a product distribution $P = \otimes_{i \in [d]} P_i$ where each $P_i$ is suitably anti-concentrated. By some simple changes to the parameters of $N_1$, $N_2$ and $N_3$ (depending on $P$), the proof can be made to work more generally for such distributions $P$.
	\end{remark}
	
	\subsection{Proofs of \cref{lem:main_truncation,,lem:n3}}
	\label{subsec:truncation}
	
	We now detail the proofs involved in the construction of the gadget $N_3$.
	
	\begin{proof}[Proof of \cref{lem:main_truncation}]
		Note that $|\calA| = \binom{d}{m}(d-m)$. Any partial assignment $\beta$ has at most $3^{|S(\beta)|}$ refinements, and $\calB$ is a subset of all refinements of partial assignments from $\calA$, so $|\calB| \le \binom{d}{m}(d-m)\cdot 3^m$.
		
		For the remaining parts of the lemma, it will be useful to observe that $\calB$ consists exactly of all partial assignments with $i$ free variables and $j$ 1s for any $0 \le i \le m$ and $j \ge 1$ satisfying $i + j \le m + 1$.
		
		To prove the second part of the lemma, it suffices to show that
		\begin{equation}
			\sum_{\overline{\beta}\in\calB_{i,j}} h_{\overline{\beta}} \label{eq:singleij}
		\end{equation}
		is symmetric for all $i,j$. 
		As transpositions generate the symmetric group on $d$ elements, it suffices to show that \eqref{eq:singleij} is invariant under swapping two input coordinates, call them $a,b\in[d]$. For all $\overline{\beta}\in\calB_{i,j}$ for which $a,b$ are either both present or both absent in $S(\overline{\beta})$, this clearly does not affect the value of $h_{\overline{\beta}}$. Now consider the set $S_a$ (resp. $S_b$) of partial assignments $\overline{\beta}\in\calB_{i,j}$ for which only $a$ (resp. only $b$) is present in $S(\overline{\beta})$. There is a clear bijection $f: S_a \to S_b$: given $\overline{\beta}\in S_a$, swap the $a$- and $b$-th entries, and vice-versa, and for any $\overline{\beta}\in S_a$, the function $h_{\overline{\beta}} + h_{f(\overline{\beta})}$ is unaffected by the swapping of input coordinates $a,b$. This concludes the proof of the second part of the lemma.
		
		Finally, to prove the third part of the lemma, it suffices to verify it for a single $\alpha\in\calA$, as $h^*$ is symmetric. So consider $\overline{\alpha} = \brc{1,0,\cdots,0,\star,\cdots,\star}$. We apply \eqref{eq:chain} to get
		\begin{align}
			h^*_{\overline{\alpha}} &= h_{\overline{\alpha}} - \sum^m_{i=0} \sum^{m+1-i}_{j=1}  (-1)^{m-i} \cdot \lambda_{i+j}  \sum_{\overline{\beta}\in\calB_{i,j}} h_{\overline{\alpha}\searrow\overline{\beta}} \\
			&= h_{\overline{\alpha}} - \sum_{\overline{\gamma} \in\calB\cap \App(\alpha) \ \text{sorted}} h_{\overline{\gamma}} \cdot \sum^m_{i=0}\sum^{m+1-j}_{j=1} (-1)^{m-i} \cdot \lambda_{i+j} \sum_{\overline{\beta}\in\calB_{i,j}} \ind[\overline{\overline{\alpha}\searrow\overline{\beta}} = \overline{\gamma}] \label{eq:main}
		\end{align}
		
		Note that for $\overline{\gamma} = \overline{\alpha}$, the only $\overline{\beta}\in\calB$ for which $\overline{\overline{\alpha}\searrow\overline{\beta}} = \overline{\gamma}$ is $\overline{\beta} = \overline{\alpha}$. Indeed, for $\overline{\beta}$ to be such that $\overline{\overline{\alpha}\searrow\overline{\beta}} = \overline{\alpha}$, it must have $S(\overline{\beta}) = S(\overline{\alpha})$ and exactly one 1, from which it follows that $\overline{\beta} = \overline{\alpha}$. Since $\overline{\alpha} \in \calB_{m, 1}$, its coefficient in  \eqref{eq:main} is given by
		\begin{equation}
			(-1)^{m-m}\cdot \lambda_{m+1} = 1,
		\end{equation}
		and so the $h_\alpha$ in \eqref{eq:main} cancels with the $\overline{\gamma} = \overline{\alpha}$-th summand in \eqref{eq:main}.
		
		In the rest of the proof, we can thus focus on sorted $\overline{\gamma}\in \calB\cap \App(\alpha) \backslash \brc{\overline{\alpha}}$. Note that such $\overline{\gamma}$ satisfy
		\begin{equation}
			|S(\overline{\gamma})| < m \label{eq:assume}.
		\end{equation}
		To see this, recall that any $\overline{\gamma} \in \calB$ with $|S(\overline{\gamma})| = m$ must have exactly one 1, and since $\overline{\gamma} \in \App(\overline{\alpha})$ it must be that $\overline{\gamma}$ must have $S(\overline{\gamma}) = S(\overline{\alpha})$ and so $\overline{\gamma} = \overline{\alpha}$.
		
		Observe that we must have $\overline{\gamma}_1 = 1$. Indeed, it cannot be 0 because $\overline{\gamma}$ is sorted and has at least one 1. It also cannot be $\star$. To see this, consider any $\overline{\beta}$ for which $\overline{\overline{\alpha}\searrow\overline{\beta}} = \overline{\gamma}$. If we had $\overline{\beta}_1 \neq \star$, then clearly $\overline{\gamma}_1 \neq \star$. If we had $\overline{\beta}_1 = \star$, then $(\overline{\alpha}\searrow\overline{\beta})_1 = 1$ (as $\overline{\alpha}_1 = 1$), so $\overline{\gamma} = \overline{\overline{\alpha}\searrow\overline{\beta}}$ must also have first entry given by 1.
		
		
		We are now ready to calculate the coefficient of $h_{\overline{\gamma}}$ (for each $\overline{\gamma}\in \calB\cap \App(\alpha) \backslash \brc{\overline{\alpha}}$) in \eqref{eq:main} by adding the coefficients of all the $\overline{\beta}\in\calB$ for which 
		\begin{equation}
			\overline{\overline{\alpha}\searrow\overline{\beta}} = \overline{\gamma}. \label{eq:inverse}
		\end{equation}
		
		First let us consider the contribution of $\overline{\beta}\in\calB$ for which $\overline{\beta}_1 = 1$. Observe that such $\overline{\beta}$ must have exactly $w(\overline{\gamma})$ 1s. Furthermore, such a $\overline{\beta}$ is an element of $\calB$ if and only if it has at most $m + 1 - w(\overline{\gamma})$ free variables, and the set of free variables in $\overline{\beta}$ must be $S(\overline{\gamma})\cup V$ where $V$ is any subset of $[d]\backslash(\brc{1} \cup S(\overline{\alpha}))$. The contribution of all such $\overline{\beta}$ to the coefficient of $h_{\overline{\gamma}}$ in \eqref{eq:main} is thus
		\begin{equation}
			\sum^{m+1-w(\overline{\gamma})}_{i = |S(\overline{\gamma})|} (-1)^{m-i}\cdot \lambda_{i+w(\overline{\gamma})} \cdot \binom{d - m - 1}{i - |S(\overline{\gamma})|}, \label{eq:contrib1}
		\end{equation}
		where here the index $i$ denotes the total number of free variables in $\overline{\beta}$, and the factor of $\binom{d-m-1}{i - |S(\overline{\gamma})|}$ is the number of ways to choose $V$.
		
		It remains to consider the contribution from $\overline{\beta}\in\calB$ for which $\overline{\beta}_1 \neq 1$. First note that clearly we cannot have $\overline{\beta}_1 = 0$, as $\overline{\beta}$ is sorted and has at least one 1 because it lies in $\calB$. The only possibility is $\overline{\beta}_1 = \star$, which we split into two cases based on $w(\overline{\gamma})$.
		
		\begin{description}[leftmargin=0cm]
			\item[Case 1: $w(\overline{\gamma}) = 1$.] In this case, we claim that there are no $\overline{\beta}\in\calB$ simultaneously satisfying \eqref{eq:inverse} and $\overline{\beta}_1 = \star$. Suppose to the contrary. Then such a $\overline{\beta}_1$ must have at least one 1 in some other entry (as $\overline{\beta}\in\calB$), but this would imply that the resolution $\overline{\alpha}\searrow\overline{\beta}$ has at least two 1s, a contradiction. The total coefficient of $h_{\overline{\gamma}}$ in this case is thus exactly given by \eqref{eq:contrib1}. Upon substituting $w(\overline{\gamma}) = 1$, this simplifies to
			\begin{equation}
				\sum^{m+1-w(\overline{\gamma})}_{i = |S(\overline{\gamma})|} (-1)^{m-i}\cdot \lambda_{i+1} \cdot \binom{d - m - 1}{i - |S(\overline{\gamma})|} = \sum^{m+1-w(\overline{\gamma})}_{i = |S(\overline{\gamma})|} (-1)^{m-i}\cdot \binom{d-i-2}{d-m-2} \cdot \binom{d - m - 1}{i - |S(\overline{\gamma})|} = 0,
			\end{equation}
			where in the last step we use \cref{lem:altsum} (which we can apply because of \eqref{eq:assume}).
			
			\item[Case 2: $w(\overline{\gamma}) > 1$.] Observe that we must have $w(\overline{\beta}) = w(\overline{\gamma}) - 1$ (as the only entry of $\overline{\alpha}$ equal to 1 is the first entry, and the first entry of $\overline{\beta}$ is $\star$). As $w(\overline{\gamma}) - 1 > 0$ in the current case, such a $\overline{\beta}$ is an element of $\calB$ if and only if it has at most $m + 2 - w(\overline{\gamma})$ free variables, and the set of free variables in $\overline{\beta}$ must be $\brc{1}\cup S(\overline{\gamma}) \cup V$ where $V$ is any subset of $[d]\backslash(\brc{1}\cup S(\overline{\alpha}))$. Thus in this second case, the contribution of all $\overline{\beta}$ with $\overline{\beta}_1 = \star$ to the coefficient of $h_{\overline{\gamma}}$ in \eqref{eq:main} is
			\begin{equation}
				\sum^{m+2-w(\overline{\gamma})}_{i = |S(\overline{\gamma})|+1} (-1)^{m-i}\cdot \lambda_{i+w(\overline{\gamma})-1}\cdot \binom{d-m-1}{i-|S(\overline{\gamma})|-1} = \sum^{m+1-w(\overline{\gamma})}_{j = |S(\overline{\gamma})|} (-1)^{m-j-1}\cdot \lambda_{j+w(\overline{\gamma})}\cdot \binom{d-m-1}{j-|S(\overline{\gamma})|}, \label{eq:contrib2}
			\end{equation}
			where here the index $i$ denotes the total number of free variables in $\overline{\beta}$, the factor of $\binom{d-m-1}{i - |S(\overline{\gamma})| - 1}$ is the number of ways to choose $V$ (note that $|V| = i - |S(\overline{\gamma})| - 1$), and in the second expression we made the change of variable $j = i - 1$. We conclude that in this case, the coefficient of $h_{\overline{\gamma}}$ in \eqref{eq:main} is given by the sum of \eqref{eq:contrib1} and \eqref{eq:contrib2}, which is 0.
		\end{description}
		
		Overall, we conclude that the entire RHS of \eqref{eq:main} vanishes for $\alpha \in \calA$, proving the third part of the lemma.
	\end{proof}
	
	The next lemma formally constructs $N_3$ and verifies that it has the required properties, is of acceptable size, and that it takes on nonzero values on a significant part of its domain.
	
	\begin{proof}[Proof of \cref{lem:n3}]
		Part (a) follows directly from \cref{lem:main_truncation}(c). Part (b) follows by verifying that for any $t \in \mathbb{Z} \setminus \{0\}$, $\psi(s_1, \dots, s_d; t) = 0$ for any $s_1, \dots, s_d \in [0, 1]^d$; this means that $\psi^*$, which is a combination of partial restrictions of $\psi$, also vanishes for such $t$. First suppose that $t$ is a positive integer. Observe that $t \ge 1$ while $s_i - \frac{1}{d-1}\sum_{j\neq i} s_j \in [-1,1]$, so each ReLU in the definition of $\psi$ is activated and we get
		\begin{equation}
			\psi(s_1,\ldots,s_d;t) = \sum^d_{i=1} \left[t - \left(s_i - \frac{1}{d-1}\sum_{j\neq i}s_j\right)\right] - dt = -\sum^d_{i=1}\left(s_i - \frac{1}{d-1}\sum_{j\neq i}s_j\right) = 0.
		\end{equation}
		Next suppose that $t$ is a negative integer. Then $t \le -1$ while $s_i - \frac{1}{d-1}\sum_{j\neq i} s_j \in [-1,1]$, so each ReLU in the definition of $h$ is inactive and we get $\psi(s_1,\ldots,s_d;t) = 0$.
		
		For part (c), observe that by the size bound in \cref{lem:main_truncation}(a) and the fact that $\psi$ contains $O(d)$ ReLUs, the size of $N_3$ may be bounded by \[ S \le O(d)\cdot(\binom{d}{m}(d-m)\cdot 3^m + 1) \le O(d)(\frac{d^{m+1}\cdot 3^m}{m!} + 1) \le d^{m+2} \le d^{2m} \] for $m$ larger than some absolute constant.
		
		It remains to prove part (d). For brevity, we will omit the parameter $t$ and just refer to $\psi(0,\ldots,0,s;t)$ and $\psi^*(0,\ldots,0,s;t)$ as $\psi(0,\ldots,0,s)$ and $\psi^*(0,\ldots,0,s)$. We first compute $\psi(0,\ldots,0,s)$: for $s\in[0,1]$,
		\begin{equation}
			\psi(0,\ldots,0,s) = \relu(-s) + (d - 1)\relu(\frac{1}{d-1}\cdot s) = s.
		\end{equation}
		Next, for any $\overline{\beta}\in\calB$, if $w(\overline{\beta}) = j$ for some $0\le j\le m+1$, then if $\overline{\beta}_d = \star$,
		\begin{align}
			\MoveEqLeft \psi_{\overline{\beta}}(0,\ldots,0,s) \\
			&= \psi(\underbrace{1,\ldots,1}_j,\underbrace{0,\cdots 0}_{d-j-1},s) \\
			&= j\cdot\relu\left(-1 + \frac{1}{d-1}(j-1+s)\right) + (d - j-1) \cdot \relu\left(\frac{1}{d-1}\cdot j + \frac{1}{d-1}\cdot s\right) \\ &\quad + \relu\left(-s + \frac{1}{d-1}\cdot j\right) \\
			&= \frac{d-j-1}{d-1}\cdot (j + s) + \relu\left(-s + \frac{1}{d-1}\cdot j\right)
			\intertext{Note that when $s\in[0,1/(d-1)]$, because $j \ge 1$ (as $\overline{\beta}\in\calB$) this simplifies to }
			&= \frac{(d-j-s)j}{d-1}.
		\end{align}
		On the other hand, if $\overline{\beta}_d \in\brc{0,1}$, then
		\begin{align}
			\psi_{\overline{\beta}}(0,\ldots,0,s) &= \psi(\underbrace{1,\ldots,1}_j,\underbrace{0,\cdots 0}_{d-j}) \\
			&= j\cdot \relu\left(-1 + \frac{1}{d-1}(j-1)\right) + (d-j)\cdot \relu\left(\frac{1}{d-1}\cdot j\right)  = \frac{(d-j)j}{d-1}.
		\end{align}
		As there are $\binom{d-1}{i-1}$ (resp. $\binom{d-1}{i}$) partial assignments in $\calB_{i,j}$ for which $\overline{\beta}_d = \star$ (resp. $\overline{\beta}_d \in\brc{0,1}$), we can thus explicitly compute $h^*(0,\ldots,0,s)$ for $s\in[0,1/(d-1)]$ to be
		\begin{equation}
			\psi(0,\ldots,0,s) -\sum^m_{i=0}\sum^{m+1-i}_{j=1}(-1)^{m-i}\binom{d-i-j-1}{m-i-j+1}\left(\binom{d-1}{i-1}\cdot \frac{(d-j-s)j}{d-1} + \binom{d-1}{i}\cdot \frac{(d-j)j}{d-1}\right).
		\end{equation}
		By Lemma~\ref{lem:doublesum}, the double sum is equal to sero, so $h^*(0,\ldots,0,s) = h(0,\ldots,0,s) = s$ for $s\in[0,1/(d-1)]$ as claimed.
	\end{proof}
	
	\section{Statistical Query Lower Bound}
	
	We prove a superpolynomial SQ lower bound (for general queries as opposed to only correlational or Lipschitz queries) for weakly learning two-hidden-layer ReLU networks under the standard Gaussian.
	
	\begin{theorem}\label{thm:depth2-sq-lb}
		Fix any $\alpha \in (0, 1)$. Any SQ learner capable of learning $\poly(d)$-sized two-hidden-layer ReLU networks under $\calN(0, \Id_d)$ up to squared loss $\epsilon$ (for some sufficiently small $\epsilon = 1/\poly(d)$) using bounded queries of tolerance $\tau \geq 2^{-(\log d)^{2 - \alpha}}$ must use at least $\Omega(2^{2^{(\log d)^{\alpha}}} \tau^2) = d^{\omega(1)} \tau^2$ such queries.
	\end{theorem}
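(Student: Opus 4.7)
The plan is to apply the compressed Daniely--Vardi lift (\cref{thm:continuous-lift}) to a Boolean concept class that (i) is compressible in the sense of \cref{def:compressible} with $L=1$ and (ii) admits unconditional exponential SQ lower bounds under $U_d$. The canonical choice is the class of parity functions $\{\chi_S\}_{S \subseteq [d]}$, and the crux of the argument is translating the PAC-style DV reduction into an SQ-to-SQ reduction with only polynomial overhead. The main obstacle is this SQ simulation: the DV lift injects fresh half-Gaussian randomness into each example, so we must show that a single Gaussian SQ query can be answered by averaging a small number of Boolean SQ answers.

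First I would verify that parities are compressible one-hidden-layer ReLU networks. For any $S \subseteq [d]$, write $\chi_S(x) = \sigma(h(x))$, where $h(x) = \sum_{i \in S} \tfrac{1 - x_i}{2}$ is an integer-valued linear (depth-$0$) function with range $T = \{0, 1, \ldots, |S|\} \subseteq \Z$ of size at most $d+1$, and $\sigma(t) = t \bmod 2$ is a $\poly(d)$-sized piecewise linear function on $T$, realizable as a one-hidden-layer ReLU network. Hence parities satisfy \cref{def:compressible} with $L=1$, and \cref{thm:continuous-lift} yields a class $\calC^{\liftsym}$ of two-hidden-layer $\poly(d)$-sized ReLU networks with the property that learning $\calC^{\liftsym}$ under $\calN_d$ to squared loss $1/64$ implies weak prediction of $\chi_S$ under $U_d$ to squared loss $< 1/16$.

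Next I would carry out the SQ simulation. Let $A$ be a putative SQ learner for $\calC^{\liftsym}$ using $q$ bounded queries $\phi_i : \R^d \times \R \to [-1,1]$ of tolerance $\tau$. Following the reduction in the proof of \cref{thm:continuous-lift}, substituting $z = g x$ for $g \sim |\calN_d|$ and $x \sim U_d$ gives the identity
\begin{equation}
\ex_{z \sim \calN_d}\!\brk{\phi_i(z, f^{\liftsym}(z))} \;=\; \ex_{g \sim |\calN_d|} \ex_{x \sim U_d}\!\brk{\psi^{(i)}_g(x, f(x))}, \qquad \psi^{(i)}_g(x,y) := \phi_i(g x,\, \wt{y}(y, g)),
\end{equation}
with $\wt{y}(y, g)$ as in \eqref{eq:y-tilde}. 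For each fixed $g$, $\psi^{(i)}_g : \cube{d} \times \{0,1\} \to [-1,1]$ is a valid bounded SQ query on the Boolean parity problem. To answer the $i\th$ Gaussian query within tolerance $\tau$, I draw $m = \Theta(\log(q)/\tau^2)$ independent half-Gaussians $g_1, \ldots, g_m$, request Boolean SQ answers $v_j$ for $\psi^{(i)}_{g_j}$ at tolerance $\tau/2$, and return $\tfrac{1}{m}\sum_j v_j$. Since $\ex_x[\psi^{(i)}_{g_j}(x,f(x))] \in [-1,1]$, Hoeffding combined with a union bound over all $q$ outer queries guarantees that with high probability every simulated answer is within $\tau$ of the correct expectation.

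This converts $A$ into a Boolean SQ weak learner for parity using $Q_B = q \cdot m = O(q \log q / \tau^2)$ queries of tolerance $\tau/2$. The classical SQ lower bound for parity forces $Q_B \geq \Omega(2^d \tau^2)$, whence $q \geq \wt{\Omega}(2^d \tau^4)$, matching the claim (the $\wt{\Omega}$ absorbs the $\log q = O(d)$ factor, which is all that matters unless $q \geq 2^{\Omega(d)}$ already, in which case the bound holds trivially). The one subtlety is ensuring that the $m$ half-Gaussians used within a single simulated query, as well as those used across different queries, are drawn independently, so that Hoeffding and the union bound apply with the correct independence structure; this is automatic since the simulator supplies its own internal randomness unavailable to $A$.
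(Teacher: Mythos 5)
Your proposal is correct and follows essentially the same route as the paper's first proof of \cref{thm:depth2-sq-lb}: parities realized as compressible one-hidden-layer networks (\cref{lem:parity-impl}), the SQ-to-SQ simulation by averaging $O(\log(q)/\tau^2)$ tolerance-$\tau/2$ Boolean queries over fresh half-Gaussians (\cref{thm:sq-continuous-lift}), and then the classical parity SQ bound (\cref{thm:parity-lb}). The paper additionally gives a second proof via the $\lwr$ functions, motivated by the observation that the lifted parity class is actually learnable by a non-SQ algorithm (Gaussian elimination after filtering), but your argument suffices for the stated theorem.
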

	
	
	For instance, taking $\alpha=\frac{1}{2}$ gives a slightly subexponential (but super-quasipolynomial) in $d$ query lower bound for queries of tolerance at least inverse quasipolynomial in $d$.
	
	This theorem is proven using the following key reduction, which adapts the compressed DV lift (\cref{thm:continuous-lift}) to the SQ setting.
	
	\begin{theorem}\label{thm:sq-continuous-lift}
		Let $q = \poly(d)$ be a modulus, and let $m = m(d) = \omega_d(1)$ be a size parameter. Let $\calC$ be a class of compressible $L$-hidden-layer $\poly(d)$-sized ReLU networks mapping $\Z_q^d$ to $[0, 1]$, and let $\calC^{\liftsym}$ be the lifted class of $(L+1)$-hidden-layer $d^{\Theta(m)}$-sized ReLU networks corresponding to $\calC$, mapping $\R^d$ to $\R$ (as in \cref{thm:continuous-lift}). Suppose there is an SQ learner $A$ capable of learning $\calC^{\liftsym}$ over $\calN(0, \Id_d)$ up to squared loss $d^{-\Theta(m)}$ using queries of tolerance $\tau$, where $\tau \geq d^{-\Theta(m^2)}$. Then there is an SQ learner $B$ that, using the same number of queries of tolerance $\tau/2$, produces a weak predictor $\wt{B}$ for $\calC$ over $\unif(\Z_q^d)$ with advantage $d^{-\Theta(m)}$ over guessing the constant $1/2$ (in expectation over both the data and the internal randomness of $\wt{B}$).
	\end{theorem}
	\begin{proof}
		Recall that $B$ is given SQ access to a distribution of pairs $(x, y)$ where $x \sim \unif(Z_q^d)$ and $y = f(x)$ for an unknown $f \in \calC$. $A$ can request estimates $\ex[\phi(x, y)] \pm \tau$ for arbitrary bounded queries $\phi : \Z_q^d \times [0,1] \to [-1, 1]$ and any desired $\tau$. We know that given $(x, y)$, the distribution of $(z, \wt{y})$, where $z = z(x)$ is defined by drawing each $z_j$ from $\calN(0, 1)$ conditioned on $z_j \in I_{x_j}$ and $\wt{y} = \wt{y}(y, z)$ is as in \cref{eq:y-tilde}), is consistent with some $f^{\liftsym} \in \calC^{\liftsym}$ except on a region of probability mass at most $d^{-9m^2}$ (recall \cref{eq:badcase-prob}). Suppose we could simulate SQ access to the distribution of $(z, f^{\liftsym}(z))$ using only SQ access to that of $(x, f(x))$. Then by the argument in \cref{thm:continuous-lift}, simulating $A$ on the $(z, f^{\liftsym}(z))$ distribution would give us a weak predictor $\wt{B}$ for the distribution of $(x, f(x))$, satisfying \[ \ex \big[\big(\wt{B}(x) - f(x)\big)^2\big] < \ex \big[ \big(\frac{1}{2} - f(x)\big)^2 \big] - d^{-\Theta(m)}. \]
		
		What we must describe is how $B$ can simulate $A$'s statistical queries. Say $A$ requests an estimate $\ex_z[\phi(z, f^{\liftsym}(z))] \pm \tau$ for some query $\phi : \R^d \times \R \to [-1, 1]$. Consider the query $\wt{\phi} : \Z_q^d \times [0,1] \to [-1, 1]$ given by $\wt{\phi}(x, y) = \ex_{z(x)}[\phi(z(x), \wt{y}(y, z(x)))]$. This function can be computed without any additional SQs, since the distribution of $(z, \wt{y}) = (z(x), \wt{y}(y, z(x)))$, given $(x, y)$, is fully determined and known to $B$. Observe that \begin{equation} \label{eq:sq-query1}
			\ex_{x, y} \wt{\phi}(x, y) = \ex_{x, z(x)}[\phi(z(x), \wt{y}(y, z(x)))] = \ex_{z,\wt{y}}[\phi(z, \wt{y})]. \end{equation}
		We must also account for the difference between $\ex_z[\phi(z, f^{\liftsym}(z))]$ and $\ex_{z, \wt{y}}[\phi(z, \wt{y})]$. But because the distributions only differ on a region of mass $d^{-9m^2}$ and $\phi$ is bounded, we have \begin{equation}\label{eq:sq-query2}
			\Big| \ex_z[\phi(z, f^{\liftsym}(z))] - \ex_{z, \wt{y}}[\phi(z, \wt{y})] \Big| \leq \Theta(d^{-9m^2}) \leq \frac{\tau}{2}
		\end{equation} since we assumed $\tau \geq d^{-\Theta(m^2)}$. Putting together \eqref{eq:sq-query1} and \eqref{eq:sq-query2}, we see that $B$ can simulate $A$'s query $\phi$ to within tolerance $\tau$ by querying $\wt{\phi}$ with tolerance $\tau/2$.
	\end{proof}
	
	Again, by a padding argument we can obtain a corollary similar to \cref{cor:padded-lift}, for which we omit the formal statement. We will use such an argument in the proof of \cref{thm:depth2-sq-lb}.

	\subsection{SQ lower bound via parities}
	
	We can obtain an SQ lower bound for two-hidden-layer ReLU networks by lifting the problem of learning parities under $U_d$, which is well-known to require exponentially many queries. More precisely, we show that an SQ learner for two-hidden-layer ReLU networks would yield an SQ algorithm for the problem of distinguishing an unknown parity from random labels.
	
	\begin{theorem}[\cite{kearns1998efficient,blum1994weakly}]\label{thm:parity-lb}
		Consider an SQ algorithm given SQ access either to the distribution of labeled pairs $(x, y)$ where $x \sim U_d$ and $y = \chi_S(x)$ for an unknown parity $\chi_S$ or to the randomly labeled distribution $U_d \times \unif\{0, 1\}$. Any algorithm capable of distinguishing between the two cases with probability $2/3$ using queries of tolerance $\tau$ requires at least $\Omega(2^d \tau^2)$ such queries.
	\end{theorem}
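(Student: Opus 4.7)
The plan is to follow the classical adversarial lower-bound argument for SQ parities: produce a single oracle strategy that is simultaneously valid against the random-label distribution and against all but a negligible fraction of the $2^d$ parity distributions, so that an algorithm making few queries cannot tell which world it is in for most hidden $S$.

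First I would reduce each query to a correlation with a parity. For any bounded query $\phi : \cube{d} \times \{0,1\} \to [-1,1]$, set $\psi(x) := (\phi(x,1) - \phi(x,0))/2 \in [-1,1]$ and write the $\pm 1$-valued parity as $\chi_S^\pm(x) = 2\chi_S(x) - 1$. A direct expansion gives
\[ \ex_{x \sim U_d}[\phi(x, \chi_S(x))] \;-\; \ex_{(x,y) \sim U_d \times \unif\{0,1\}}[\phi(x, y)] \;=\; \ex_{x \sim U_d}[\psi(x)\, \chi_S^\pm(x)]. \]
Since $\{\chi_S^\pm\}_{S \subseteq [d]}$ is an orthonormal basis of $L^2(U_d)$ and $\|\psi\|_2 \leq \|\psi\|_\infty \leq 1$, Parseval yields $\sum_S \ex_x[\psi(x)\chi_S^\pm(x)]^2 \leq 1$, so at most $1/\tau^2$ choices of $S$ can have correlation exceeding $\tau$ with $\psi$.

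Next I would construct the adversary. For each query $\phi_i$ posed by $A$, the adversary responds with the random-label expectation $\ex_{x,y}[\phi_i(x,y)]$; this is trivially a valid tolerance-$\tau$ response under the random-label oracle, and by the previous display it is also a valid response under the parity-$S$ oracle whenever $|\ex_x[\psi_i(x)\chi_S^\pm(x)]| \leq \tau$. Calling such $S$ ``bad for $\phi_i$,'' the Parseval bound gives at most $1/\tau^2$ bad $S$ per query; a union bound over all $q$ queries leaves at most $q/\tau^2$ parities bad for any query. For every other hidden $S$, the adversary's transcript is simultaneously a valid transcript under the parity-$S$ oracle and under the random-label oracle.

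Finally I would fix the algorithm's internal randomness via Yao's minimax principle and draw $S$ uniformly over $\{0,1\}^d$. On the ``good'' $S$ (of which there are at least $2^d - q/\tau^2$), $A$ observes identical answers in the parity-$S$ world and the random-label world, so its output distribution is identical across these worlds and contributes zero distinguishing advantage. For the \emph{average} distinguishing success probability to reach $2/3$, we therefore need $q/\tau^2 = \Omega(2^d)$, which is the claim. I do not expect any real obstacle beyond the bookkeeping of the two-sided distinguishing condition; the Parseval step is immediate once the query is decomposed as above, and the adversary's ``pretend the labels are random'' strategy takes care of everything else.
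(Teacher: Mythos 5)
Your proof is correct, but note that the paper does not actually prove \cref{thm:parity-lb} at all---it is imported as a black box from \cite{kearns1998efficient}---so there is no internal proof to match against. What you give is the standard Fourier-analytic adversary argument: reduce each bounded query to a correlation $\ex_x[\psi(x)\chi_S^{\pm}(x)]$ with $\psi(x)=(\phi(x,1)-\phi(x,0))/2$, use Parseval to bound by $1/\tau^2$ the number of parities whose correlation exceeds $\tau$, have the oracle always answer with the random-label expectation, and union bound over the (adaptively generated but, after fixing the internal randomness, determined) query sequence. All steps check out, including the handling of the two-sided distinguishing condition. It is worth observing that the paper's own machinery in \cref{thm:pairwise-ind-sq-lower} (the Chebyshev argument for $(1-\eta)$-pairwise independent families, used there for $\lwr$ as in \cref{lem:lwr-pairwise-ind}) would give an alternative derivation: parities are $(1-O(2^{-d}))$-pairwise independent with respect to $U_d$, since for distinct $x,x'$ (excluding the all-ones point) the pair $(\chi_S(x),\chi_S(x'))$ is uniform over $\{0,1\}^2$ under a random $S$. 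Your Parseval route is sharper and more elementary for this specific class, since it exploits exact orthogonality rather than an approximate second-moment bound; the pairwise-independence route buys generality (it applies to $\lwr$ and other families with no useful Fourier structure) at the cost of a variance/Chebyshev loss.
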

	
	\begin{lemma}\label{lem:parity-impl}
		For every $S \subseteq [d]$, the parity function $\chi_S: \{0,1\}^d \to \brc{0,1}$ can be implemented as a compressible one-hidden-layer ReLU network of $\poly(d)$ size.
	\end{lemma}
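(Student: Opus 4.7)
The plan is to exhibit the parity function directly in the form required by \cref{def:compressible} with $L=1$, i.e.\ as $\chi_S = \sigma \circ h$ where $h$ is a zero-hidden-layer (affine) map into a small integer range, and $\sigma$ handles the parity computation.

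First I would take $h : \cube{d} \to \Z$ to be the linear function $h(x) = \sum_{i \in S} x_i$. This is a zero-hidden-layer ``network'' in the sense of \cref{def:compressible} (when $L=1$), it satisfies $|h(x)| \le |S| \le d = \poly(d)$, and its range over $\cube{d}$ is the arithmetic progression $T = \{-|S|, -|S|+2, \dots, |S|-2, |S|\} \subseteq \Z$, so $|T| = |S|+1 \le d+1 = \poly(d)$. This verifies the first two bullets of the compressibility definition.

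Next I would define $\sigma : T \to \{0,1\}$ as follows. For any $x \in \cube{d}$, let $k = \#\{i \in S : x_i = -1\}$; then $h(x) = |S| - 2k$, so $k = (|S| - h(x))/2$ and $\chi_S(x) = \prod_{i \in S} x_i = (-1)^k$. Converting to a $\{0,1\}$-valued parity, we set
\begin{equation}
    \sigma(t) = \frac{1 - (-1)^{(|S|-t)/2}}{2} \in \{0,1\} \qquad \text{for each } t \in T,
\end{equation}
so that $\sigma(h(x)) = \chi_S(x)$ for every $x \in \cube{d}$. This supplies the third bullet of \cref{def:compressible}.

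Finally, by the remark immediately following \cref{def:compressible}, the function $\sigma$ on the polynomially bounded integer grid $T \subseteq [-d, d]$ can be extended to a piecewise-linear function on $\R$ of $\poly(d)$ many pieces and $\poly(d)$ maximum slope (just interpolate linearly between consecutive points of $T$, with value $0$ outside a bounded interval), and therefore is realizable as a one-hidden-layer ReLU network $\R \to \R$ of $\poly(d)$ size. Composing this with the affine map $h$ yields the desired $\poly(d)$-sized one-hidden-layer ReLU implementation of $\chi_S$, and it is compressible by construction. There is really no serious obstacle here, since every condition is immediate from the choice of $h$ and $\sigma$; the only thing to be mildly careful about is the bookkeeping that $T$ has polynomial size and that $\sigma$ extends to a low-complexity piecewise-linear function, both of which hold because $|T|, \max_{t \in T}|t| = O(d)$.
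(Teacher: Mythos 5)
Your proposal is correct and follows essentially the same route as the paper: both express $\chi_S$ as $\sigma\circ h$ with $h$ an affine map counting (up to an affine change of variables) the number of $-1$'s among the coordinates in $S$, and $\sigma$ a parity indicator on a polynomially bounded integer range, then invoke the remark after \cref{def:compressible}. Your choice $h(x)=\sum_{i\in S}x_i$ versus the paper's Hamming weight $\sum_{i\in S}\frac{1-x_i}{2}$ differs only by an affine reparametrization and changes nothing of substance.
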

	\begin{proof}
		Recall that $\chi_S(x)$ evaluates to 1 if the Hamming weight of the bits of $x$ in $S$ is odd, and 0 otherwise, so that $\chi_S(x) = \sigma(\sum_{j \in S} x_j)$. This satisfies the definition of a compressible one-hidden-layer network with the inner depth-0 network being $x \mapsto \sum_{j \in S} x_j$ and $\sigma(t) = \ind[t \text{ is odd}]$.
	\end{proof}
	
	We can now supply one proof of \cref{thm:depth2-sq-lb}.
	\begin{proof}[First proof of \cref{thm:depth2-sq-lb}]
		Let $m = m(d) = \log^c d$ for $c = \frac{1}{\alpha} - 1$, and let $d' = d^m = 2^{\log^{c+1} d}$, so that $d = 2^{\log^{1/(1+c)} d'}$. By \cref{lem:parity-impl}, the class $\calC$ of parities on $\{0,1\}^d$ can be implemented by compressible one-hidden-layer $\poly(d)$-sized ReLU networks, and so the lifted class $\calC^{\liftsym}$ can be implemented by two-hidden-layer $d^{\Theta(m)}$-sized ReLU networks over $\R^d$. A padding argument lets us embed these classes into dimension $d'$. By using the predictor from \cref{thm:sq-continuous-lift} (with $q = 2$), we could obtain an SQ algorithm capable of distinguishing parities from random labels using queries of tolerance $\tau/2$, assuming $\tau \geq d^{-\Theta(m^2)} = 2^{-\log^{2c+1}d} = 2^{-\log^{\frac{2c+1}{c+1}}d'}$. By \cref{thm:parity-lb}, the lower bound for learning parities is $\Omega(2^d \tau^2) = \Omega(2^{2^{\log^{1/(1+c)} d'}} \tau^2)$. Substituting $\alpha = \frac{1}{1+c}$ gives the result.
	\end{proof}
	
	But the SQ lower bound obtained this way via parities is somewhat unconvincing since there is a non-SQ algorithm capable of learning the lifted function class obtained from parities. Indeed, suppose we are given examples $(z, f^{\liftsym}(z))$ where $f$ is an unknown parity. We know that whenever $z$ lands in the ``good region'' $G$ from the proof of \cref{thm:continuous-lift} (which happens with non-negligible probability), we have $f^{\liftsym}(z) = f(\sgn(z))N_2(z)$ (recall \cref{eq:flift-goodregion}). This means we can simply filter out all $z \notin G$ and form a clean data set of labeled points $(\sgn(z), f(\sgn(z)))$. The unknown $f$ (and hence $f^{\liftsym}$) can now be learnt by simple Gaussian elimination. In order to give a more convincing lower bound, we now provide an alternative proof based on $\lwr$.
	
	\subsection{SQ lower bound via the LWR functions}
	
	Here we provide an alternative proof of \cref{thm:depth2-sq-lb} using the $\lwr$ functions. The hard function class obtained this way is not only \emph{unconditionally} hard for SQ algorithms, it is arguably hard for non-SQ algorithms as well, since $\lwr$ is believed to be cryptographically hard.
	
	We begin by stating an SQ lower bound for the $\lwr$ functions. This theorem is proven in \cref{app:lwr-sq} using a general formulation in terms of pairwise independent function families that may be of independent interest, communicated to us by Bogdanov \cite{bogdanov2021personal}. 
	
	\begin{theorem}\label{thm:lwr-sq-lb}
		Let $\calC_{\lwr}$ denote the $\lwr_{n, p, q}$ function class. Any SQ learner capable of learning $\calC_{\lwr}$ up to squared loss $1/16$ under $\unif(\Z_q^n)$ using queries of tolerance $\tau$ requires at least  $\Omega(q^{n-1} \tau^2)$ such queries.
	\end{theorem}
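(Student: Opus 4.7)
The plan is to reduce $\lwr$ learning to distinguishing the planted distribution $D_w$ on pairs $(x, f_w(x))$ from the null distribution $D_0$ on pairs $(x, y)$ where $y \sim \unif(\Z_p/p)$ is drawn independently of $x$, and then to obtain an SQ lower bound for the latter via near pairwise-independence of the $\lwr$ family. For the reduction, any SQ learner achieving squared loss $1/16$ under $D_w$ yields a distinguisher after one additional SQ query estimating the squared loss of its output hypothesis: under $D_w$ this loss is at most $1/16$, while under $D_0$ any hypothesis incurs squared loss at least the label variance $\approx 1/12$, bounded away from $1/16$ for large $p$.

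The core structural fact is that $\{f_w\}_{w\in\Z_q^n}$ is approximately pairwise independent over a random secret: for fixed $x, x' \in \Z_q^n$ that are linearly independent over $\Z_q$, the joint $(w\cdot x \bmod q,\, w\cdot x' \bmod q)$ with $w \sim \unif(\Z_q^n)$ is exactly uniform on $\Z_q^2$, so $(f_w(x), f_w(x'))$ is uniform on $(\Z_p/p)^2$; meanwhile, a uniformly random pair of inputs is linearly dependent with probability only $O(q^{1-n})$. Writing $\mu_w(\phi) = \ex_{x\sim D}[\phi(x, f_w(x))]$ and $\mu_0(\phi) = \ex_{D_0}[\phi]$, expanding the square yields
\begin{equation*}
\ex_w\brk*{(\mu_w(\phi) - \mu_0(\phi))^2} = \ex_{x, x'}\brk*{\ex_w\brk{\tilde\phi(x, f_w(x))\,\tilde\phi(x', f_w(x'))}},
\end{equation*}
with $\tilde\phi(x,y) := \phi(x,y) - \ex_{y'\sim\unif(\Z_p/p)}[\phi(x,y')]$. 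Pairwise independence makes every linearly independent $(x, x')$ contribute exactly zero, while the $O(q^{1-n})$ fraction of bad pairs contributes $O(q^{1-n})$ in total, yielding a uniform upper bound $\ex_w[(\mu_w-\mu_0)^2] \le O(q^{1-n})$ over every bounded $\phi : \Z_q^n \times [0,1] \to [-1,1]$.

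The final step is to turn this per-query variance bound into the stated query complexity lower bound. A naive Markov-plus-fooling-set argument only gives $\Omega(\tau^2 q^{n-1})$ queries; to obtain the sharper $\Omega(\tau^2 q^{2n-1})$ bound the plan is to invoke a general SQ lower bound for nearly pairwise independent function families, phrased in terms of a statistical dimension with average correlation (the approach attributed to Bogdanov). The main obstacle is precisely this final extraction of the correct exponent: it requires averaging over pairs of secrets rather than individual secrets, so that the full second-moment structure of the $\lwr$ family is exploited. Once such a general framework is set up, the variance estimate derived above is exactly the input required, and plugging it in immediately yields Theorem~\ref{thm:lwr-sq-lb}.
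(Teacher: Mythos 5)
Your first three steps track the paper closely: the learning-to-distinguishing reduction via one extra query estimating the squared loss, the observation that $(w\cdot x,\,w\cdot x')$ is uniform for linearly independent $x,x'$ (the paper's \cref{lem:lwr-pairwise-ind}), and the centered second-moment bound $\ex_w[(\mu_w(\phi)-\mu_0(\phi))^2]\le O(q^{1-n})$ (the paper's \cref{lem:pairwise-ind-var}, with centering at $\ex_f\phi[f]$ instead of $\mu_0$) are all essentially identical in content. The genuine gap is exactly where you flag it: the conversion of this per-query bound into the exponent $2n-1$. You concede that your own argument yields only $\Omega(q^{n-1}\tau^2)$, and then defer the remaining factor of $q^n$ to an unspecified ``statistical dimension with average correlation'' framework, asserting that the variance estimate can simply be plugged in. Nothing in the proposal substantiates that claim, so the stated bound is not proved. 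For comparison, the paper obtains the extra factor through \cref{thm:pairwise-ind-sq-lower}: the oracle answers every query with the null value $\overline\phi$, Chebyshev shows each query deviates by more than $\tau$ for at most a $2\eta/\tau^2$ fraction of secrets, and the paper concludes a lower bound of $|\calC|\tau^2/(2\eta)$, i.e.\ the factor $|\calC|=q^n$ enters through an elimination count over the class, not through averaging over pairs of secrets as you envision.

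Moreover, the deferred step cannot be filled in as you hope, so this is a step that would fail rather than a routine omission. A fraction-based elimination count caps out at $\tau^2/(2\eta)=\Omega(q^{n-1}\tau^2)$, and no framework can push this to $\Omega(q^{2n-1}\tau^2)$ at constant tolerance: the SQ learner that, for each of the $q^n$ candidate secrets $w'$, queries $\ex[(y-f_{w'}(x))^2]$ with tolerance $1/64$ and outputs any candidate whose answer is at most $1/64$ achieves squared loss well below $1/16$ using $q^n$ queries, which is far smaller than $q^{2n-1}\tau^2$ once $q^{n-1}\tau^2\gg 1$. (A Feldman-style average-correlation dimension, the route you gesture at, gives at best something on the order of $q^n\tau^2/p$ here---still polynomially short of $q^{2n-1}\tau^2$.) So your instinct that the elementary argument stops at $\Omega(q^{n-1}\tau^2)$ is sound, and the discrepancy you noticed is inherited from the target statement itself: the jump in \cref{thm:pairwise-ind-sq-lower} from ``each query rules out a $2\eta/\tau^2$ fraction of $\calC$'' to ``$|\calC|\tau^2/(2\eta)$ queries are needed'' conflates a fraction with a count, and deserves scrutiny rather than outsourcing. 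In your write-up, prove the $\Omega(q^{n-1}\tau^2)$ bound you actually have (which still suffices, after the lift, for an exponential lower bound in $d$), or explicitly identify and justify---or challenge---that counting step.
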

	
	The following lemma shows that the $\lwr$ functions may be realized as compressible one-hidden-layer ReLU networks.
	
	\begin{lemma}\label{lem:lwr-impl}
		For every $w \in \Z_q^n$, the $\lwr$ function $f_w : \Z_q^n \to \Z_p / p$ can be implemented as a compressible one-hidden-layer ReLU network of size $O(q^2 n)$.
	\end{lemma}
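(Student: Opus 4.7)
The approach is to directly exhibit $f_w$ in the compressible form $\sigma \circ h$ required by \cref{def:compressible}, with $h$ a depth-$0$ (affine) function of $x$ and $\sigma$ a map on a polynomially-sized integer range. In particular, the content of the lemma is the observation that the modular reduction $t \mapsto (t \bmod q)$ together with the rounding step can be absorbed into the univariate piecewise-linear $\sigma$, since $h$ need only range over polynomially many integers.

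Concretely, I would fix a natural binary encoding by setting $d = n\lceil \log_2 q \rceil$ and splitting $x \in \{0,1\}^d$ into $n$ blocks of $\lceil \log_2 q \rceil$ bits, where the $i$-th block encodes the integer $y_i(x) = \sum_{j=0}^{\lceil\log_2 q\rceil - 1} 2^j\, x_{i,j}$. Define
\[
    h(x) \;=\; \sum_{i=1}^n w_i\, y_i(x) \;=\; \sum_{i, j} w_i\, 2^j\, x_{i,j},
\]
which is manifestly a linear function of $x$ with integer coefficients, i.e.\ a depth-$0$ network. Since each $w_i \leq q-1$ and each $y_i(x) \leq 2q - 1$, the range $T := h(\{0,1\}^d)$ sits inside $\{0, 1, \dots, O(nq^2)\} \subseteq \Z$. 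Hence $|T| \leq O(nq^2) = \poly(d)$ and $|h(x)| \leq \poly(d)$ uniformly in $x$, verifying the two quantitative requirements in \cref{def:compressible}.

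Next, define $\sigma : T \to [0,1]$ by
\[
    \sigma(t) \;=\; \frac{1}{p}\round{\frac{p}{q}\,(t \bmod q)}.
\]
By construction, for any $x \in \{0,1\}^d$ whose block decomposition gives $y(x) \in \Z_q^n$ we have $f_w(x) = \sigma(h(x))$ exactly; on the remaining inputs (which arise only if $q$ is not a power of $2$) we simply define $f_w$ via the same formula, which is harmless since the lemma only requires a representation of $f_w$ on $\{0,1\}^d$ and the LWR distribution is supported on the valid encodings. The remark following \cref{def:compressible} then implements $\sigma$ as the restriction to $T$ of a piecewise linear function on $\R$ with $O(|T|) = O(nq^2)$ breakpoints, realizable as a one-hidden-layer ReLU network on $\R$ of size $O(nq^2)$. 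Composing this with the linear $h$ (and, if one wishes to pass to the $\cube{d}$ domain used in \cref{def:compressible}, absorbing the affine change of variables $x \mapsto (1 - x)/2$ into $h$'s weights) yields a compressible one-hidden-layer ReLU network of size $O(q^2 n)$ computing $f_w$.

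There is no substantive obstacle; the only bookkeeping is to confirm that $|T| = O(nq^2)$ and that the encoding issue for non-power-of-$2$ moduli is immaterial, both of which are immediate. The key conceptual point is that the ``non-linear'' operations $\bmod\, q$ and the rounding, which look expensive to implement in a shallow network over a multivariate domain, become trivially realizable as a univariate piecewise-linear function of a \emph{single} linear form $h(x)$ whose integer range is already polynomially bounded.
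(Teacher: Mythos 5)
Your proposal is correct and follows essentially the same route as the paper: both write $f_w$ on the binary domain as $\sigma(h(\wt{x}))$ with $h$ the depth-$0$ linear form $\wt{w}\cdot\wt{x}$ obtained by expanding $w\cdot x$ over the bit representation, note its integer range has size $O(q^2 n) = \poly(d)$, and take $\sigma(t) = \frac{1}{p}\round{t \bmod q}_p$, invoking the remark after \cref{def:compressible} to realize $\sigma$ as a $\poly(d)$-sized piecewise-linear (hence one-hidden-layer ReLU) map. Your extra bookkeeping about non-power-of-two moduli is harmless; the paper simply assumes $q$ is a power of $2$ when it applies this lemma.
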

	\begin{proof}
		By definition, we have $f_w(x) = \frac{1}{p} \round{(w \cdot x) \bmod q}_p$, which is a compressible one-hidden-layer ReLU network with the inner depth-0 network (i.e., affine function) being $w \mapsto w \cdot x$ and $\sigma(t) = \frac{1}{p}\round{t \bmod q}_p$. The size bound follows by observing that for any $x \in \Z_q^n$, the quantity $w \cdot x$ is an integer in $\{0, \dots, q^2 n\}$.
	\end{proof}
	
	We are ready for an alternative proof of \cref{thm:depth2-sq-lb}.
	
	\begin{proof}[Alternative proof of \cref{thm:depth2-sq-lb}]
		Let $n$ be the security parameter, and fix moduli $p, q \geq 1$ such that $p, q = \poly(n)$ and $p/q = \poly(n)$. Let $d = n$, so that the SQ lower bound from \cref{thm:lwr-sq-lb} is $\Omega(q^{n-1}) = d^{\Omega(d)} = 2^{\wt{\Omega}(d)}$. Let $m = m(d) = \log^c d$ for $c = \frac{1}{\alpha} - 1$, and let $d' = d^m = 2^{\log^{c+1} d}$, so that $d = 2^{\log^{1/(1+c)} d'}$. By \cref{lem:lwr-impl}, the $\lwr_{n,p,q}$ function class $\calC_{\lwr}$ is implementable by one-hidden-layer ReLU networks over $\Z_q^d$ of size $\poly(n) = \poly(d)$. The result now follows by \cref{thm:sq-continuous-lift} and the same padding argument as in the proof based on parities.
	\end{proof}

	\section{Cryptographic Hardness Based on LWR}
	
	In this section we show hardness of learning two-hidden-layer  ReLU networks over Gaussian inputs based on $\lwr$. This is a direct application of the compressed DV lift (\cref{thm:continuous-lift}) to the $\lwr$ problem, which is by definition a hard learning problem over $\unif(\Z_q^n)$, or equivalently $\unif(\Z_q^d)$ with $d = n$.
	
	\begin{theorem}\label{thm:depth2-crypto-hardness}
		Let $n$ be the security parameter, and fix moduli $p, q \geq 1$ such that $p, q = \poly(n)$ and $p/q = \poly(n)$. Let $d = n$. Let $c > 0$, $m = m(d) = \log^c d$ and $d' = d^m$. Suppose there exists a $\poly(d')$-time algorithm capable of learning $\poly(d')$-sized depth-2 ReLU networks under $\calN(0, \Id_{d'})$ up to squared loss $1/\poly(d')$. Then there exists a $\poly(d') = 2^{\Theta(\log^{1+c} n)}$ time algorithm for $\lwr_{n, p, q}$.
	\end{theorem}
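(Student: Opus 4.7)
The plan is to chain \cref{lem:lwr-impl}, \cref{thm:continuous-lift}, and a standard ``weak predictor implies distinguisher'' argument, in direct analogy with the alternative SQ proof given above but now for arbitrary (not necessarily SQ) learners.

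First I would invoke \cref{lem:lwr-impl} to realize the class $\calC_{\lwr} = \{f_w : w \in \Z_q^n\}$ as compressible one-hidden-layer  ReLU networks of size $\poly(n) = \poly(d)$ on inputs $\wt{x} \in \{0,1\}^d$. Taking $q$ to be a power of two within the allowed $\poly(n)$ range (consistent with the stated hypotheses) ensures that the binary representation carries the uniform distribution on $\Z_q^n$ onto the uniform distribution on $\{0,1\}^d$, which is what the DV lift requires as the input distribution.

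Next I would apply \cref{thm:continuous-lift} with $L = 1$ to this compressible class. The lifted class $\calC_{\lwr}^{\liftsym}$ consists of depth-$(L+1) = 2$ ReLU networks of $\poly(d)$ size over $\calN(0, \Id_d)$. Invoking the hypothesized efficient learner for such networks at squared loss $1/64$, the theorem delivers an efficient weak predictor $\wt{B}$ for $\calC_{\lwr}$ satisfying $\ex[(\wt{B}(x) - f_w(x))^2] < 1/16$ over $x \sim \unif \{0,1\}^d$.

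Finally I would convert $\wt{B}$ into an efficient distinguisher $D$ for $\lwr_{n,p,q}$. $D$ receives samples $(x_i, y_i)$ in which $y_i$ is either $f_w(x_i)$ or drawn from $\unif(\Z_p/p)$. On an initial batch $D$ simulates the DV transformation sample by sample (draw $g_i \sim |\calN(0, \Id_d)|$ and output $(g_i \wt{x}_i, \wt{y}(y_i, g_i))$ with $\wt{y}$ as in \cref{eq:y-tilde}), trains the Gaussian learner, and recovers $\wt{B}$. On a fresh batch $D$ empirically estimates $\ex[(\wt{B}(x) - y)^2]$ and thresholds at, say, $3/32$. In the LWR case the population loss is below $1/16$ by the weak-prediction guarantee. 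In the uniform case, $y$ is independent of $x$, so for any $\wt{B}$ the expected loss is at least $\var(\unif(\Z_p/p)) = (p^2 - 1)/(12 p^2)$, which exceeds $1/16$ by a fixed constant as soon as $p \geq 4$ (automatic since $p = \poly(n)$). A Hoeffding bound on the $[0,1]$-bounded squared errors then lets $D$ decide correctly with probability $\geq 2/3$ using only $\poly(n)$ fresh samples, contradicting the assumed hardness of $\lwr_{n,p,q}$.

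The main obstacle is conceptual rather than computational: one just needs a constant gap between the weak-prediction threshold $1/16$ and the random-label floor $\approx 1/12$, which the calculation above provides. The remaining checks---that the modified labels $\wt{y}(y, g)$ stay in $[0,1]$ (since $\wt{y} \in [0, y] \subseteq [0,1]$ whenever $y \in [0,1]$), that the per-sample lift is efficiently implementable, and that the empirical loss concentrates at sample size $\poly(n)$---are all routine.
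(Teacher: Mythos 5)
Your proposal follows the paper's proof essentially verbatim: realize $\calC_{\lwr}$ as compressible one-hidden-layer networks via \cref{lem:lwr-impl}, use compressibility (\cref{lem:gaussianize-compressible} inside \cref{thm:continuous-lift}) to get a lifted class of two-hidden-layer networks, feed the hypothesized Gaussian learner, and turn the resulting weak predictor into an $\lwr$ distinguisher; the paper states this last step only as ``readily yields a distinguisher,'' and your variance-of-random-labels argument is the intended way to fill it in. Two small quantitative slips in that last step: (i) your threshold $3/32 = 9/96$ lies \emph{above} the random-label floor $\var(\unif(\Z_p/p)) \le 1/12 = 8/96$, so as written the test would misclassify the random-label case; the threshold must sit strictly between $1/16$ and $(p^2-1)/(12p^2)$, e.g.\ $7/96$. (ii) The weak-prediction guarantee of \cref{thm:continuous-lift} holds only in expectation over the learner's internal randomness, so Hoeffding on a fresh batch does not by itself give correctness $\ge 2/3$ in one shot; a Markov argument yields constant distinguishing advantage (already enough to break $\lwr$), or one can rerun the training $O(1)$ times and accept if any run's empirical loss falls below the threshold, using that in the random-label world the conditional loss is at least the variance pointwise. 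Both are routine fixes and do not affect the validity of the approach.
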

	
	\begin{proof}[Proof of \cref{thm:depth2-crypto-hardness}]
		By \cref{lem:lwr-impl}, we know that the class $\calC_\lwr$ is implementable by compressible $\poly(d)$-sized one-hidden-layer ReLU networks over $\Z_q^d$, or, after padding, over $\Z_q^{d'}$. Let $\calC^{\liftsym}_\lwr$ denote the corresponding lifted class of $\poly(d')$-sized two-hidden-layer ReLU networks, padded to have domain $\R^{d'}$. Applying \cref{cor:padded-lift} to the assumed learner for $\calC^{\liftsym}_\lwr$, we obtain a $\poly(d')$-time weak predictor predictor for $\calC_\lwr$, which readily yields a corresponding distinguisher for the $\lwr_{n, p, q}$ problem. Using the facts that $d' = d^m = 2^{\log^{1+c} d}$ and $d = n$, we may translate $\poly(d')$ into $2^{\Theta(\log^{1+c} n)}$, yielding the result.
	\end{proof}
	
	\begin{remark}\label{rem:crypto-quasipoly}
		Note that the choice of $m = m(d) = \log^c d$ in \cref{thm:depth2-crypto-hardness} is purely for simplicity. By picking $m(d) = \omega_d(1)$ to be a suitably slow-glowing function of $d$, such as $\log^* d$, we can obtain a running time for the final $\lwr$ algorithm that is as mildly superpolynomial as we like.
	\end{remark}
	
	In addition, as an immediate corollary of \cref{lem:lwr-impl}, we also obtain a hardness result for one-hidden-layer networks under $\unif\{0,1\}^d$, improving on the hardness result of \cite{daniely2021local} (see Theorem 3.4 therein) for \emph{two}-hidden-layer networks under $\unif\{0,1\}^d$. For this application, we let $d = n \log q = \wt{O}(n)$, so that we may identify the domain $\Z_q^n$ with $\{0,1\}^{d}$ via the binary representation. This also identifies $\unif(\Z_q^n)$ with $\unif\{0,1\}^{d}$.
	\begin{corollary}\label{thm:uniform}
		Let $n, p, q$ be such that $p, q = \poly(n)$ and $p/q = \poly(n)$, and let $d = n \log q = \wt{O}(n)$. Suppose there exists an efficient algorithm for learning $\poly(d)$-sized one-hidden-layer ReLU networks under $U_d$ up to squared loss $1/4$. Then there exists an efficient algorithm for $\lwr_{n, p, q}$.
	\end{corollary}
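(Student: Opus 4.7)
My plan is to observe that this corollary follows directly from \cref{lem:lwr-impl}, bypassing the continuous lift entirely, since the target learning problem is already posed over the Boolean hypercube rather than over Gaussian space.

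First I would assume without loss of generality that $q$ is a power of $2$ and identify $\Z_q^n$ with $\{0,1\}^d$ for $d = n \log q$. Under this identification, the uniform distribution on $\Z_q^n$ matches $U_d$ (after the routine affine translation between $\{0,1\}^d$ and $\cube{d}$). By \cref{lem:lwr-impl}, every $f_w \in \calC_\lwr$ is realizable as a compressible one-hidden-layer ReLU network of $\poly(d)$ size, so $\calC_\lwr$ sits as a subclass of the hypothesis family that the assumed learner handles.

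Given the assumed efficient learner $A$ for $\poly(d)$-sized one-hidden-layer ReLU networks under $U_d$ with squared-loss guarantee $1/4$, I would build an efficient distinguisher for $\lwr_{n,p,q}$ as follows. Draw samples from the LWR oracle, feed them to $A$, and obtain a predictor $h : \{0,1\}^d \to [0,1]$ (WLOG clamped to $[0,1]$). On fresh samples from the oracle, estimate a discriminating statistic and threshold it. In the realizable case, $A$'s guarantee implies $\ex[(h(x) - f_w(x))^2] \leq 1/4$, which via the identity $\ex[h\,f_w] = \tfrac12\bigl(\ex[h^2] + \ex[f_w^2] - \ex[(h-f_w)^2]\bigr)$ forces the correlation $\ex[h\,f_w]$ to stay close to $\ex[f_w^2]$. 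In the random case, $\ex[h(x)\,y] = \ex[h(x)]\ex[y]$ since $y$ is independent of $x$. Because $\ex[f_w^2]$ is a known constant bounded away from zero (the marginal $f_w(x)$ for $x \sim U_d$ being close to uniform on $\Z_p/p$), this produces a constant-order gap between the realizable and random expectations of the correlation $\ex[h(x)\,y]$, which translates into a non-negligible distinguishing advantage via standard concentration on $\poly(d)$ fresh samples.

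The step requiring the most care is the choice of discriminating statistic. A naive loss-based threshold would not cleanly separate the two cases, since $\var(y) \approx 1/12 < 1/4$ means even the trivial predictor $h \equiv \ex[y]$ achieves squared loss below $1/4$ under random labels; the correlation statistic $\ex[h(x)\,y]$ sidesteps this by exploiting the full second-order information in $h$ rather than just its pointwise distance to $y$. Modulo this bookkeeping (and the translation between $\{0,1\}^d$ and $\cube{d}$), the proof is a direct parallel of \cref{thm:depth2-crypto-hardness} with no need to invoke \cref{lem:gaussianize-compressible}, and the improvement from two hidden layers in \cite{daniely2021local} to one hidden layer here is precisely the gain afforded by the compressibility of the $\lwr$ functions.
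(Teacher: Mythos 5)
Your overall route---using \cref{lem:lwr-impl} to place $\calC_{\lwr}$ inside the class of $\poly(d)$-sized one-hidden-layer ReLU networks over the hypercube and then converting the assumed learner directly into an $\lwr$ distinguisher, with no continuous lift---is exactly the paper's, which states the corollary as an immediate consequence of \cref{lem:lwr-impl}. You also correctly identify the one genuine subtlety that the paper glosses over: since the random labels are uniform on $\Z_p/p$ and hence have variance $\approx 1/12 < 1/4$, the naive test ``is the empirical squared loss at most $1/4$?'' does not by itself separate the two cases.

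However, your proposed repair via the correlation statistic does not close this gap. The guarantee $\ex[(h-f_w)^2]\le 1/4$ is too weak to force $\ex[h f_w]$ to be close to $\ex[f_w^2]$: the identity you cite only yields $\ex[h f_w]\ge \tfrac12\bigl(\ex[h^2]+\ex[f_w^2]\bigr)-\tfrac18$, and the constant hypothesis $h\equiv 1/2$ is a perfectly legal output on $\lwr$-labeled data (any $[0,1]$-valued target satisfies $\ex[(f_w-1/2)^2]\le 1/4$, and here it is $\approx 1/12$), yet it gives $\ex[h\,y]=\tfrac12\ex[y]\approx \tfrac{p-1}{4p}$ in \emph{both} the realizable and the random case, so your statistic has zero gap. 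In fact, one can exhibit a contract-satisfying learner (run any valid learner, but output the constant $1/2$ whenever that constant passes a holdout test at loss $1/4$) whose output on $\lwr$-distributed data is always $h\equiv 1/2$; against it, no distinguisher that feeds the raw samples and uses only the loss-$1/4$ guarantee can succeed. A correct argument must therefore either take the loss parameter below $\var(\unif(\Z_p/p))\approx 1/12$---as the paper effectively does elsewhere, e.g.\ the constant $1/16$ in \cref{thm:lwr-sq-lb}, after which the simple fresh-sample loss test you dismissed does work---or exploit that the class has polynomially bounded rather than $[0,1]$-bounded outputs: feed the learner the scaled examples $(x, C\,y)$ for a suitable constant $C$, note that $C f_w$ is still a $\poly(d)$-sized one-hidden-layer network, conclude $\ex[(h/C - f_w)^2]\le 1/(4C^2)\ll 1/12$, and then threshold the empirical squared loss on fresh samples.
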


	\section{Hardness of Learning using Label Queries}
	\label{sec:query}
	
	The main result of this section is to show hardness of learning constant-depth ReLU networks over Gaussians from label queries:
	
	\begin{theorem}\label{thm:model_hardness}
		Assume there exists a family of PRFs mapping $\{0,1\}^d$ to $\{0, 1\}$ implemented by $\poly(d)$-sized $L$-hidden-layer ReLU networks. Then there does not exist an efficient learner that, given query access to an unknown $\poly(d)$-sized $(L+2)$-hidden-layer ReLU network $f : \R^d \to \R$, is able to output a hypothesis $h : \R^d \to \R$ such that $\ex_{z \sim \calN(0, \Id_d)} [(h(z) - f(z))^2] \leq 1/16$.
	\end{theorem}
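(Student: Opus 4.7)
The plan is to combine the standard Valiant/Kharitonov-style PRF-to-hardness reduction with the lift machinery of Lemma~\ref{lem:gaussianize}. The core observation is that given oracle access to any $O : \cube{d} \to \{0,1\}$, one can simulate membership-query access to $F_O(z) := \relu(O(\sgn(z)) - N_2(z))$ on all of $\R^d$ using a single oracle query per input, since the gadget $N_2$ from Lemma~\ref{lem:n2} is fixed and publicly computable. When $O = f_k$ is drawn from the assumed PRF family, Lemma~\ref{lem:gaussianize} guarantees that $F_{f_k} = f_k^{\liftsym}$ is realizable as a $\poly(d)$-sized $(L+2)$-hidden-layer ReLU network, which is exactly the class the hypothetical learner is promised to handle.

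Assuming an efficient learner $A$ as in the statement, I would construct a PRF distinguisher $D$ as follows. Given oracle access to $O$ (either $f_k$ for random $k$ or a truly random function $R$), $D$ runs $A$ and answers each membership query $z \in \R^d$ by invoking $O$ on $\sgn(z)$ and returning $\relu(O(\sgn(z)) - N_2(z))$. When $A$ outputs hypothesis $h : \R^d \to \R$, $D$ samples a fresh $x^* \sim \unif\cube{d}$ and $g \sim |\calN(0, \Id_d)|$, queries $y^* = O(x^*)$, and outputs \emph{PRF} if $\ind[h(gx^*) \geq 1/2] = y^*$ and \emph{random} otherwise.

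For the analysis, in the PRF case $F_{f_k} = f_k^{\liftsym}$ lies in the promised class, so $A$ produces $h$ satisfying $\ex_z[(h(z) - f_k^{\liftsym}(z))^2] \leq 1/16$. Following the averaging argument at the end of the proof of Theorem~\ref{thm:continuous-lift}, on the good set $S = \{z : \forall j,\ |z_j| \geq 2/d^2\}$ one has $f_k^{\liftsym}(z) = f_k(\sgn(z)) \in \{0,1\}$ and the conditional squared loss is at most $1/32$, so by Markov's inequality the rounded prediction agrees with $f_k(x^*)$ with probability at least $3/4 - O(1/d)$. In the random-oracle case, since $A$ makes only $\poly(d)$ membership queries (each inducing a single Boolean query pattern), the fresh point $x^*$ collides with a previously queried sign pattern with probability at most $\poly(d)/2^d$; conditional on no collision, $R(x^*)$ is independent of $h$ and uniform, so $D$ outputs \emph{PRF} with probability $1/2 + \negl(d)$. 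The resulting constant distinguishing advantage contradicts the PRF assumption.

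The main obstacle is really just bookkeeping: one must verify that the simulator faithfully computes the lifted function for \emph{arbitrary} queried points $z \in \R^d$ (including degenerate coordinates at $0$, resolved by fixing any convention for $\sgn(0)$), that the learner's squared-loss guarantee transfers to a constant-advantage hard-core prediction on the Boolean side, and that in the random case the test bit truly remains uniform given the simulation's transcript. All three are immediate from the closed-form expression for $F_O$, the Markov/Gaussian-anticoncentration argument already used in Theorem~\ref{thm:continuous-lift}, and a union bound over the polynomially-many Boolean query patterns. No structural work beyond Lemma~\ref{lem:gaussianize} is required.
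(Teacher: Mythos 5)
Your proposal is correct and takes essentially the same route as the paper: you simulate each of the learner's membership queries to $f^{\liftsym}$ with a single Boolean oracle call at $\sgn(z)$ plus the publicly computable gadget $N_2$, use Lemma~\ref{lem:gaussianize} for realizability as a $\poly(d)$-sized $(L+2)$-hidden-layer network, and transfer the Gaussian squared-loss guarantee to Boolean prediction via the averaging argument of Theorem~\ref{thm:continuous-lift}; the only difference is that you inline the standard Valiant-style PRF distinguisher (with the random-oracle collision bound over the polynomially many queried sign patterns), which the paper invokes as a black box via Lemma~\ref{lem:prf-learning-hardness}. (One small arithmetic note: with the stated Gaussian loss of $1/16$, the conditional loss on the good set $S$ is about $1/16$ rather than $1/32$, but Markov still gives agreement probability $3/4 - O(1/d)$, exactly as you conclude.)
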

	
	We first recall the classical connection between pseudorandom functions and learning from label queries (also known as membership queries in the Boolean setting), due to Valiant~\cite{valiant1984theory} (see e.g.\ \cite[Proposition 12]{bogdanov2017pseudorandom} for a modern exposition).
	
	\begin{lemma}\label{lem:prf-learning-hardness}
		Let $\calC = \{f_s\}$ be a family of PRFs from $\{0,1\}^d$ to $\{0,1\}$ indexed by the key $s$. Then there cannot exist an efficient learner $L$ that, given query access to an unknown $f_s \in \calC$, satisfies \[ \pr_{x, s}[L(x) = f_s(x)] \geq \frac{1}{2} + \frac{1}{\poly(d)}, \] where the probability is taken over the random key $s$, the internal randomness of $A$, and a random test point $x \sim \unif\{0,1\}^d$.
	\end{lemma}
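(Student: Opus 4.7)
The plan is to apply the now-standard Valiant-style reduction: assuming toward contradiction that such an efficient learner $L$ exists, I would construct a polynomial-time distinguisher $D$ that breaks the pseudorandomness of $\calC$. Concretely, $D$ is given oracle access to a function $g : \cube{d} \to \{0,1\}$ drawn either from $\calC$ (i.e.\ $g = f_s$ for a uniformly random key $s$) or uniformly at random from the set of all functions $\cube{d} \to \{0,1\}$. The distinguisher simulates $L$ internally, forwarding each of $L$'s membership queries to its own oracle $g$ and returning the answer. Once $L$ outputs its hypothesis, $D$ draws a fresh $x \sim \unif\cube{d}$, queries $g(x)$, and outputs $1$ iff $L(x) = g(x)$.

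Then I would carry out the two-case analysis, which is routine. If $g = f_s$ for a random key, the process executed by $D$ is exactly the experiment in the statement, so by assumption $\pr[D^{g} \text{ outputs } 1] \geq 1/2 + 1/\poly(d)$. If instead $g$ is a truly uniformly random function, let $E$ denote the event that the fresh test point $x$ does not coincide with any of the (at most polynomially many) queries made by $L$ during its simulation. Conditioned on $E$, the value $g(x)$ is a uniformly random bit independent of everything $L$ has observed, hence independent of $L$'s output hypothesis, so $\pr[D^{g} \text{ outputs } 1 \mid E] = 1/2$. Since $|\cube{d}| = 2^d$ and $L$ makes at most $\poly(d)$ queries, a union bound gives $\pr[\neg E] \leq \poly(d)/2^d = \negl(d)$, yielding $\pr[D^{g} \text{ outputs } 1] \leq 1/2 + \negl(d)$.

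Subtracting the two cases produces a distinguishing advantage of at least $1/\poly(d) - \negl(d) = 1/\poly(d)$, contradicting the defining PRF property of $\calC$. I do not anticipate any serious obstacle; the only mild subtlety is the conditioning on $E$, which formalizes the intuition that a uniformly random function reveals nothing about its value at points where it has not been queried, so $L$'s hypothesis at a fresh uniform test point is independent of the ground truth. The exponential size of $\cube{d}$ against the $\poly(d)$ query budget of the efficient learner ensures that the collision event $\neg E$ contributes only negligibly.
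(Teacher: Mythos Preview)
The paper does not supply its own proof of this lemma; it is stated as a classical fact due to Valiant~\cite{valiant1984theory}, with a pointer to \cite[Proposition 12]{bogdanov2017pseudorandom} for a modern exposition. Your proposal is correct and is exactly the standard Valiant-style argument that those references contain, so there is nothing to compare.
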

	
	There exist multiple candidate constructions of PRF families in the class $\mathsf{TC}^0$ of constant-depth Boolean circuits built with AND, OR, NOT and threshold (or equivalently majority) gates. Because the majority gate can be simulated by a linear combination of ReLUs similar to $N_1$ from \cref{lem:n1}, any $\mathsf{TC}^0_L$ (meaning depth-$L$) function $f : \{0,1\}^d \to \{0,1\}$ may be implemented as a $\poly(d)$-sized $L$-hidden-layer ReLU network (see e.g.\ \cite[Lemma A.3]{vardi2021size}\footnote{Note that what the authors term a depth-$(L+1)$ network is in fact an $L$-hidden-layer network in our terminology.}). Thus we may leverage the following candidate PRF constructions in $\mathsf{TC}^0$ for our hardness result: \begin{itemize}
		\item PRFs in $\mathsf{TC}^0_4$ based on the decisional Diffie-Hellman (DDH) assumption \cite{krause2001pseudorandom} (improving on \cite{naor1997number}), yielding hardness for depth-6 ReLU networks
		\item PRFs in $\mathsf{TC}^0$ based on Learning with Errors \cite{banerjee2012pseudorandom,banerjee2014new}, yielding hardness for depth-$O(1)$ ReLU networks
	\end{itemize} Note that depth 4 is the shallowest depth for which we have candidate PRF constructions based on widely-believed assumptions, and the question of whether there exist PRFs in $\mathsf{TC}^0_3$ is a longstanding open question in circuit complexity \cite{razborov1992small,hajnal1993threshold,razborov1997natural,krause2001pseudorandom}. Under less widely-believed assumptions, \cite{boneh2018exploring} have also proposed candidate PRFs in $\mathsf{ACC}^0_3$.
	
	We can now complete the proof of \cref{thm:model_hardness}. Since pseudorandom functions are not necessarily compressibile, we will simply use the original DV lift (\cref{thm:og-dv}).
	
	\begin{proof}[Proof of Theorem~\ref{thm:model_hardness}]
		Let $f_s : \{0,1\}^d \to \{0,1\}$ be an unknown $L$-hidden-layer ReLU network obtained from the PRF family by picking the key $s$ at random. Consider the lifted $(L+2)$-hidden-layer ReLU network $f_s^{\dv} : \R^d \to \R$ from \cref{eq:og-dv-lift}, given by $f_s^{\dv}(z) = \relu(f_s(N_1(z)) - N_2'(z))$, where $N_1$ and $N_2$ are from \cref{lem:n1,,lem:n2}, and $N_2'(z) = \sum_j N_2(z_j)$. Suppose there were an efficient learner $A$ capable of learning functions of the form $f_s^{\dv}$ using queries. By the DV lift (\cref{thm:og-dv}), $A$ yields an efficient predictor $B$ achieving small constant error w.r.t.\ the unknown $f_s$, contradicting \cref{lem:prf-learning-hardness}. We only need to verify that $A$'s query access to $f_s^{\dv}$ can be simulated by $B$. Indeed, suppose $A$ makes a query to $f_s^{\dv}$ at a point $z \in \R^d$. Then $B$ can make a query to $f_s$ at the point $\sgn(z)$ and return $\relu(f_s(\sgn(z)) - N_2'(z)) = f_s^{\dv}(z)$, as this was the key property satisfied by $f_s^{\dv}$. This completes the reduction and proves the theorem.
	\end{proof}
	
	\paragraph{Acknowledgments.} We would like to thank our anonymous reviewers for pointing out an issue in the first version of our proof. Part of this work was completed while the authors were visiting the Simons Institute for the Theory of Computing.
	
	\bibliography{refs}
	\bibliographystyle{alpha}
	
	\appendix
	
	\section{Barriers for constructing $N_3$}
	\label{app:barriers}
	
	We briefly discuss why one natural approach to constructing $N_3$ satisfying the ideal properties in \cref{eq:ideal} ultimately requires two hidden layers rather than one, unlike the construction we give in Sections~\ref{subsec:comp-lift} and \ref{subsec:truncation}.
	
	The most straightforward way to ensure that a function of $s_1,\ldots,s_d,t$ vanishes whenever there exists $j$ for which $s_j = 1$ would be to threshold on $\sum s_j$, e.g. by taking $\relu(1 - \sum_j s_j)$. While this function is a one-hidden-layer ReLU network, it is unclear how to modify it to satisfy the remaining desiderata in \eqref{eq:ideal} while preserving the fact that it has only one hidden layer. We note that \cite{daniely2021local} takes this approach of thresholding on $\sum_j s_j$ but uses two hidden layers.
	
	Here we informally argue that such an approach inherently requires an extra hidden layer. That is, we argue that no function $N:\R^2\to\R$ that takes as inputs $s\triangleq \sum_j s_j$ and $t$ and satisfies \eqref{eq:ideal} can be implemented as a one-hidden-layer network. Concretely, $N(s,t)$ must vanish whenever $s \ge 1$ or $t\in\mathbb{Z}\backslash \brc{0}$. Any function computed by a one-hidden-layer ReLU network of the form $(s, t) \mapsto \sum_{i}\relu(a_i s + b_i t - c_i)$, unless if it is affine linear, must in general be nowhere smooth (i.e.\ have a discontinuous gradient) along the entire line where a particular neuron of the network vanishes. In our example, these are the lines $\brc{(s,t): a_i s + b_i t = c_i }$. But this means that such a line cannot intersect the region $\brc{(s,t): s\ge 1}$, as otherwise it would be zero (hence smooth) on an infinite segment of the line. This can only happen if $b_i = 0$, i.e. none of the neurons of $N$ depend on $t$. Such a network clearly cannot satisfy \eqref{eq:ideal}.

	\section{Supporting lemmas for \cref{sec:lift}}
	\begin{lemma}\label{lem:altsum}
		For any $0\le S < m \le d$,
		\begin{equation}
			\sum^m_{i=S} (-1)^{m-i}\binom{d-i-2}{d-m-2}\binom{d-m-1}{i-S} = 0. \label{eq:combo}
		\end{equation}
	\end{lemma}
	
	\begin{proof}
		We will show that for any integers $j\ge \ell \ge 1$, 
		\begin{equation}
			\sum^\ell_{k = 0} (-1)^k \binom{j - k}{\ell - 1}\binom{\ell}{k} = 0. \label{eq:PIE}
		\end{equation} We would like to substitute $\ell = d - m - 1$ and $j = d - 2 - S$. Note that this is valid as we can assume without loss of generality that $d - m - 1 \ge 1$ (otherwise $\binom{d-m-1}{i-S} = 0$ on the right-hand side of \eqref{eq:combo}), and $j \ge \ell$ by our assumption that $S < m$. We conclude the identity
		\begin{equation}
			0 = \sum^{d - m  - 1}_{k = 0} (-1)^k \binom{d - 2 - S - k}{d - m - 2} \binom{d - m - 1}{k} = \sum^{d-m-1 +S}_{i=S} (-1)^{i-S} \binom{d - i - 2}{d - m - 2}\binom{d- m - 1}{i - S}, \label{eq:plugin}
		\end{equation}
		where the second step is by the change of variable $i = k +S$.
		If $d - m - 1 + S \ge m$, then note that all summands $m < i  \le d-m-1+S$ vanish because in that case $d-i-2 < d-m-2$ and so $\binom{d-i-2}{d-m-2} = 0$. If $d - m - 1 +S < m$, then note that all summands $d-m-1+S < i \le m$ vanish because in that case $d-m-1 < i-S$ and so $\binom{d-m-1}{i-S} = 0$. We conclude that \eqref{eq:plugin} is equal, up to a sign, to the left-hand side of \eqref{eq:combo}, so we'd be done.
		
		It remains to establish \eqref{eq:PIE}, which we do by following an argument due to \cite{stackexchange2019}. Observe that the left-hand side of \eqref{eq:PIE} is simply counting via inclusion-exclusion the number of subsets of $\brc{1,\ldots,j}$ of size $\ell - 1$ which contain $\brc{1,\ldots,\ell}$. Indeed, the $k = 0$ summand counts all subsets of size $\ell - 1$. The $k = 1$ summands subtract out the contribution, for every $1\le x\le \ell$, from the subsets of size $\ell - 1$ which contain $x$. The $k = 2$ summands add back the contribution, for every distinct $1\le x < y \le \ell$, from the subsets of size $\ell - 1$ which contain both of $x,y$, etc.
	\end{proof}
	
	\begin{lemma}\label{lem:doublesum}
		For any integers $m\ge 3$ and $a\in\brc{0,1,2}$,
		\begin{equation}
			\sum^m_{i=1}\sum^{m+1-i}_{j=1}(-1)^{m-i}\binom{d-i-j-1}{m-i-j+1}\binom{d-1}{i-1}\cdot j^a = \ind[a = 0]
		\end{equation}
		\begin{equation}
			\sum^m_{i=0}\sum^{m+1-i}_{j=1}(-1)^{m-i}\binom{d-i-j-1}{m-i-j+1}\binom{d-1}{i}\cdot j^a = 0
		\end{equation}
	\end{lemma}
	
	\begin{proof}
		By taking $\ell = i + j$, we can rewrite these sums as
		\begin{equation}
			S_{a,m} \triangleq \sum^{m+1}_{\ell = 2}\sum^{\ell-1}_{i=1} (-1)^{m-i}\binom{d-1-\ell}{m+1-\ell}\binom{d-1}{i-1}(\ell-i)^a \label{eq:rewrite_dubsum1}
		\end{equation}
		\begin{equation}
			T_{a,m} \triangleq \sum^{m+1}_{\ell=1}\sum^{\ell-1}_{i=0} (-1)^{m-i}\binom{d-1-\ell}{m+1-\ell}\binom{d-1}{i}(\ell-i)^a \label{eq:rewrite_dubsum2}
		\end{equation}
		We proceed by induction on $m$. The base cases follow from a direct calculation.
		By the change of variable $\ell' = \ell - 1$, we can rewrite $S_{a,m+1}$ as
		\begin{align}
			\MoveEqLeft -\sum^{m+1}_{\ell'=1}\sum^{\ell'}_{i=1}(-1)^{m-i}\binom{d-1-\ell'}{m+1-\ell'}\binom{d-1}{i-1} (\ell'+1-i)^a\\
			&= -\sum^{m+1}_{\ell'=1}\sum^{\ell'}_{i=1}(-1)^{m-i}\binom{d-1-\ell'}{m+1-\ell'}\binom{d-1}{i-1}(\ell'-i)^a \\  &\quad - \sum^{m+1}_{\ell'=1}\sum^{\ell'}_{i=1}(-1)^{m-i}\binom{d-1-\ell'}{m+1-\ell'}\binom{d-1}{i-1} \sum^{a-1}_{b=0} \binom{a}{b}(\ell'-i)^b \label{eq:intermed} \\ 
			\intertext{Note that the first term on the right-hand side differs from $S_{a,m}$ only in the summands given by $1\le i = \ell'\le m+1$, and those summands clearly vanish. We conclude that the first term on the right-hand side of \eqref{eq:intermed} is exactly $S_{a,m}$. For the second term on the right-hand side of \eqref{eq:intermed}, the part coming from any $0 < b \le a - 1$ is also zero, so we thus get}
			&= S_{a,m} - \sum^{m+1}_{\ell'=1}\sum^{\ell'}_{i=1}(-1)^{m-i}\binom{d-1-\ell'}{m+1-\ell'}\binom{d-1}{i-1} \\
			&= S_{a,m} - S_{0,m} -\sum^{m+1}_{\ell'=1}(-1)^{m-\ell'} \binom{d-1-\ell'}{m+1-\ell'}\binom{d-1}{\ell'-1} \\
			&= S_{a,m} -1 - \sum^{m+1}_{\ell'=1}(-1)^{m-\ell'}\binom{d-1-\ell'}{m+1-\ell'}\binom{d-1}{\ell'-1} \\
			&= S_{a,m} = \ind[a=0], \label{eq:altsum2}
		\end{align}
		where the penultimate step follows e.g.\ by applying the identity in \cite{stackexchange2017}. This completes the induction for $S_{a,m}$.
		
		For $T_{a,m}$, note that by the change of variable $i' = i + 1$,
		\begin{align}
			T_{a,m} &= -\sum^{m+1}_{\ell=1} \sum^\ell_{i'=1} (-1)^{m-i'} \binom{d-1-\ell}{m+1-\ell}\binom{d-1}{i'-1} (\ell - i'+ 1)^a \\
			&= -\sum^{m+1}_{\ell=2}\sum^{\ell-1}_{i'=1}(-1)^{m-i'}\binom{d-1-\ell}{m+1-\ell}\binom{d-1}{i'-1}(\ell-i'+1)^a - \sum^{m+1}_{\ell=1}(-1)^{m-\ell}\binom{d-1-\ell}{m+1-\ell}\binom{d-1}{\ell-1} \\
			&= -\sum^a_{b=0} \binom{a}{b}S_{b,m} + 1 = 0,
		\end{align}
		where in the second step we pulled out the summands corresponding to $i' = \ell$, in the third step we used \eqref{eq:altsum2}, and in the last step we used that for $m \ge 3$, $S_{b,m} = \ind[b \neq 0]$ for $0 \le b \le 2$.
	\end{proof}
	
	\section{SQ lower bound for the LWR functions}\label{app:lwr-sq}
	
	Here we prove an SQ lower bound for the $\lwr$ functions (\cref{thm:lwr-sq-lb}) using a general formulation in terms of pairwise independent function families. To our knowledge, this particular formulation has not appeared explicitly before in the literature, and was communicated to us by Bogdanov \cite{bogdanov2021personal}. A variant of this argument may be found in \cite[\S 7.7]{bogdanov2017pseudorandom}.
	
	\begin{definition}
		Let $\calC$ be a function family mapping $\calX$ to $\calY$, and let $D$ be a distribution on $\calX$. We call $\calC$ an $(1 - \eta)$-pairwise independent function family if with probability $1- \eta$ over the choice of $x, x'$ drawn independently from $D$, the distribution of $(f(x), f(x'))$ for $f$ drawn uniformly at random from $\calC$ is the product distribution $\unif(\calY) \otimes \unif(\calY)$.
	\end{definition}
	
	\begin{lemma}\label{lem:lwr-pairwise-ind}
		Fix security parameter $n$ and moduli $p, q$. The $\lwr_{n, p, q}$ function class $\calC_{\lwr} = \{f_w \mid w \in \Z_q^n\}$ is $(1 - \frac{2}{q^{n-1}})$-pairwise independent with respect to $\unif(\Z_q^n)$.
	\end{lemma}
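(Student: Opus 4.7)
The plan is to exploit the fact that $f_w(x) = \rho(w \cdot x \bmod q)$ factors through the inner product, where $\rho : \Z_q \to \Z_p/p$ denotes the rounding map $s \mapsto \frac{1}{p}\round{ps/q}$. Consequently, the joint law of $(f_w(x), f_w(x'))$ for $w \sim \unif(\Z_q^n)$ is the pushforward under $\rho \times \rho$ of the distribution of $(w \cdot x, w \cdot x')$ on $\Z_q \times \Z_q$. So it suffices to identify those pairs $(x, x')$ on which this latter distribution is the uniform one, and show that the ``bad'' complement has measure at most $2/q^{n-1}$ under $\unif(\Z_q^n)^{\otimes 2}$.

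I would call a pair $(x, x')$ \emph{good} when the $\Z_q$-linear map $\phi_{x,x'} : w \mapsto (w \cdot x, w \cdot x')$ from $\Z_q^n$ to $\Z_q^2$ is surjective; equivalently, when some $2 \times 2$ minor of the matrix $\binom{x}{x'}$ is a unit in $\Z_q$. For good pairs, $\phi_{x,x'}$ pushes $\unif(\Z_q^n)$ to $\unif(\Z_q^2)$, and then $\rho \times \rho$ pushes this onward to $\unif(\Z_p/p) \otimes \unif(\Z_p/p)$, using that $p$ divides $q$ in the setup of the paper (both moduli are powers of two), so that each element of $\Z_p/p$ has exactly $q/p$ preimages under $\rho$.

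The remaining step is to bound the probability of bad pairs. In the cleanest case when $q$ is prime, a pair fails to be good iff $x, x'$ are linearly dependent over $\F_q$: this happens either because $x = 0$, with probability $q^{-n}$, or because $x \neq 0$ while $x'$ lies in the $q$-element line $\operatorname{span}(x)$, with conditional probability $q^{-(n-1)}$. A union bound then yields the claimed $2/q^{n-1}$. For general $q$, the unit-minor criterion descends modulo each prime divisor $\ell \mid q$ by Nakayama, and I would combine with the Chinese remainder theorem to handle prime powers separately. The main obstacle I anticipate is keeping the count sharp in the composite case: a naive union bound over the prime divisors of $q$ costs a factor of $\omega(q)$, so to recover exactly $2/q^{n-1}$ one should count directly in $\Z_q$, for instance by stratifying bad pairs according to the annihilator ideal of $\mathrm{im}\,\phi_{x, x'}$ and observing that each such stratum is itself a small coset sublattice of $\Z_q^{2n}$.
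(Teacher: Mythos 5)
Your main line coincides with the paper's: declare $(x,x')$ good when $w \mapsto (w\cdot x, w\cdot x')$ maps $\unif(\Z_q^n)$ onto $\unif(\Z_q^2)$, push uniformity through the rounding map, and bound the bad pairs by $\pr[x=0] + \pr[x\neq 0]\pr[x' \in \{\text{multiples of } x\}] \leq \frac{1}{q^n} + \frac{q}{q^n} \leq \frac{2}{q^{n-1}}$. For prime $q$ your argument is complete and is exactly the paper's proof (the paper's ``linearly dependent'' complement is precisely ``$x=0$ or $x'$ a multiple of $x$,'' which characterizes failure of surjectivity only in the prime case), and your explicit remark that rounding a uniform element of $\Z_q$ gives a uniform element of $\Z_p/p$ because $p \mid q$ is a point the paper glosses over, to your credit.

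The composite-$q$ part of your plan, however, has a genuine gap, and it is not a matter of sharper counting. With your dichotomy (good $=$ surjectivity of $\phi_{x,x'}$), take $q=\ell^k$ a prime power with $k\geq 2$: surjectivity over $\Z_q$ is equivalent to surjectivity mod $\ell$, so the bad event is exactly that $x \bmod \ell$ and $x' \bmod \ell$ are dependent over $\F_\ell$, which has probability on the order of $\ell^{1-n}$ --- exponentially larger than $2/q^{n-1} = 2\ell^{-k(n-1)}$. So no stratification by annihilator ideals can recover the stated constant; the set of non-surjective pairs is simply too large, and the only possible savings would come from non-surjective pairs whose \emph{rounded} labels are nevertheless uniform (e.g.\ $x \equiv 0 \bmod \ell$ but $\gcd(x,q)\cdot p \mid q$), which is precisely the information your dichotomy discards --- and even that cannot restore $2/q^{n-1}$ in general: for $q=8$, $p=4$, any $x \equiv 0 \bmod 4$ makes $f_w(x)$ supported on at most two of the four values of $\Z_p/p$, and such $x$ alone occur with probability $4^{-n} \gg 2/8^{n-1}$. (A smaller point: for $q$ with several prime factors, surjectivity is equivalent to the $2\times 2$ minors generating the unit ideal, not to a single minor being a unit; the two coincide only for prime powers.) In short, your proof matches the paper's and is complete for prime $q$; for composite $q$ the bound $2/q^{n-1}$ is not reachable by your route (nor, verbatim, by the paper's), and the honest fix is either to state the lemma for prime $q$ or to settle for $(1-O(\ell^{1-n}))$-pairwise independence with $\ell$ the least prime factor of $q$, which still feeds into \cref{thm:pairwise-ind-sq-lower} to give an exponential SQ lower bound.
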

	\begin{proof}
		This follows from the simple observation that whenever $x, x' \in \Z_q^n$ are linearly independent, the pair $(w \cdot x \bmod q, w \cdot x' \bmod q)$ for $w \sim \unif\{Z_q^n\}$ is distributed as $\unif(\Z_q) \otimes \unif(\Z_q)$. For such $x, x'$, $(f_w(x), f_w(x')) = (\frac{1}{p} \round{w \cdot x \bmod q}_p), \frac{1}{p} \round{w \cdot x' \bmod q}_p)$ for $f_w \sim \unif(\calC_{\lwr})$ is distributed as $\unif(\Z_p/p) \otimes \unif(\Z_p/p)$. The probability that $x, x' \sim \unif(\Z_q^n)$ are linearly dependent is at most \[ \pr[x = 0] + \pr[x \neq 0] \pr[x' \text{ is a multiple of } x] \leq \frac{1}{q^n} + \frac{q}{q^n} \leq \frac{2}{q^{n-1}}. \]
	\end{proof}
	
	We can now prove full SQ lower bounds for any $(1-\eta)$-pairwise independent function family as follows.
	
	\begin{lemma}\label{lem:pairwise-ind-var}
		Let $\calC$ mapping $\calX$ to $\calY$ be a $(1-\eta)$-pairwise independent function family w.r.t.\ a distribution $D$ on $\calX$. Let $\phi : \calX \times \calY \to [-1, 1]$ be any bounded query function. Then \[ \var_{f \sim \unif(\calC)} \ex_{x \sim D}\ [\phi(x, f(x))] \leq 2\eta. \]
	\end{lemma}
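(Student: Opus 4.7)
The plan is to express the variance as an iterated covariance over two independent copies of $x$, then control it pair-by-pair using the pairwise independence hypothesis. Concretely, define the random variable $V(f) = \ex_{x \sim D}[\phi(x, f(x))]$ for $f \sim \unif(\calC)$. Introducing an independent copy $x' \sim D$, the ``square trick'' gives
\begin{equation}
V(f)^2 \;=\; \ex_{x, x' \sim D}\bigl[\phi(x, f(x))\,\phi(x', f(x'))\bigr],
\end{equation}
and similarly $(\ex_f V)^2 = \ex_{x, x'}[\mu(x)\mu(x')]$ where $\mu(x) := \ex_f[\phi(x, f(x))]$. Subtracting and swapping the order of expectations (which is justified since everything in sight is bounded) yields
\begin{equation}
\var_{f \sim \unif(\calC)}[V(f)] \;=\; \ex_{x, x' \sim D}\!\left[\,\cov_{f \sim \unif(\calC)}\!\bigl(\phi(x, f(x)),\ \phi(x', f(x'))\bigr)\right].
\end{equation}

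The key step is now to bound the inner covariance pointwise in $(x, x')$ by splitting on whether the pair is ``good'' (in the measure-$(1-\eta)$ set where $(f(x), f(x'))$ is distributed as $\unif(\calY)\otimes\unif(\calY)$) or ``bad''. On a good pair, $f(x)$ and $f(x')$ are independent, so $\phi(x, f(x))$ and $\phi(x', f(x'))$ are independent functions of $f$ and the covariance vanishes. On a bad pair, $\phi \in [-1, 1]$ gives the crude triangle-inequality bound
\begin{equation}
\bigl|\cov_f\bigl(\phi(x, f(x)),\phi(x', f(x'))\bigr)\bigr| \;\leq\; \ex_f[|\phi(x, f(x))\phi(x', f(x'))|] + |\mu(x)\mu(x')| \;\leq\; 2.
\end{equation}
Combining the two cases gives $\var_f V \leq (1-\eta)\cdot 0 + \eta\cdot 2 = 2\eta$.

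There is no real obstacle here; the only thing to be careful about is matching the definition of pairwise independence, which is a statement about the randomness of $f$ \emph{conditioned on} a random pair $(x, x')$. The covariance-splitting formulation above makes this match the hypothesis directly, with no need to understand the marginal distribution of $f(x)$ for a single $x$. The same template would work for any notion of near-independence of $(f(x), f(x'))$; all we need is that the ``bad'' set of pairs has small $D \otimes D$ measure.
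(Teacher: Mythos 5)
Your proposal is correct and is essentially the paper's own argument: the paper likewise expands the variance over two independent copies $x,x'$ (writing the subtracted term via an independent $f'$ rather than as a covariance, which is the same quantity $\mu(x)\mu(x')$), notes the inner term vanishes on the measure-$(1-\eta)$ set of good pairs, and bounds it by $2$ on the rest. The only cosmetic difference is that your covariance phrasing uses just independence of $(f(x),f(x'))$ rather than joint uniformity, a mild generalization the paper does not need.
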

	\begin{proof}
		Denote $\ex_{x \sim D} [\phi(x, f(x))]$ by $\phi[f]$. By some algebraic manipulations (with all subscripts denoting independent draws),
		\begin{align}
			\var_{f \sim \unif(\calC)} \left[\phi[f]\right] &= \ex_{f} \left[ \phi[f]^2 \right] - \big(\ex_{f} \left[\phi[f]\right] \big)^2 \\
			&= \ex_{f} \left[ \phi[f] \phi[f] \right] - \ex_{f} \left[\phi[f]\right] \ex_{f'} \left[\phi[f']\right] \\
			&= \ex_{f, f'} \left[ \ex_{x} [\phi(x, f(x))] \ex_{x'} [\phi(x', f(x'))] - \ex_{x} [\phi(x, f(x))] \ex_{x'} [\phi(x', f'(x'))] \right] \\
			&= \ex_{x, x'} \ex_{f, f'} \left[ \phi(x, f(x)) \phi(x', f(x')) - \phi(x, f(x)) \phi(x', f'(x')) \right].
		\end{align} By $(1-\eta)$-pairwise independence of $\calC$, the inner expectation vanishes with probability $1- \eta$ over the choice of $x, x' \sim D$, and is at most 2 otherwise. This gives the claim.
	\end{proof}
	
	\begin{theorem}\label{thm:pairwise-ind-sq-lower}
		Let $\calC$ mapping $\calX$ to $\calY$ be a $(1-\eta)$-pairwise independent function family w.r.t.\ a distribution $D$ on $\calX$. For any $f \in \calC$, let $D_f$ denote the distribution of $(x, f(x))$ where $x \sim D$. Let $D_{\unif(\calC)}$ denote the distribution of $(x, y)$ where $x \sim D$ and $y = f(x)$ for $f \sim \unif(\calC)$ (this can be thought of as essentially $D \otimes \unif(\calY)$). Any SQ learner able to distinguish the labeled distribution $D_{f^*}$ for an unknown $f^* \in \calC$ from the randomly labeled distribution $D_{\unif(\calC)}$ using bounded queries of tolerance $\tau$ requires at least $\frac{ \tau^2}{2\eta}$ such queries.
	\end{theorem}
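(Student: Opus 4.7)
The plan is to prove this SQ lower bound by the standard adversary-oracle argument, taking \cref{lem:pairwise-ind-var} as the key input. I would introduce an adversarial SQ oracle that responds to any query $\phi : \calX \times \calY \to [-1,1]$ with the single value $\mu_\phi := \ex_{f \sim \unif(\calC)}[\phi[f]]$, where $\phi[f] := \ex_{x \sim D}[\phi(x, f(x))]$. By definition $\mu_\phi$ is exactly the expectation of $\phi$ under $D_{\unif(\calC)}$, so this response is always a valid tolerance-$\tau$ answer in the randomly labeled world. It is also a valid answer under $D_{f^*}$ whenever $|\phi[f^*] - \mu_\phi| \leq \tau$, and if this inequality holds on every query the learner asks, the two worlds' transcripts become literally identical and no distinguisher can succeed.

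The key quantitative step is to bound, for each query $\phi$, the set of ``detecting'' secrets $B_\phi := \{f \in \calC : |\phi[f] - \mu_\phi| > \tau\}$. By \cref{lem:pairwise-ind-var}, $\var_{f \sim \unif(\calC)}[\phi[f]] \leq 2\eta$, so Chebyshev's inequality immediately gives $|B_\phi| \leq 2\eta |\calC| / \tau^2$. Because the adversary's responses depend only on $\phi$ and not on $f^*$, the learner's (possibly adaptive) query sequence $\phi_1, \ldots, \phi_q$ is in fact the same fixed sequence for all $f^*$ lying outside $\bigcup_i B_{\phi_i}$; a union bound then gives $\bigl|\bigcup_{i=1}^q B_{\phi_i}\bigr| \leq 2 q \eta |\calC| / \tau^2$.

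Since the theorem requires the learner to succeed for every unknown $f^* \in \calC$, the union $\bigcup_i B_{\phi_i}$ must cover all of $\calC$, forcing $2 q \eta |\calC| / \tau^2 \geq |\calC|$ and hence an exponential query lower bound. Pinning down the stated form $|\calC|\tau^2/(2\eta)$ exactly likely requires slightly more refined counting, e.g.\ directly amortizing the total squared-deviation budget $\sum_f (\phi[f] - \mu_\phi)^2 \leq 2\eta|\calC|$ over the $q$ queries (together yielding a joint budget of $2 q \eta |\calC|$) and bounding the number of $(f,i)$ pairs that can each contribute $\tau^2$ to the total, rather than doing a per-query Chebyshev followed by union bound.

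The main anticipated obstacle is not the argument itself, which is wholly standard once \cref{lem:pairwise-ind-var} is in hand, but the bookkeeping: cleanly handling adaptivity (which is defused because the oracle's answers carry no information about $f^*$) and pinning down the exact constant factor so that the final bound matches the stated $|\calC|\tau^2/(2\eta)$.
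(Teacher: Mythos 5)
Your approach is exactly the paper's: the oracle answers every query $\phi$ with $\overline{\phi} = \ex_{f\sim\unif(\calC)}[\phi[f]]$, \cref{lem:pairwise-ind-var} plus Chebyshev bounds the set of secrets on which this answer could be detected as invalid, and adaptivity is defused because these answers reveal nothing about $f^*$. Your bookkeeping is also correct, and it is the honest conclusion of this argument: each query eliminates at most a $2\eta/\tau^2$ \emph{fraction} of $\calC$, i.e.\ at most $2\eta|\calC|/\tau^2$ functions, so covering all of $\calC$ forces $q \geq \tau^2/(2\eta)$ --- not $|\calC|\tau^2/(2\eta)$. Do not chase the missing factor of $|\calC|$: the amortization you sketch (total squared deviation $\sum_f(\phi[f]-\overline{\phi})^2 \leq 2\eta|\calC|$ per query, each detecting pair costing at least $\tau^2$) yields exactly the same $q \geq \tau^2/(2\eta)$, and no argument can do better, because the bound as stated is false in general. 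Indeed, for any such family with $|\calY| \geq 2$ and small $\eta$, the $|\calC|$ queries $\phi_f(x,y)=\ind[y=f(x)]$ at constant tolerance already distinguish planted from random labels (under $D_{\unif(\calC)}$ every such query has expectation at most $\tfrac{1}{2}+\eta$, while the planted query returns $1$), yet the stated bound would be $|\calC|\tau^2/(2\eta) \gg |\calC|$ whenever $\tau^2 > 2\eta$; parities over $\cube{d}$, which are $(1-O(2^{-d}))$-pairwise independent, give a concrete counterexample.

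So the discrepancy you flagged lies in the statement, not in your proof. The paper's own argument is precisely your per-query Chebyshev step, and its final sentence converts ``each query rules out at most a $\frac{2\eta}{\tau^2}$ fraction of $\calC$'' into ``at least $\frac{|\calC|\tau^2}{2\eta}$ queries,'' i.e.\ it treats the fraction as an absolute count. Read correctly, both your argument and the paper's establish the lower bound $\tau^2/(2\eta)$; propagated to \cref{thm:lwr-sq-lb}, this gives $\Omega(q^{n-1}\tau^2)$ rather than $\Omega(q^{2n-1}\tau^2)$, which is still exponential in $n$ and suffices for the downstream use in the alternative proof of \cref{thm:depth2-sq-lb}.
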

	\begin{proof}
		Let $\phi : \calX \times \calY \to [-1, 1]$ be any query made by the learner. For any $f \in \calC$, let $\phi[f]$ denote $\ex_{x \sim D}[\phi(x, f(x))] = \ex_{(x, y) \sim D_f} [\phi(x, y)]$. Consider the adversarial strategy where the SQ oracle responds to this query with $\overline{\phi} = \ex_{f \sim \unif(\calC)} \phi[f] = \ex_{(x, y) \sim D_{\unif(\calC)}} [\phi(x, y)]$. By Chebyshev's inequality and \cref{lem:pairwise-ind-var}, \[ \pr_{f \sim \calC} \left[ \big|\phi[f] - \overline{\phi}\big| > \tau \right] \leq \frac{\var_{f \sim \unif(\calC)}\big[\phi[f]\big]}{\tau^2} \leq \frac{2\eta}{\tau^2}.  \] So each such query only allows the learner to rule out at most a $\frac{2\eta}{\tau^2}$ fraction of $\calC$. Thus to distinguish $D_{f^*}$ from $D_{\unif(\calC)}$, the learner requires at least $\frac{ \tau^2}{2\eta}$ queries.
	\end{proof}
	
	\cref{thm:lwr-sq-lb} now follows easily as a corollary.
	
	\begin{proof}[Proof of \cref{thm:lwr-sq-lb}]
		It is not hard to see that learning $\calC_{\lwr}$ up to squared loss $1/16$ certainly suffices to solve the distinguishing problem in \cref{thm:pairwise-ind-sq-lower}. The claim now follows by \cref{lem:lwr-pairwise-ind}.
	\end{proof}
	
	\begin{remark}
	    We remark that the argument in this section, specialized to the $q=2$ case, recovers the traditional SQ lower bound for parities (\cref{thm:parity-lb}) without appealing to any notion of statistical dimension.
	\end{remark}
	
\end{document}